%%%%%%%% ICML 2025 EXAMPLE LATEX SUBMISSION FILE %%%%%%%%%%%%%%%%%

\documentclass{article}

% Recommended, but optional, packages for figures and better typesetting:
\usepackage{microtype}
\usepackage{graphicx}
\usepackage{subfigure}
\usepackage{booktabs} % for professional tables

% hyperref makes hyperlinks in the resulting PDF.
% If your build breaks (sometimes temporarily if a hyperlink spans a page)
% please comment out the following usepackage line and replace
% \usepackage{icml2025} with \usepackage[nohyperref]{icml2025} above.
\usepackage{hyperref}

% Attempt to make hyperref and algorithmic work together better:

% Use the following line for the initial blind version submitted for review:
%\usepackage{icml2025}

% If accepted, instead use the following line for the camera-ready submission:
\usepackage[accepted]{icml2025}

% For theorems and such
\usepackage{amsmath}
\usepackage{amssymb}
\usepackage{mathtools}
\usepackage{amsthm}

% if you use cleveref..
\usepackage[capitalize,noabbrev]{cleveref}

%%%%%%%%%%%%%%%%%%%%%%%%%%%%%%%%
% THEOREMS
%%%%%%%%%%%%%%%%%%%%%%%%%%%%%%%%
\theoremstyle{plain}
\newtheorem{theorem}{Theorem}[section]
\newtheorem{proposition}[theorem]{Proposition}
\newtheorem{lemma}[theorem]{Lemma}

\theoremstyle{definition}

\theoremstyle{remark}

% Todonotes is useful during development; simply uncomment the next line
%    and comment out the line below the next line to turn off comments
%\usepackage[disable,textsize=tiny]{todonotes}
\usepackage[textsize=tiny]{todonotes}

%%%% added packages

\usepackage{algorithm}
\usepackage{algorithmic}
\usepackage{colortbl}
\usepackage{caption}
\DeclareMathOperator{\card}{card}
\DeclareMathOperator*{\argmin}{arg\,min}

\usepackage{rotating}
\usepackage{tablefootnote}
\usepackage[export]{adjustbox}
\usepackage{multirow}
% Make the "Part I" text invisible

\usepackage{cases}
\usepackage{booktabs}
\usepackage{amsthm}
\usepackage{amsmath}     % For advanced math typesetting
\usepackage{amssymb}     % For additional mathematical symbols
    \usepackage{graphicx}    % For including graphics
\usepackage{url}         % For proper URL formatting
\usepackage{amsthm}
\graphicspath{{images/}}
\usepackage[table]{xcolor}
\usepackage{hyperref}
%%%%

% The \icmltitle you define below is probably too long as a header.
% Therefore, a short form for the running title is supplied here:
\icmltitlerunning{Dual Feature Reduction for SGL}

\begin{document}

\twocolumn[
\icmltitle{Dual Feature Reduction for the Sparse-group Lasso and its Adaptive Variant}

% It is OKAY to include author information, even for blind
% submissions: the style file will automatically remove it for you
% unless you've provided the [accepted] option to the icml2025
% package.

% List of affiliations: The first argument should be a (short)
% identifier you will use later to specify author affiliations
% Academic affiliations should list Department, University, City, Region, Country
% Industry affiliations should list Company, City, Region, Country

% You can specify symbols, otherwise they are numbered in order.
% Ideally, you should not use this facility. Affiliations will be numbered
% in order of appearance and this is the preferred way.

\begin{icmlauthorlist}
\icmlauthor{Fabio Feser}{icl}
\icmlauthor{Marina Evangelou}{icl}
%\icmlauthor{Firstname5 Lastname5}{yyy}
%\icmlauthor{Firstname6 Lastname6}{sch,yyy,comp}
%\icmlauthor{Firstname7 Lastname7}{comp}
%\icmlauthor{}{sch}
%\icmlauthor{Firstname8 Lastname8}{sch}
%\icmlauthor{Firstname8 Lastname8}{yyy,comp}
%\icmlauthor{}{sch}
%\icmlauthor{}{sch}
\end{icmlauthorlist}

\icmlaffiliation{icl}{Department of Mathematics, Imperial College London, London, UK}
\icmlcorrespondingauthor{Fabio Feser}{ff120@ic.ac.uk}

% You may provide any keywords that you
% find helpful for describing your paper; these are used to populate
% the "keywords" metadata in the PDF but will not be shown in the document
\icmlkeywords{Machine Learning, ICML}

\vskip 0.3in
]

% this must go after the closing bracket ] following \twocolumn[ ...

% This command actually creates the footnote in the first column
% listing the affiliations and the copyright notice.
% The command takes one argument, which is text to display at the start of the footnote.
% The \icmlEqualContribution command is standard text for equal contribution.
% Remove it (just {}) if you do not need this facility.

\printAffiliationsAndNotice{}  % leave blank if no need to mention equal contribution
%\printAffiliationsAndNotice{\icmlEqualContribution} % otherwise use the standard text.

\begin{abstract}
The sparse-group lasso performs both variable and group selection, simultaneously using the strengths of the lasso and group lasso. It has found widespread use in genetics, a field that regularly involves the analysis of high-dimensional data, due to its sparse-group penalty, which allows it to utilize grouping information. However, the sparse-group lasso can be computationally expensive, due to the added shrinkage complexity, and its additional hyperparameter that needs tuning. This paper presents a novel feature reduction method, Dual Feature Reduction (DFR), that uses strong screening rules for the sparse-group lasso and the adaptive sparse-group lasso to reduce their input space before optimization, without affecting solution optimality. DFR applies two layers of screening through the application of dual norms and subdifferentials. Through synthetic and real data studies, it is shown that DFR drastically reduces the computational cost under many different scenarios.
\end{abstract}

\section{Introduction}
%High-dimensional datasets, where the number of features ($p$) is far greater than the number of observations ($n$) in a matrix $\mathbf{X}\in \mathbb{R}^{n\times p}$, are becoming increasingly common with the increased rate of data collection. Applying traditional regression approaches to high-dimensional data is troublesome; the ordinary least squares solution does not have a solution if $p>n$, as it requires evaluation of ($\mathbf{X}^\top \mathbf{X})^{-1}$, which is singular in this case \citep{Hastie2009TheEdition}. To alleviate this, models such as ridge \citep{Hoerl1970RidgeProblems}, the lasso \citep{Tibshirani1996RegressionLasso}, elastic-net \citep{Zou2005RegularizationNet}, and SLOPE \citep{Bogdan2015SLOPEAdaptiveOptimization} have been proposed and found increased use in the machine learning community \citep{Alaoui2015FastGuarantees, Lemhadri2021LassoNet:Sparsity,Michoel2018AnalyticNet, Thompson2023TheNetworks}. These models fall under the umbrella of \textit{shrinkage methods}, as the estimated coefficients are shrunk towards zero during optimization, overcoming the singularity issue of the data matrix. Of the many shrinkage methods, the lasso has found the most widespread use, due to its ability to shrink coefficients exactly to zero, performing \textit{variable selection}. 
High-dimensional datasets, where the number of features ($p$) is far greater than the number of observations ($n$) in a matrix $\mathbf{X}\in \mathbb{R}^{n\times p}$, are becoming increasingly common with the increased rate of data collection. To handle this, \textit{shrinkage methods}, such as the lasso \citep{Tibshirani1996RegressionLasso}, elastic-net \citep{Zou2005RegularizationNet}, and SLOPE \citep{Bogdan2015SLOPEAdaptiveOptimization} have been proposed and found increased use in the machine learning community \citep{Alaoui2015FastGuarantees, Michoel2018AnalyticNet,Lemhadri2021LassoNet:Sparsity, Thompson2023TheNetworks}. These methods shrink estimates towards zero during optimization, enabling \textit{variable selection}, to identify which features, $\beta\in\mathbb{R}^p$, have an association with the response $y\in \mathbb{R}^n$. 

%Variable selection allows a researcher to determine which features have an association with the response $y\in \mathbb{R}^n$. This is particularly useful in genetics, where a biological researcher may wish to discover a candidate list of genes with an association to a disease outcome. These genes can then be examined in more detail through biological experiments. However, genes are naturally found in pathways (groups of genes) and any regression approach that does not use this grouping information would not be making full use of the information available. Attempts to incorporate grouping information, leading to \textit{group selection}, can be found through methods such as the group lasso \citep{Yuan2006ModelVariables}, group SLOPE \citep{DamianBrzyskiAlexejGossmann2019GroupPredictors}, and group SCAD \citep{Guo2015ModelSCAD}. However, applying only group shrinkage can lead to issues with convergence and poor predictive performance, as the model is forced to keep all variables within an active group, leading to many noisy variables being part of the optimization process \citep{Feser2023Sparse-groupFDR-control,Simon2013}. 

In genetics, these methods help identify genes associated with disease outcomes. As genes are naturally found in groups (pathways), \textit{group selection} approaches have been proposed, that allow grouping information to be used, such as the group lasso \citep{Yuan2006ModelVariables}, group SLOPE \citep{DamianBrzyskiAlexejGossmann2019GroupPredictors}, and group SCAD \citep{Guo2015ModelSCAD}. Applying only group shrinkage can harm convergence and prediction, as all variables in an active group are retained, including noise variables \citep{Simon2013,Feser2023Sparse-groupFDR-control}. 

%The limitations of group-based regression models led to the development of sparse-group shrinkage models, such as the Sparse-group Lasso (SGL) \citep{Simon2013} and Sparse-group SLOPE (SGS) \citep{Feser2023Sparse-groupFDR-control}. These models apply shrinkage on both variables and groups to yield concurrent variable and group selection, allowing grouping information to be used without the constraint of every variable in a group being active. In particular, SGL has found increased popularity in applications in the machine learning \citep{Vidyasagar2014MachineCancer,Yogatama2014LinguisticCategorization} and healthcare \citep{Fang2015Bi-levelLasso, Peng2010RegularizedCancer,Simon2013} communities, due to its ability to shrink whole groups of variables to zero, as well as variables within active groups. It has been shown to consistently outperform the lasso and group lasso in terms of selection and predictive performance \citep{Simon2013,Feser2023Sparse-groupFDR-control}, showing that sparse-group models provide tangible benefits.

This limitation led to the development of sparse-group models, such as the Sparse-group Lasso (SGL) \citep{Simon2013} and Sparse-group SLOPE (SGS) \citep{Feser2023Sparse-groupFDR-control}. These models can shrink variables in active groups by applying shrinkage on both variables and groups to yield bi-level selection. SGL has found increased popularity in applications in the machine learning \citep{Vidyasagar2014MachineCancer,Yogatama2014LinguisticCategorization} and healthcare \citep{Peng2010RegularizedCancer,Simon2013,Fang2015Bi-levelLasso} communities. Sparse-group models have been shown to have tangible benefits by consistently outperforming the lasso and group lasso in selection and prediction tasks \citep{Simon2013,Feser2023Sparse-groupFDR-control}.

Suppose the variables sit in a grouping structure, with disjoint sets of variables $\mathcal{G}_1,\dots,\mathcal{G}_m$ of sizes $p_1,\ldots,p_m$. Then, SGL is a convex combination of the lasso and group lasso \citep{Simon2013}:
\begin{align}
    \hat\beta_\text{sgl}(\lambda) \in  &\argmin_{\beta\in \mathbb{R}^p}\{f(\beta) +\lambda \|\beta\|_\text{sgl} \}, \label{eqn:sgl_problem}\\
    \text{where}\;\; \|\beta\|_\text{sgl} &= \alpha \|\beta\|_1 +  (1-\alpha) \sum_{g=1}^m \sqrt{p_g}\|\beta^{(g)}\|_2,\label{eqn:sgl_problem_2}
\end{align}
such that $f$ is a differentiable and convex loss function, $\lambda>0$ defines the level of shrinkage, $\beta^{(g)}\in\mathbb{R}^{p_g}$ is the vector of coefficients in group $g$, and $\alpha \in [0,1]$. SGL has been extended to have adaptive shrinkage through the adaptive sparse-group lasso (aSGL) \citep{Poignard2020AsymptoticLasso,Mendez-Civieta2021AdaptiveRegression} (Section \ref{section:adap_sgl}).

\subsection{Feature Reduction Approaches for the Sparse-group Lasso}\label{section:sgl_other_methods}
The strengths of SGL come with increased computational cost, due to the additional shrinkage and the tuning of two hyperparameters. Typically, $\alpha$ is set subjectively (\citet{Simon2013} suggest $\alpha = 0.95$) and $\lambda$ is tuned using cross-validation along a path $\lambda_1\geq \ldots\geq \lambda_l\geq 0$. Algorithms such as the Least Angle Regression (LARS) approach calculate solutions for all possible values of $\lambda$, but are very sensitive to multicollinearity and scale quadratically, rendering their use in high-dimensional settings limited \citep{Efron2004LeastRegression}. Instead, feature reduction techniques, including screening rules, can help ease the cost of fitting a model along a path by discarding features before optimization that would have been inactive at the optimal solution. Whilst methods exist to discard observations \citep{Shibagaki2016SimultaneousModeling, Zhang2017ScalingReduction}, the focus of this paper is high-dimensional settings, in which discarding features is more impactful on computational savings.

Feature reduction techniques are either \textit{exact} or \textit{heuristic}. Exact methods strictly discard only inactive features but are conservative, while heuristic methods discard more features at the risk of violations. These violations are countered by checking the Karush–Kuhn–Tucker (KKT) conditions \citep{Kuhn1951NonlinearProgramming} and adding any offending features back into the optimization. Heuristic rules discard significantly more variables than exact rules, providing large computational savings \citep{tibshirani2010strong}.

Most exact methods follow the seminal Safe Feature Elimination (SAFE) framework \citep{Ghaoui2010SafeProblems}, which has been applied to the group lasso \citep{Bonnefoy2015DynamicGroup-Lasso} and SGL \citep{Ndiaye2016GAPLasso}. Other exact examples include the dome test \citep{Xiang2012FastCorrelations}, Dual Polytope Projections (DPP) \citep{Wang2013LassoProjection}, and Slores \citep{Wang2014ARegression}. The strong rule by \citet{tibshirani2010strong} provides a framework for applying heuristic reduction with single separable norms, which has been extended to non-separable \citep{Larsson2020a} and sparse-group norms \citep{Feser2024Screen}. Other examples include Sure Independence Screening (SIS) \citep{Fan2008SureSpace} and the Hessian rule \citep{Larsson2024TheRule}. 

Aside from the exact and heuristic categories, feature reduction techniques tend to follow three forms: \textit{static}, where the feature reduction occurs only once at the start \citep{Ghaoui2010SafeProblems, Xiang2011LearningDictionaries, Xiang2012FastCorrelations}, \textit{dynamic}, where reduction occurs iteratively \citep{Bonnefoy2015DynamicGroup-Lasso}, and \textit{sequential}, where information from the previous solution is used \citep{tibshirani2010strong,Larsson2020a,Larsson2024TheRule, Feser2024Screen}. 

%An exact feature reduction method for SGL was proposed by \citet{Ndiaye2016GAPLasso}, called the GAP safe rules. The method combines sequential and dynamic feature reduction and creates feasible regions in which active variables sit using the duality gap. GAP safe applies two layers of reduction, discarding inactive groups and inactive variables within active groups. Other SGL exact methods are Two-layer Feature Reduction (TLFre) \citep{Wang2014Two-LayerSets}, although this was shown not to be exact \citep{Ndiaye2015GAPModels}. A heuristic approach for SGL was introduced by \citet{Liang2022Sparsegl:Lasso}, called sparsegl, which is a strong rule that applies only group-level reduction. There have been other attempts to speed up SGL; \citet{Ida2019FastLasso} calculate approximate bounds for the inactive conditions derived in \citet{Simon2013}, and \citet{Li2022FastBiobank} derive a heuristic screening rule, but it is limited to multi-response Cox modeling.

An exact reduction method for SGL, called GAP safe, was proposed by \citet{Ndiaye2016GAPLasso} using the SAFE framework. GAP safe uses the duality gap to create feasible regions where the active variables sit and applies reduction on the groups and variables. Other reduction methods for SGL include Two-layer Feature Reduction (TLFre) (exact) \citep{Wang2014Two-LayerSets}, though it was shown not to be exact \citep{Ndiaye2015GAPModels}, and sparsegl (heuristic) \citep{Liang2022Sparsegl:Lasso}, which applies only group-level reduction. Additional speed-up attempts include using approximate bounds for inactive conditions \citep{Ida2019FastLasso} and a heuristic screening rule limited to multi-response Cox modeling \citep{Li2022FastBiobank}.
\begin{figure}[t]
\centering
  \includegraphics[width=1\linewidth,valign=t]{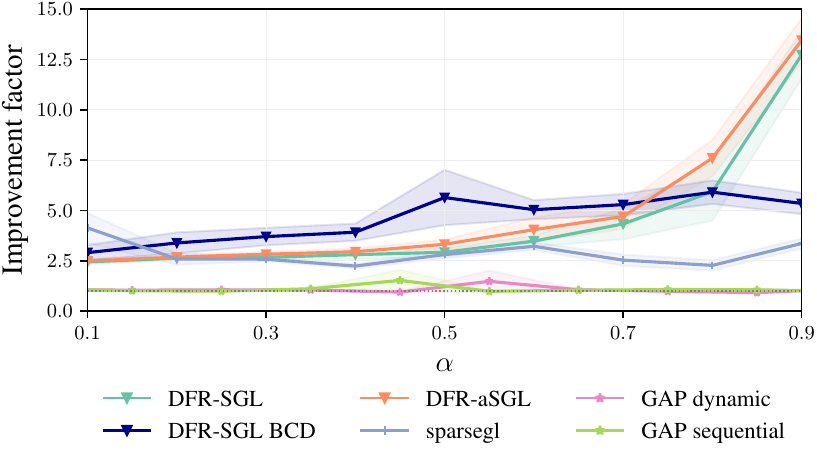}
  \captionof{figure}{The improvement factor (high is best), which measures by how many orders the screening has improved fitting time, shown for different SGL screening methods, as a function of $\alpha$, with $95\%$ confidence intervals. The data was generated under a linear model with even groups of sizes $20$ (Section \ref{section:results_synthetic}). The two GAP methods have very similar values so jitter was added.}
\label{fig:figure-group-3-if}
\end{figure}
\subsection{Contributions}
In this paper, we propose a new feature reduction method for SGL and adaptive SGL, \textit{Dual Feature Reduction} (DFR), which is based on the strong rule \citep{tibshirani2010strong} and the sparse-group screening framework \citep{Feser2024Screen}. DFR introduces the first bi-level strong screening rules for SGL and the first screening rules for aSGL. 

DFR applies two layers of screening, discarding inactive groups and inactive variables within active groups. By reducing the input dimensionality before optimization, DFR enables expanded tuning regimes to be performed, including concurrent tuning of $\lambda$ and $\alpha$. These benefits are achieved without affecting solution optimality. The computational efficiency of DFR increases the accessibility of SGL and aSGL models, encouraging wider adoption across fields. 

The GAP safe rules for SGL require computation of safe regions, which includes a radius and center, and the dual norm, as well as iterative screening and fitting. In contrast, DFR needs only the dual norm and screens only once at each path point, making it considerably less expensive, as evidenced by our results (Figure \ref{fig:figure-group-3-if}). 

DFR is described for SGL in Section \ref{section:sgl_feature_reduction} and then extended to aSGL in Section \ref{section:adap_sgl}. The proofs of the results presented in these sections are provided in Appendix \ref{appendix:sgl_theory} for SGL and Appendix \ref{appendix:adap_sgl_theory} for aSGL. DFR is applied to synthetic and real data in Sections \ref{sec:results} and \ref{section:real_data_results}, where it is found to be the state-of-the-art screening approach for SGL, outperforming other existing methods, while still achieving the optimal solution.

\section{Preliminaries}\label{section:theory}
%The proofs of the propositions presented in this section are provided in Appendices \ref{appendix:sgl_grp_theory}, \ref{appendix:sgl_var_theory} for SGL, and Appendix \ref{appendix:adap_sgl_theory} for aSGL.

\subsection{Problem Statement}
SGL is trained along a path of parameters $\lambda_1\geq \ldots \geq \lambda_l\geq 0$. The objective is to use the solution at $\lambda_k$ to generate a set of \textit{candidate variables} $\mathcal{C}_v(\lambda_{k+1}) \subset [p]:=\{1,\ldots,p\}$, that is a superset of the (unknown) set of active variables at $\lambda_{k+1}$, given by $\mathcal{A}_v(\lambda_{k+1}) := \{i \in [p]:\hat\beta_i(\lambda_{k+1})\neq 0\}$. The optimization at $\lambda_{k+1}$ (Equation \ref{eqn:sgl_problem}) is then calculated using only $\mathcal{C}_v(\lambda_{k+1})$. If the candidate set is a small proportion of the total input space, large computational savings are expected.

To generate the candidate variable set, we first generate a candidate group set (Section \ref{section:sgl_group_reduction}), which is then used as a basis for constructing the candidate variable set (Section \ref{section:sgl_variable_reduction}). This is done using the dual norm of SGL.

\subsection{Dual Norm}\label{section:sgl_dual_norm}
DFR requires evaluating the dual norm of SGL, defined as $\|z\|_\text{sgl}^*:= \sup\{z^\top x: \|x\|_\text{sgl}\leq 1\}.$ The SGL norm can be expressed in terms of the dual of the $\epsilon$-norm \citep{Ndiaye2016GAPLasso}, 
\begin{equation}\label{eqn:sgl_as_e_norm}
    \|\beta\|_\text{sgl}= \sum_{g=1}^m \tau_g\|\beta^{(g)}\|^*_{\epsilon_g},
\end{equation}
where $\tau_g= \alpha+(1-\alpha)\sqrt{p_g}$. The $\epsilon$-norm, $\|x\|_{\epsilon_g}$, applied to a group $g\in[m]$, is defined as the unique nonnegative solution $q$ of the equation \citep{Burdakov1988AProblems}
\begin{equation}\label{defn:e-norm}
\sum_{i=1}^{p_g} (|x_i| - (1-\epsilon_g)q)^2_+ = (\epsilon_g q)^2,  \;\; \epsilon_g = \frac{\tau_g-\alpha}{\tau_g}.
\end{equation}
Using this, by \citet{Ndiaye2016GAPLasso}, the dual norm of SGL applied to a group $g\in [m]$ can be formulated as
\begin{equation}\label{eqn:sgl_dual}
    \|\xi^{(g)}\|_\text{sgl}^* = \max_{g = 1,\ldots,m} \tau_g^{-1} \|\xi^{(g)}\|_{\epsilon_g}.
\end{equation}
\section{Dual Feature Reduction}\label{section:sgl_feature_reduction}
DFR is first derived for SGL (Section \ref{section:dfr_sgl}) and then extended to aSGL (Section \ref{section:adap_sgl}). DFR is summarised for SGL and aSGL in Table \ref{table:summary}.
\subsection{Sparse-group Lasso}\label{section:dfr_sgl}
\subsubsection{Group reduction}\label{section:sgl_group_reduction}
To generate a candidate group set, the KKT stationarity conditions \citep{Kuhn1951NonlinearProgramming} are used, providing conditions for an inactive group. For SGL, they are given by, for a group $g\in[m]$ at $\lambda_{k+1}$ (using the $\epsilon$-norm representation in Equation \ref{eqn:sgl_as_e_norm})
\begin{equation}\label{eqn:sgl_kkt}
\mathbf{0} \in \nabla_g f(\hat\beta(\lambda_{k+1}))+\tau_g\lambda_{k+1} \Theta_{g,k+1}, 
\end{equation}
where $\Theta_{g,k+1} = \partial \|\hat\beta(\lambda_{k+1})\|_{\epsilon_g}^*$ is the subgradient of the dual norm of the $\epsilon$-norm at $\lambda_{k+1}$. The subgradient for an inactive group $g$ (at zero) can be expressed by the unit ball of the dual norm \citep{Schneider2022TheEstimation}: 
\begin{equation*}
    \Theta^0_{g,k+1}:=\partial \|0\|^*_{\epsilon_g} = \left\{x\in\mathbb{R}^{p_g}: \|x\|_{\epsilon_g} \leq 1 \right\}.
\end{equation*} 
Plugging the unit ball into Equation \ref{eqn:sgl_kkt} and applying the $\epsilon$-norm, the subgradient can be canceled out, so the KKT conditions can be written as
\begin{equation}\label{eqn:sgl_group_screen_eq_1} 
\|\nabla_g f(\hat\beta(\lambda_{k+1}))\|_{\epsilon_g} =\tau_g \lambda_{k+1} \| \Theta^0_{g,k+1}\|_{\epsilon_g} \leq  \tau_g\lambda_{k+1}.
\end{equation}
If the gradient at $\lambda_{k+1}$ were available, we could exactly identify the active groups (Proposition \ref{propn:sgl_grp_screen_theoretical}). 
\begin{proposition}[Theoretical SGL group screening]\label{propn:sgl_grp_screen_theoretical}
    For SGL applied with any $\lambda_{k+1}, k\in [l-1]$, the candidate group set,
    \begin{equation*}
        \mathcal{C}_g(\lambda_{k+1}) = \{g\in [m]: \|\nabla_g f(\hat\beta(\lambda_{k+1}))\|_{\epsilon_g}  >  \tau_g\lambda_{k+1}\},
    \end{equation*} 
   is such that $\mathcal{C}_g(\lambda_{k+1}) = \mathcal{A}_g(\lambda_{k+1}):= \{g \in [m]:\|\hat\beta^{(g)}(\lambda_{k+1})\|_2\neq 0\}$.
\end{proposition}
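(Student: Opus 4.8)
The plan is to read both set inclusions off the stationarity condition \eqref{eqn:sgl_kkt}. Since $f$ is convex and differentiable and $\|\cdot\|_\text{sgl}$ is convex, \eqref{eqn:sgl_kkt} is necessary and sufficient for $\hat\beta(\lambda_{k+1})$ to solve \eqref{eqn:sgl_problem}; and because the penalty \eqref{eqn:sgl_as_e_norm} separates over groups as $\sum_g\tau_g\|\beta^{(g)}\|^*_{\epsilon_g}$, its subdifferential is block-diagonal and the condition decouples into one inclusion per group, $\mathbf{0}\in\nabla_g f(\hat\beta(\lambda_{k+1}))+\tau_g\lambda_{k+1}\,\partial\|\hat\beta^{(g)}(\lambda_{k+1})\|^*_{\epsilon_g}$. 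I would then case-split on whether group $g$ is active and treat the two resulting inclusions separately.

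For $\mathcal{C}_g(\lambda_{k+1})\subseteq\mathcal{A}_g(\lambda_{k+1})$ I argue by contraposition: if $\hat\beta^{(g)}(\lambda_{k+1})=\mathbf{0}$ then $\partial\|\mathbf{0}\|^*_{\epsilon_g}=\Theta^0_{g,k+1}$ is the $\|\cdot\|_{\epsilon_g}$ unit ball, and the stationarity inclusion together with the manipulation already carried out in \eqref{eqn:sgl_group_screen_eq_1} yields $\|\nabla_g f(\hat\beta(\lambda_{k+1}))\|_{\epsilon_g}\le\tau_g\lambda_{k+1}$, so $g\notin\mathcal{C}_g(\lambda_{k+1})$. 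For the reverse inclusion I take an active group, $\hat\beta^{(g)}(\lambda_{k+1})\neq\mathbf{0}$; stationarity then forces $\nabla_g f(\hat\beta(\lambda_{k+1}))=-\tau_g\lambda_{k+1}\,s$ for some $s\in\partial\|\hat\beta^{(g)}(\lambda_{k+1})\|^*_{\epsilon_g}$, and the key fact is that any subgradient of a norm at a nonzero point is dual-unit. Since $\|\cdot\|_{\epsilon_g}$ is the norm dual to $\|\cdot\|^*_{\epsilon_g}$, the polar characterization $\partial N(v)=\{s:\|s\|_{\epsilon_g}\le1,\ s^\top v=N(v)\}$ with $N=\|\cdot\|^*_{\epsilon_g}$, combined with the sharp Hölder inequality, forces $\|s\|_{\epsilon_g}=1$ for $v\neq\mathbf{0}$; applying $\|\cdot\|_{\epsilon_g}$ and using positive homogeneity then gives $\|\nabla_g f(\hat\beta(\lambda_{k+1}))\|_{\epsilon_g}=\tau_g\lambda_{k+1}$, so active groups attain the threshold.

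I expect the reverse inclusion to be the only delicate point. Active groups meet the bound with equality rather than strict inequality, so the argument turns entirely on the lemma that every element of $\partial\|\cdot\|^*_{\epsilon_g}$ at a nonzero argument has $\epsilon$-norm exactly one, which I would isolate and prove first (via Fenchel--Young); the remaining ingredients -- necessity and sufficiency of \eqref{eqn:sgl_kkt}, the block structure of the subdifferential, and the unit-ball description of $\partial\|\mathbf{0}\|^*_{\epsilon_g}$ -- are routine. This boundary equality must then be reconciled with the strict inequality defining $\mathcal{C}_g(\lambda_{k+1})$: I would record that inactive groups satisfy $\|\nabla_g f\|_{\epsilon_g}\le\tau_g\lambda_{k+1}$ while active groups satisfy it with equality, and read these two facts as the two claimed inclusions, invoking if needed a non-degeneracy of $\lambda_{k+1}$ that places inactive groups strictly inside the ball so that the active groups are exactly those on its boundary.
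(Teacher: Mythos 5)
Your first inclusion, $\mathcal{C}_g(\lambda_{k+1})\subseteq\mathcal{A}_g(\lambda_{k+1})$, is exactly the paper's argument: by contraposition, an inactive group has subdifferential $\Theta^0_{g,k+1}$ equal to the $\|\cdot\|_{\epsilon_g}$ unit ball, so stationarity (Equation~\ref{eqn:sgl_kkt}) gives $\|\nabla_g f(\hat\beta(\lambda_{k+1}))\|_{\epsilon_g}\leq\tau_g\lambda_{k+1}$, i.e.\ $g\notin\mathcal{C}_g(\lambda_{k+1})$. The divergence is in the reverse inclusion, and the obstruction you found there is genuine, not a defect of your approach. Your dual-unit lemma is correct: every subgradient of $\|\cdot\|^*_{\epsilon_g}$ at a nonzero point has $\epsilon$-norm exactly one, so stationarity forces $\|\nabla_g f(\hat\beta(\lambda_{k+1}))\|_{\epsilon_g}=\tau_g\lambda_{k+1}$ for every \emph{active} group. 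But then, at an exact solution, no group---active or inactive---satisfies the strict inequality defining $\mathcal{C}_g(\lambda_{k+1})$: the candidate set is empty, and the claimed equality $\mathcal{C}_g(\lambda_{k+1})=\mathcal{A}_g(\lambda_{k+1})$ can hold only when $\mathcal{A}_g(\lambda_{k+1})=\emptyset$. Your non-degeneracy patch cannot repair this: it may push inactive groups strictly inside the ball, but active groups remain \emph{on} the boundary, never strictly outside, so they still fail the membership test for $\mathcal{C}_g(\lambda_{k+1})$. The honest repairs are either to replace $>$ by $\geq$ in the definition of $\mathcal{C}_g(\lambda_{k+1})$ (then your boundary-versus-interior dichotomy, under non-degeneracy, does give the equality), or to read $\nabla_g f$ as the gradient evaluated with block $g$ set to zero, in which case blockwise optimality makes activity genuinely equivalent to violating the ball condition.

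For comparison, the paper's own proof does not resolve this point; it elides it. That proof is a chain of equivalences whose last step asserts $\mathbf{0}\in\nabla_g f(\hat\beta(\lambda_{k+1}))+\tau_g\lambda_{k+1}\Theta^0_{g,k+1}\iff g\notin\mathcal{A}_g(\lambda_{k+1})$ ``by the KKT conditions,'' but Equation~\ref{eqn:sgl_kkt} licenses only the backward implication (inactivity implies the inclusion, since then the subdifferential \emph{is} $\Theta^0_{g,k+1}$). The forward implication is precisely what your equality computation refutes: an active group's stationarity subgradient lies on the boundary of $\Theta^0_{g,k+1}$, hence inside it, so active groups satisfy the inclusion as well. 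So relative to the paper you should not be marked down for an unfilled gap: the step you could not complete is the step the paper asserts without justification, and your calculation is the correct diagnosis of why Proposition~\ref{propn:sgl_grp_screen_theoretical}, as literally written with a strict inequality, cannot be proved.
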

However, as this is not possible in practice, an approximation $\mathcal{M}_g$ is required such that
\begin{equation}\label{eqn:approximation_problem}
     \|\nabla_g f(\hat\beta(\lambda_{k+1}))\|_{\epsilon_g}\leq \mathcal{M}_g.
\end{equation}
Then, the screening rule tests whether $\mathcal{M}_g \leq\tau_g\lambda_{k+1}$. If this is found to be true, it can be concluded that Equation \ref{eqn:sgl_group_screen_eq_1} holds and the group must be inactive.
An approximation can be found by assuming that the gradient is a Lipschitz function of $\lambda_{k+1}$ with respect to the $\epsilon$-norm,
\begin{equation}\label{eqn:sgl_group_lipschitz}
    \|\nabla_g f(\hat\beta(\lambda_{k+1})) - \nabla_g f(\hat\beta(\lambda_k))\|_{\epsilon_g} \leq \tau_g|\lambda_{k+1} - \lambda_k|,
\end{equation}
which is a similar assumption to that used in the lasso strong rule \citep{tibshirani2010strong}. Using the reverse triangle inequality gives
\begin{equation*}
\|\nabla_g f(\hat\beta(\lambda_{k+1}))\|_{\epsilon_g} \leq \underbrace{\|\nabla_g f(\hat\beta(\lambda_{k}))\|_{\epsilon_g} +\tau_g(\lambda_k - \lambda_{k+1})}_{=:\mathcal{M}_g},
\end{equation*}
yielding a suitable approximation $\mathcal{M}_g$. Therefore, the strong group screening rule for SGL (Proposition \ref{propn:sgl_grp_screen}) can be formulated by plugging $\mathcal{M}_g$ into Equation \ref{eqn:approximation_problem}: discard a group $g\in [m]$ if 
\begin{equation}\label{eqn:sgl_group_screen}
\|\nabla_gf(\hat\beta(\lambda_{k}))\|_{\epsilon_g} \leq \tau_g(2\lambda_{k+1} - \lambda_k).
\end{equation}
Since the Lipschitz assumption can fail, KKT checks (Section \ref{section:kkt_checks}) are performed to prevent violations.
\begin{proposition}[DFR-SGL group screening]\label{propn:sgl_grp_screen}
    For SGL applied with any $\lambda_{k+1}, k\in [l-1]$, assuming that 
    \begin{equation*}
      \|\nabla_g f(\hat\beta(\lambda_{k+1})) - \nabla_g f(\hat\beta(\lambda_k))\|_{\epsilon_g} \leq \tau_g|\lambda_{k+1} - \lambda_k|,
    \end{equation*}
    for all $g \in [m]$, then the candidate group set,
    \begin{align*}
        \mathcal{C}_g(\lambda_{k+1}) = \{&g\in [m]: \\
        &\|\nabla_g f(\hat\beta(\lambda_{k}))\|_{\epsilon_g} >\tau_g(2\lambda_{k+1} - \lambda_k)\},
    \end{align*} 
   is such that $\mathcal{A}_g(\lambda_{k+1}) \subset \mathcal{C}_g(\lambda_{k+1})$.
\end{proposition}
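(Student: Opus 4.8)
The plan is to prove the contrapositive statement: rather than showing directly that every active group lies in $\mathcal{C}_g(\lambda_{k+1})$, I would show that every group \emph{excluded} from $\mathcal{C}_g(\lambda_{k+1})$ must be inactive at $\lambda_{k+1}$, which is logically equivalent to the claimed inclusion $\mathcal{A}_g(\lambda_{k+1}) \subset \mathcal{C}_g(\lambda_{k+1})$. The backbone of the argument is Proposition \ref{propn:sgl_grp_screen_theoretical}, which provides the \emph{exact} characterisation that a group $g$ is inactive at $\lambda_{k+1}$ precisely when $\|\nabla_g f(\hat\beta(\lambda_{k+1}))\|_{\epsilon_g} \leq \tau_g\lambda_{k+1}$, the inactivity condition already derived in Equation \ref{eqn:sgl_group_screen_eq_1}. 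Since the gradient at $\lambda_{k+1}$ is unavailable in practice, the whole task reduces to transferring the computable bound evaluated at $\lambda_k$ into a valid bound at $\lambda_{k+1}$ using the Lipschitz hypothesis.

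First I would fix a group $g \notin \mathcal{C}_g(\lambda_{k+1})$, so that by the definition of the candidate set we have $\|\nabla_g f(\hat\beta(\lambda_k))\|_{\epsilon_g} \leq \tau_g(2\lambda_{k+1} - \lambda_k)$. Next I would invoke the triangle inequality for the $\epsilon$-norm (valid since it is a genuine norm) to split the quantity $\|\nabla_g f(\hat\beta(\lambda_{k+1}))\|_{\epsilon_g}$ into the term at $\lambda_k$ plus the gradient-difference term, and then bound the difference term by the assumed Lipschitz inequality. At this point I must be careful with the monotonicity of the path: because $\lambda_k \geq \lambda_{k+1}$, the absolute value $|\lambda_{k+1} - \lambda_k|$ equals $\lambda_k - \lambda_{k+1}$, and it is precisely this sign bookkeeping that makes the constants cancel cleanly.

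Combining the two bounds then yields $\|\nabla_g f(\hat\beta(\lambda_{k+1}))\|_{\epsilon_g} \leq \tau_g(2\lambda_{k+1} - \lambda_k) + \tau_g(\lambda_k - \lambda_{k+1}) = \tau_g\lambda_{k+1}$, which is exactly the inactivity condition of Proposition \ref{propn:sgl_grp_screen_theoretical}. Consequently $g \notin \mathcal{A}_g(\lambda_{k+1})$, which completes the contrapositive and hence establishes the inclusion.

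As for the main obstacle, the algebra is routine once the inequalities are lined up; the genuine subtlety lies entirely in the sign handling during the triangle-inequality step, namely obtaining $\lambda_k - \lambda_{k+1}$ rather than its negative, and in being explicit that the $\epsilon$-norm is a bona fide norm so that the triangle inequality applies. The real conceptual work has already been discharged upstream, in expressing the inactive-group subgradient as the dual unit ball and in establishing the exact theoretical characterisation of Proposition \ref{propn:sgl_grp_screen_theoretical}.
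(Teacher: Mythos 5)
Your proposal is correct and takes essentially the same route as the paper's proof: argue the contrapositive, use the (reverse) triangle inequality for the $\epsilon$-norm together with the Lipschitz assumption to bound $\|\nabla_g f(\hat\beta(\lambda_{k+1}))\|_{\epsilon_g}$ by $\tau_g\lambda_{k+1}$, and conclude inactivity from the dual-ball/KKT characterization. The only cosmetic difference is that you invoke Proposition \ref{propn:sgl_grp_screen_theoretical} for the final step, whereas the paper re-derives the KKT inclusion chain inline; the content is identical.
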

\subsubsection{Variable reduction}\label{section:sgl_variable_reduction}
Group screening reduces the input dimensionality, but further reduction is possible by applying a second screening layer to the variables in the candidate groups. For an inactive variable, $i\notin\mathcal{A}_v(\lambda_{k+1}), i\in\mathcal{G}_g$, the KKT conditions are (by Equation \ref{eqn:sgl_problem})
\begin{equation}\label{eqn:sgl_var_kkt_cond}
\mathbf{0} \in \nabla_i f(\hat\beta(\lambda_{k+1})) + \lambda_{k+1}  \alpha \Phi_{i,k+1}^0 + \lambda_{k+1} (1-\alpha)\Psi^{(g)}_{i,k+1}, 
\end{equation}
where $\Phi$ and $\Psi$ are the subgradients of the $\ell_1$ and $\ell_2$ norms respectively. For an active group, the subgradient of the $\ell_2$ norm is given by $\hat\beta_i^{(g)}/\|\hat\beta^{(g)}\|_2$, which vanishes for an inactive variable. So,
\begin{align}
    -\nabla_i f(\hat\beta(\lambda_{k+1})) &\in \lambda_{k+1}  \alpha \Phi_{i,k+1}^0  \nonumber \\
    \iff |\nabla_i f(\hat\beta(\lambda_{k+1}))|&\leq \lambda_{k+1}\alpha,\label{eqn:sgl_kkt_var}
\end{align}
where $\Phi^0_{i,k+1} = \{x\in\mathbb{R}:|x|\leq 1\}$. As before, knowledge of the gradient would lead to exact support recovery (Proposition \ref{propn:sgl_var_screen_theoretical}). Equation \ref{eqn:sgl_kkt_var} is similar to the strong screening rule for the lasso \citep{tibshirani2010strong}, scaled by $\alpha$. Hence, using a scaled version of Lipschitz assumption for the lasso, the variable screening rule for SGL is formulated in Proposition \ref{propn:sgl_var_screen}. 

\begin{proposition}[DFR-SGL variable screening]\label{propn:sgl_var_screen}
For SGL applied with any $\lambda_{k+1}, k\in [l-1]$, assuming that 
    \begin{equation*}
      |\nabla_i f(\hat\beta(\lambda_{k+1}))- \nabla_i f(\hat\beta(\lambda_{k}))| \leq \alpha(\lambda_{k} - \lambda_{k+1}),
    \end{equation*}
for all $i\in\mathcal{G}_g$ for $g\in\mathcal{A}_g(\lambda_{k+1})$, then the candidate variable set,
    \begin{align}
        \mathcal{C}_v(\lambda_{k+1}) = \{i&\in \mathcal{G}_g \;\text{for}\; g\in\mathcal{A}_g(\lambda_{k+1}):\nonumber\\
        &|\nabla_i f(\hat\beta(\lambda_{k}))| > \alpha (2\lambda_{k+1} -\lambda_k)\}\nonumber,
    \end{align}
    is such that $\mathcal{A}_v(\lambda_{k+1}) \subset \mathcal{C}_v(\lambda_{k+1})$.
\end{proposition}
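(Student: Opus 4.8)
The plan is to prove the containment by contraposition. I take a variable $i$ lying in an active group $g \in \mathcal{A}_g(\lambda_{k+1})$ that is \emph{screened out}, i.e.\ $|\nabla_i f(\hat\beta(\lambda_{k}))| \leq \alpha(2\lambda_{k+1} - \lambda_k)$, and show it must be inactive, $\hat\beta_i(\lambda_{k+1}) = 0$. Combined with the trivial observation that any variable belonging to an inactive group is automatically zero (so a variable outside every active group is never in $\mathcal{A}_v$), this handles both ways a variable can fail to lie in $\mathcal{C}_v(\lambda_{k+1})$ and establishes $\mathcal{A}_v(\lambda_{k+1}) \subset \mathcal{C}_v(\lambda_{k+1})$.

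First I would reduce the optimality condition to a lasso-type scalar inequality. For a variable $i \in \mathcal{G}_g$ with $g$ active, the $\ell_2$ subgradient $\hat\beta_i^{(g)}/\|\hat\beta^{(g)}\|_2$ is well defined and vanishes precisely when $\hat\beta_i = 0$. Hence the componentwise KKT stationarity condition (Equation \ref{eqn:sgl_var_kkt_cond}) collapses, for an inactive variable in an active group, to the bound $|\nabla_i f(\hat\beta(\lambda_{k+1}))| \leq \alpha\lambda_{k+1}$ of Equation \ref{eqn:sgl_kkt_var}. The decisive point is the converse direction: if $\hat\beta_i \neq 0$, then the $\ell_1$ subgradient $\operatorname{sign}(\hat\beta_i)$ and the $\ell_2$ subgradient both share the sign of $\hat\beta_i$, so the two penalty contributions add rather than cancel and force $|\nabla_i f(\hat\beta(\lambda_{k+1}))| > \alpha\lambda_{k+1}$ strictly. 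This yields the equivalence, for variables inside active groups, between activity and the strict inequality $|\nabla_i f(\hat\beta(\lambda_{k+1}))| > \alpha\lambda_{k+1}$.

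Next I would transfer the screening condition at $\lambda_k$ into a gradient bound at $\lambda_{k+1}$. Applying the triangle inequality together with the assumed Lipschitz bound gives
\begin{align*}
|\nabla_i f(\hat\beta(\lambda_{k+1}))| &\leq |\nabla_i f(\hat\beta(\lambda_{k}))| + \alpha(\lambda_k - \lambda_{k+1}) \\
&\leq \alpha(2\lambda_{k+1} - \lambda_k) + \alpha(\lambda_k - \lambda_{k+1}) = \alpha\lambda_{k+1}.
\end{align*}
Combined with the equivalence from the previous step, $|\nabla_i f(\hat\beta(\lambda_{k+1}))| \leq \alpha\lambda_{k+1}$ forces $\hat\beta_i(\lambda_{k+1}) = 0$, i.e.\ $i \notin \mathcal{A}_v(\lambda_{k+1})$, completing the contrapositive. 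The threshold $2\lambda_{k+1} - \lambda_k$ is engineered exactly so that the slack $\alpha(\lambda_k - \lambda_{k+1})$ absorbed by the Lipschitz term lands the final bound precisely at $\alpha\lambda_{k+1}$.

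I expect the main obstacle to be the first step rather than the algebra of the second. One must argue not merely that inactive variables satisfy $|\nabla_i f| \leq \alpha\lambda_{k+1}$ (the necessary direction already recorded in Equation \ref{eqn:sgl_kkt_var}), but that this bound is \emph{sufficient} for inactivity. This hinges on the sign-alignment of the two subgradients inside an active group and on carefully handling the boundary case $|\nabla_i f(\hat\beta(\lambda_{k+1}))| = \alpha\lambda_{k+1}$, where the strictness of the active-variable inequality is exactly what rules out activity. It is also essential that the restriction to active groups be in force throughout, since only there is the $\ell_2$ term differentiable and the vanishing-subgradient argument valid; this is why the candidate set is built on top of $\mathcal{A}_g(\lambda_{k+1})$ rather than over all of $[p]$.
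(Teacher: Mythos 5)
Your proof is correct and takes essentially the same route as the paper's: argue the contrapositive, combine the Lipschitz assumption with the triangle inequality and the screening threshold to obtain $|\nabla_i f(\hat\beta(\lambda_{k+1}))| \leq \alpha\lambda_{k+1}$, and conclude inactivity from the KKT conditions. Your only addition is spelling out why this bound is \emph{sufficient} for inactivity (the sign-alignment of the $\ell_1$ and $\ell_2$ subgradients forcing $|\nabla_i f| > \alpha\lambda_{k+1}$ strictly for an active variable in an active group), a step the paper subsumes in its appeal to the KKT characterization of Equation \ref{eqn:sgl_kkt_var}.
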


To derive Proposition \ref{propn:sgl_var_screen}, knowledge of $\mathcal{A}_g(\lambda_{k+1})$ is required, but this is unknown. By Proposition \ref{propn:sgl_grp_screen}, $\mathcal{A}_g(\lambda_{k+1}) \subset \mathcal{C}_g(\lambda_{k+1})$, and so the candidate set is used in practice for applying Proposition \ref{propn:sgl_var_screen}. That is, the variable screening rule is applied to the candidate group set. Any violations caused by this replacement are checked for by the KKT checks, which are performed in any case (for any strong rule).

\subsubsection{KKT checks}\label{section:kkt_checks}
The screening rules of DFR use several Lipschitz assumptions (Propositions \ref{propn:sgl_grp_screen} and \ref{propn:sgl_var_screen}), as well as replacing the active group set by the candidate group set for the variable screening step (Section \ref{section:sgl_variable_reduction}). When these assumptions fail, the screening rules can incorrectly discard active variables. To protect against this, the KKT conditions are checked for each variable after screening. A KKT violation occurs for variable $i \in\mathcal{G}_g$ if 
\begin{equation}\label{eqn:sgl_final_kkt}
    |S(\nabla_i f(\hat\beta(\lambda_{k+1})), \lambda_{k+1} (1-\alpha)\sqrt{p_g})| > \lambda_{k+1}  \alpha,
\end{equation}
where $S(a,b) = \text{sign}(a)(|a|-b)_+$ is the soft-thresholding operator (see Appendix \ref{appendix:sgl_kkt_checks} for the derivation). A violating variable is added back into the optimization procedure (see Section \ref{section:algorithm}).

\subsubsection{Path start}\label{appendix:sgl_path_start}
When fitting SGL along a path of values, $\lambda_1\geq \ldots \geq \lambda_l \geq 0$, $\lambda_1$ is often chosen to be the exact point at which the first predictor becomes non-zero. By \citet{Ndiaye2016GapPenalties} and using the dual norm from Equation \ref{eqn:sgl_dual}, this value is given by 
\begin{equation*}
    \lambda_1 = \|\nabla f(0)\|^*_\text{sgl} =   \max_{g = 1,\ldots,m}\tau_g^{-1} \|\nabla_g f(0)\|_{\epsilon_g}.
\end{equation*}
\subsection{Algorithm}\label{section:algorithm}
The DFR algorithm is based on the sparse-group strong screening framework, proposed by \citet{Feser2024Screen}, and is shown in Algorithm \ref{alg:sgs_framework}. The algorithm has the following key steps for $\lambda_{k+1}$:

%DFR applies a layer of group screening, followed by variable screening on any remaining groups, to form the candidate variable set $\mathcal{C}_v$. This is combined with the previously active variables to form the \textit{optimization set}, $\mathcal{O}_v$, which is the input space to fit SGL on. Any variables outside the optimization set are set to zero. The KKT checks are then performed (Section \ref{section:kkt_checks}), with any violation variables added to the optimization set. This is repeated until no violations occur. 

\begin{enumerate}
    \item \textit{Group screening}: find $\mathcal{C}_g(\lambda_{k+1})$ using Proposition \ref{propn:sgl_grp_screen}.
    \item \textit{Variable screening}: find $\mathcal{C}_v(\lambda_{k+1})$ using Proposition \ref{propn:sgl_var_screen} for $i \in\mathcal{G}_g \setminus \mathcal{A}_v(\lambda_k), g \in \mathcal{C}_g(\lambda_{k+1})$.
    \item \textit{Optimization}: Compute $\hat\beta_{\mathcal{O}_v}(\lambda_{k+1})$ using the \textit{optimization set} $\mathcal{O}_v =\mathcal{C}_v(\lambda_{k+1}) \cup \mathcal{A}_v(\lambda_{k})$. Perform KKT checks to identify any violations (Section \ref{section:kkt_checks}) and add offending variables into $\mathcal{O}_v$. Repeat this step until no violations.
\end{enumerate}

The two main computational costs of the algorithm are the calculation of the solution, $\hat\beta$, and the evaluation of the $\epsilon$-norm. The former depends on the fitting algorithm, as this framework is applicable for any SGL fitting algorithm, with proximal and descent algorithms typically having complexities of $O(tp^2)$, for $t$ iterations \citep{https://doi.org/10.1002/wics.1602}. The latter has a worst-case cost of $O(p_g\log p_g)$ \citep{Ndiaye2016GAPLasso}. Tables \ref{tbl:comp-breakdown-1} and \ref{tbl:comp-breakdown-2} in the Appendix provide a computational breakdown of the components of DFR.
\subsection{Adaptive Sparse-group Lasso}\label{section:adap_sgl}
The \textit{Adaptive Sparse-group Lasso} (aSGL) applies adaptive shrinkage in a sparse-group setting, achieving the oracle property in a double-asymptotic framework, and has the norm \citep{Poignard2020AsymptoticLasso,Mendez-Civieta2021AdaptiveRegression}
\begin{equation}\label{eqn:adap_sgl_problem}
       \|\beta\|_\text{asgl} = \alpha \sum_{i=1}^p v_i |\beta_i| +  (1-\alpha) \sum_{g=1}^m w_g\sqrt{p_g}\|\beta^{(g)}\|_2,
\end{equation}
where $v_i$ and $w_g$ are adaptive weights (described in Appendix \ref{appendix:adap_sgl_weights}). aSGL has a less straightforward connection to the $\epsilon$-norm that allows for the derivation of screening rules (Proposition \ref{propn:adap_sgl_e_norm}).
\begin{proposition}\label{propn:adap_sgl_e_norm}
The aSGL norm (Equation \ref{eqn:adap_sgl_problem}) can be expressed as the $\epsilon$-norm by
\begin{align}
      &\|\beta\|_\text{asgl} = \sum_{g=1}^m \gamma_g\|\beta^{(g)}\|^*_{\epsilon_g'}, \; \text{where} \label{eqn:adap_sgl_as_e_norm}\\
      &\gamma_g = \alpha\|v^{(g)}\|_1 -\frac{\alpha}{\|\hat{\beta}^{(g)}\|_1}\sum_{i,j\in \mathcal{G}_g,i\neq j} v_j|\hat\beta_i|\nonumber\\
      &\;\;\;\;\;\;\;\;\;\;\;\;\;\;\;\;\;\;\;\;\;\;+ (1-\alpha)w_g\sqrt{p_g}, \nonumber\\
     &\epsilon'_g = \gamma_g^{-1}(1-\alpha)w_g\sqrt{p_g}. \nonumber
\end{align}
\end{proposition}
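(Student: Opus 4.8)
The plan is to establish the identity group by group, since both the aSGL norm (Equation~\ref{eqn:adap_sgl_problem}) and the target representation (Equation~\ref{eqn:adap_sgl_as_e_norm}) split as sums over $g\in[m]$; it therefore suffices to show, for each fixed group, that
\[
\alpha\sum_{i\in\mathcal{G}_g} v_i|\beta_i| + (1-\alpha)w_g\sqrt{p_g}\|\beta^{(g)}\|_2 = \gamma_g\|\beta^{(g)}\|^*_{\epsilon_g'}.
\]
The key structural fact I would use is that the dual of the $\epsilon$-norm is exactly a convex combination of the $\ell_1$ and $\ell_2$ norms. This is implicit in the SGL decomposition of \citet{Ndiaye2016GAPLasso}: restricting Equation~\ref{eqn:sgl_as_e_norm} to a single group gives $\alpha\|\beta^{(g)}\|_1 + (1-\alpha)\sqrt{p_g}\|\beta^{(g)}\|_2 = \tau_g\|\beta^{(g)}\|^*_{\epsilon_g}$, and dividing by $\tau_g$ together with $\epsilon_g = (1-\alpha)\sqrt{p_g}/\tau_g$ yields $\|x\|^*_{\epsilon} = (1-\epsilon)\|x\|_1 + \epsilon\|x\|_2$. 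I would record this as a preliminary lemma, since it reduces the whole problem to matching the coefficients of $\|\beta^{(g)}\|_1$ and $\|\beta^{(g)}\|_2$.

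The genuine difficulty, absent in the SGL case, is that the adaptive lasso term $\alpha\sum_{i\in\mathcal{G}_g} v_i|\beta_i|$ carries a distinct weight per coordinate and is therefore not a scalar multiple of $\|\beta^{(g)}\|_1$, so it cannot be matched against the $\ell_1$ part of the dual $\epsilon$-norm directly. To resolve this I would introduce a single effective group weight
\[
\bar v_g := \frac{\sum_{j\in\mathcal{G}_g} v_j|\hat\beta_j|}{\|\hat\beta^{(g)}\|_1},
\]
the magnitude-weighted average of the $v_j$ at the current estimate, and replace $\sum_{i\in\mathcal{G}_g} v_i|\beta_i|$ by $\bar v_g\|\beta^{(g)}\|_1$. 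This plug-in is exact at $\beta=\hat\beta$, which is precisely why $\gamma_g$ and $\epsilon_g'$ inherit a dependence on $\hat\beta$, and it is the step I expect to be the main obstacle to state cleanly.

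With the effective weight substituted, matching coefficients is mechanical. Writing $\gamma_g\|\beta^{(g)}\|^*_{\epsilon_g'} = \gamma_g(1-\epsilon_g')\|\beta^{(g)}\|_1 + \gamma_g\epsilon_g'\|\beta^{(g)}\|_2$, I would equate the $\ell_2$ coefficients to obtain $\gamma_g\epsilon_g' = (1-\alpha)w_g\sqrt{p_g}$ and the $\ell_1$ coefficients to obtain $\gamma_g(1-\epsilon_g') = \alpha\bar v_g$. Summing gives $\gamma_g = \alpha\bar v_g + (1-\alpha)w_g\sqrt{p_g}$, and substituting back recovers $\epsilon_g' = \gamma_g^{-1}(1-\alpha)w_g\sqrt{p_g}$, exactly as stated.

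It remains only to confirm that $\alpha\bar v_g$ equals the asserted expression $\alpha\|v^{(g)}\|_1 - \frac{\alpha}{\|\hat\beta^{(g)}\|_1}\sum_{i,j\in\mathcal{G}_g,\,i\neq j} v_j|\hat\beta_i|$. This is the only nontrivial algebra: using $\sum_{i\neq j} v_j|\hat\beta_i| = \sum_{j} v_j(\|\hat\beta^{(g)}\|_1 - |\hat\beta_j|)$ collapses the double sum, and after dividing by $\|\hat\beta^{(g)}\|_1$ the $\|v^{(g)}\|_1$ terms cancel, leaving $\sum_{j} v_j|\hat\beta_j|/\|\hat\beta^{(g)}\|_1 = \bar v_g$. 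Summing the per-group identities over $g$ then gives Equation~\ref{eqn:adap_sgl_as_e_norm} and completes the argument.
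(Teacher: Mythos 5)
Your proposal is correct and takes essentially the same approach as the paper's proof: both arguments rest on writing the dual $\epsilon$-norm as the convex combination $(1-\epsilon)\|\cdot\|_1 + \epsilon\|\cdot\|_2$ and on rewriting the adaptive $\ell_1$ part of each group as a data-dependent scalar, $\alpha\bar v_g$, times $\|\beta^{(g)}\|_1$ --- your final double-sum collapse, $\sum_{i,j\in\mathcal{G}_g,\,i\neq j} v_j|\hat\beta_i| = \|v^{(g)}\|_1\|\hat\beta^{(g)}\|_1 - \sum_{j} v_j|\hat\beta_j|$, is exactly the paper's opening expansion read in reverse. The differences are purely organizational: you derive $\gamma_g$ and $\epsilon'_g$ by coefficient matching rather than positing them and then verifying, and you explicitly flag that the representation is exact at $\beta=\hat\beta$, a dependence the paper's proof leaves implicit.
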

\begin{proof}[Proof of Proposition \ref{propn:adap_sgl_e_norm}]
Splitting up the summation term in the variable norm yields
\begin{align*}
     &\alpha \sum_{i=1}^p v_i |\beta_i| = \alpha\sum_{g=1}^m \sum_{i\in \mathcal{G}_g} v_i |\beta_i| \\
     &= \alpha\sum_{g=1}^m \|\beta^{(g)}\|_1\left(\|v^{(g)}\|_1- \frac{\sum_{i,j\in \mathcal{G}_g,i\neq j} v_j|\beta_i|}{\|\beta^{(g)}\|_1} \right).
\end{align*}
This allows the aSGL norm to be written in terms of the groups
\begin{align}
  &\|\beta\|_\text{asgl} =\sum_{g=1}^m\Biggr[\left( \|v^{(g)}\|_1 -\frac{\sum_{i,j\in \mathcal{G}_g,i\neq j} v_j|\beta_i|}{\|\beta^{(g)}\|_1} \right)\alpha\|\beta^{(g)}\|_1 \nonumber \\
  &\;\;\;\;\;\;\;\;\;\;\;\;\;\;\;\;\;\;\;\;\;\;+ (1-\alpha) w_g\sqrt{p_g}\|\beta^{(g)}\|_2\Biggr].\label{eqn:appendix_sgl_1} 
\end{align}
Setting
\begin{align*}
         \gamma_g &=\alpha\|v^{(g)}\|_1 -\frac{\alpha\sum_{i,j\in \mathcal{G}_g,i\neq j} v_j|\beta_i|}{\|\beta^{(g)}\|_1} + (1-\alpha)w_g\sqrt{p_g},\\
         \epsilon'_g &= \frac{(1-\alpha)w_g\sqrt{p_g}}{\gamma_g},
\end{align*}
allows Equation \ref{eqn:appendix_sgl_1} to be written in terms of the $\epsilon$-norm
\begin{equation*}
    \|\beta\|_\text{asgl} = \sum_{g=1}^m \gamma_g\|\beta^{(g)}\|_{\epsilon'_g}^*.
\end{equation*}
\end{proof}
Through this connection, the aSGL norm admits a direct link to SGL (Equation \ref{eqn:sgl_as_e_norm}), allowing the DFR-SGL rules to be used for aSGL by replacing $\tau_g$ with $\gamma_g$ and $\epsilon_g$ with $\epsilon'_g$ in the $\epsilon$-norm (Appendices \ref{appendix:adap_sgl_screen_rules_grp} and \ref{appendix:adap_sgl_screen_rules_var}). The connection is further sharpened theoretically: Lemma \ref{lemma:adap_gamma_g} guarantees that the $\gamma_g$ term in Equation \ref{eqn:adap_sgl_as_e_norm} will always exist, even under inactive groups. Lemma \ref{lemma:asgl_to_sgl} guarantees that the representation of aSGL as the $\epsilon$-norm correctly reduces to the SGL $\epsilon$-norm representation under constant weights.

\begin{lemma}\label{lemma:adap_gamma_g}
   Under an inactive group $g\notin \mathcal{A}_g$, i.e. $\beta^{(g)}\equiv \boldsymbol{0}$, the $\gamma_g$ term in Equation \ref{eqn:adap_sgl_as_e_norm} exists.
\end{lemma}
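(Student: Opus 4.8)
The plan is to isolate the single term in the definition of $\gamma_g$ that is problematic when $\beta^{(g)}\equiv\boldsymbol{0}$ --- namely the fraction with denominator $\|\hat\beta^{(g)}\|_1$, which then vanishes --- and to show that, despite the apparent $0/0$ form, this term is a bounded quantity, so that $\gamma_g$ is finite. The key is an algebraic simplification that reveals the true nature of the offending term.

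First I would simplify the double sum appearing in $\gamma_g$. Summing over $j\neq i$ for each fixed $i$ gives $\sum_{i,j\in\mathcal{G}_g, i\neq j} v_j|\hat\beta_i| = \sum_{i\in\mathcal{G}_g}|\hat\beta_i|\bigl(\|v^{(g)}\|_1 - v_i\bigr) = \|v^{(g)}\|_1\|\hat\beta^{(g)}\|_1 - \sum_{i\in\mathcal{G}_g} v_i|\hat\beta_i|$. Substituting this back into Equation \ref{eqn:adap_sgl_as_e_norm}, the two $\alpha\|v^{(g)}\|_1$ contributions cancel and $\gamma_g$ collapses to $\gamma_g = \alpha\bigl(\sum_{i\in\mathcal{G}_g} v_i|\hat\beta_i|\bigr)/\|\hat\beta^{(g)}\|_1 + (1-\alpha)w_g\sqrt{p_g}$. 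This confines the indeterminacy to a single fraction.

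Next I would observe that this fraction is a convex combination of the variable weights $\{v_i\}_{i\in\mathcal{G}_g}$, with coefficients $|\hat\beta_i|/\|\hat\beta^{(g)}\|_1$ that are nonnegative and sum to one whenever $\hat\beta^{(g)}\neq\boldsymbol{0}$. Hence the fraction is sandwiched between $\min_{i\in\mathcal{G}_g} v_i$ and $\max_{i\in\mathcal{G}_g} v_i$. To handle the inactive case itself, I would take any sequence $\beta^{(g)}_{(n)}\to\boldsymbol{0}$: along a fixed direction $\beta^{(g)} = t\,u$ the fraction equals $\bigl(\sum_i v_i|u_i|\bigr)/\bigl(\sum_i |u_i|\bigr)$, independent of $t$, so the weighted average stays trapped in the compact interval $[\min_i v_i, \max_i v_i]$ for every such sequence. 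Consequently $\gamma_g$ always lies in the finite interval $[\alpha\min_i v_i + (1-\alpha)w_g\sqrt{p_g},\ \alpha\max_i v_i + (1-\alpha)w_g\sqrt{p_g}]$ and therefore exists.

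The main obstacle is the rigorous treatment of the $0/0$ term: the naive substitution $\beta^{(g)}=\boldsymbol{0}$ is undefined, and the limit as $\beta^{(g)}\to\boldsymbol{0}$ is genuinely direction-dependent, so one cannot assign $\gamma_g$ a single canonical limiting value. The point to make carefully is that the lemma asks only for existence in the sense of finiteness, and the convex-combination reformulation supplies exactly a uniform, direction-free bound, which is what delivers that finiteness.
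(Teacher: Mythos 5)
Your proof is correct, and it takes a genuinely different route from the paper's. The paper argues via L'H\^{o}pital's rule: it differentiates the numerator $\sum_{j\neq i} v_j|\beta_i|$ and the denominator $\|\beta^{(g)}\|_1$ coordinate-wise and concludes that the problematic middle term of $\gamma_g$ tends to $\frac{\alpha(p_g-1)}{p_g}\sum_{i=1}^{p_g}v_i$, i.e.\ that $\gamma_g$ has the specific limiting value $\frac{\alpha}{p_g}\|v^{(g)}\|_1+(1-\alpha)w_g\sqrt{p_g}$. Your algebraic collapse,
\begin{equation*}
\gamma_g=\alpha\,\frac{\sum_{i\in\mathcal{G}_g}v_i|\hat\beta_i|}{\|\hat\beta^{(g)}\|_1}+(1-\alpha)w_g\sqrt{p_g},
\end{equation*}
explains exactly where that value comes from: it is the value of your convex combination along the diagonal path $\beta^{(g)}=t\mathbf{1}$ (the unweighted average $\frac{1}{p_g}\|v^{(g)}\|_1$), which is implicitly the path the paper's L'H\^{o}pital computation follows. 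Your observation that the limit is direction-dependent --- along $\beta^{(g)}=tu$ the fraction equals $\bigl(\sum_i v_i|u_i|\bigr)/\bigl(\sum_i|u_i|\bigr)$ --- is therefore not a side remark: it shows that the multivariate limit asserted in the paper does not actually exist whenever the $v_i$ differ within a group, so your boundedness argument is the more rigorous of the two. What each approach buys: the paper's heuristic computation pins down a single canonical finite value to assign to $\gamma_g$ at an inactive group, which is what the method needs downstream (the path-start discussion in Section \ref{appendix:adap_sgl_path} invokes precisely this ``exists in limit'' reading); your argument delivers existence in the sense of uniform finiteness, trapping $\gamma_g$ in $[\alpha\min_i v_i+(1-\alpha)w_g\sqrt{p_g},\;\alpha\max_i v_i+(1-\alpha)w_g\sqrt{p_g}]$ for every nonzero $\beta^{(g)}$ and every limit point, but deliberately declines to select a value --- so to use $\gamma_g$ in practice at an inactive group you would still need to fix a convention (e.g.\ the diagonal limit, recovering the paper's formula). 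As a bonus, your reduced form makes Lemma \ref{lemma:asgl_to_sgl} immediate, since under $v\equiv\mathbf{1}$ the convex combination is identically $1$.
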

\begin{lemma}\label{lemma:asgl_to_sgl}
    Under $v\equiv \mathbf{1}$ and $w \equiv  \mathbf{1}$ in Equation \ref{eqn:adap_sgl_as_e_norm}, for each $g\in[m]$, $\gamma_g = \tau_g$ and $\epsilon'_g = \epsilon_g$.
\end{lemma}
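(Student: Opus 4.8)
The plan is to substitute the constant weights $v\equiv\mathbf{1}$ and $w\equiv\mathbf{1}$ directly into the definitions of $\gamma_g$ and $\epsilon'_g$ from Equation~\ref{eqn:adap_sgl_as_e_norm} and show that the adaptive correction term collapses, leaving exactly the SGL quantities. First I would record the two immediate simplifications: with $v_i=1$ for all $i$, the weighted group term becomes $\|v^{(g)}\|_1=\sum_{i\in\mathcal{G}_g}1=p_g$, and $w_g=1$.

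The only nontrivial step is evaluating the double sum $\sum_{i,j\in\mathcal{G}_g,\,i\neq j} v_j|\hat\beta_i|$ appearing in $\gamma_g$. With $v_j=1$, for each fixed $i$ there are exactly $p_g-1$ indices $j\in\mathcal{G}_g$ with $j\neq i$, so the inner summation over $j$ contributes a factor $p_g-1$, giving $\sum_{i,j\in\mathcal{G}_g,\,i\neq j}|\hat\beta_i| = (p_g-1)\sum_{i\in\mathcal{G}_g}|\hat\beta_i| = (p_g-1)\|\hat\beta^{(g)}\|_1$. Dividing by $\|\hat\beta^{(g)}\|_1$ then cancels the dependence on $\hat\beta^{(g)}$ entirely, leaving the subtracted term equal to $\alpha(p_g-1)$.

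Combining these, I would compute $\gamma_g = \alpha p_g - \alpha(p_g-1) + (1-\alpha)\sqrt{p_g} = \alpha + (1-\alpha)\sqrt{p_g}$, which is precisely $\tau_g$. Substituting $\gamma_g=\tau_g$ and $w_g=1$ into the definition of $\epsilon'_g$ gives $\epsilon'_g = \tau_g^{-1}(1-\alpha)\sqrt{p_g}$; since $\tau_g-\alpha=(1-\alpha)\sqrt{p_g}$ by the definition of $\tau_g$, this equals $(\tau_g-\alpha)/\tau_g=\epsilon_g$ as defined in Equation~\ref{defn:e-norm}, completing the argument.

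The main obstacle, and the one point genuinely requiring care, is that the cancellation involves dividing by $\|\hat\beta^{(g)}\|_1$, which vanishes for an inactive group. For active groups the computation above is immediate; for inactive groups one must appeal to Lemma~\ref{lemma:adap_gamma_g}, which guarantees that $\gamma_g$ remains well defined, so that the identity $\gamma_g=\tau_g$ extends to that case as well. Apart from this bookkeeping, the proof reduces to a direct substitution together with the counting argument for the double sum.
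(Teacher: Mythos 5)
Your proposal is correct and follows essentially the same route as the paper's proof: substitute the constant weights, count that the cross sum $\sum_{i,j\in\mathcal{G}_g,\,i\neq j} v_j|\beta_i|$ equals $(p_g-1)\|\beta^{(g)}\|_1$, cancel to get $\gamma_g=\tau_g$, and then read off $\epsilon'_g=\epsilon_g$. Your additional remark invoking Lemma~\ref{lemma:adap_gamma_g} to cover inactive groups (where $\|\hat\beta^{(g)}\|_1=0$) is a point of care the paper's proof leaves implicit, and it checks out since the limit there equals $\alpha(p_g-1)$ under $v\equiv\mathbf{1}$.
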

Algorithm \ref{alg:sgs_framework} is also applicable for aSGL, using the corresponding aSGL equations (Algorithm \ref{alg:adap_sgl_framework}).

\subsubsection{KKT checks}\label{appendix:adap_sgl_kkt_checks}
The KKT checks for aSGL are also similar to those for SGL (Section \ref{section:kkt_checks}): a KKT violation occurs for a variable $i\in\mathcal{G}_g$ if
\begin{equation}\label{eqn:adap_sgl_kkt_checks}
    |S(\nabla_i f(\hat\beta(\lambda_{k+1})), \lambda_{k+1} (1-\alpha)w_g\sqrt{p_g})| > \lambda_{k+1}  v_i\alpha.
\end{equation}
\subsubsection{Path start}\label{appendix:adap_sgl_path}
To find the path start for aSGL, the dual norm cannot be used, since all groups are zero at this point. As a result, $\gamma_g$ exists only in limit for $\beta^{(g)} \equiv \boldsymbol{0}$ (Lemma \ref{lemma:adap_gamma_g}). A solution can instead be found using a similar approach to that of \citet{Simon2013} for SGL, where the point is found by solving the piecewise quadratic, for each $g \in [m]$,
\begin{equation*}
    \left\|S\left(X^{(g) \top}y/n, \lambda_g v^{(g)}\alpha\right)\right\|_2^2-p_gw_g^2(1-\alpha)^2 \lambda_g^2 = 0,
\end{equation*}
where $X^{(g)} \in \mathbb{R}^{n\times p_g}$ is the design matrix for only group $g$ and $v^{(g)} \in \mathbb{R}^{p_g}$ contains the penalty weights for the variables in group $g$. Then, choosing $\lambda_1 = \max_g \lambda_g$ gives the path start point.
\section{Numerical Results} \label{sec:results}
In this section, the efficiency and robustness of DFR is evaluated through the analysis of synthetic data that capture different data characteristics. As the purpose of screening rules is to reduce the dimensionality of the input space and, as a result, reduce the computational cost, the following two metrics are used for evaluation:
\begin{figure}[t]
  \centering
    \includegraphics[width=.845\columnwidth]{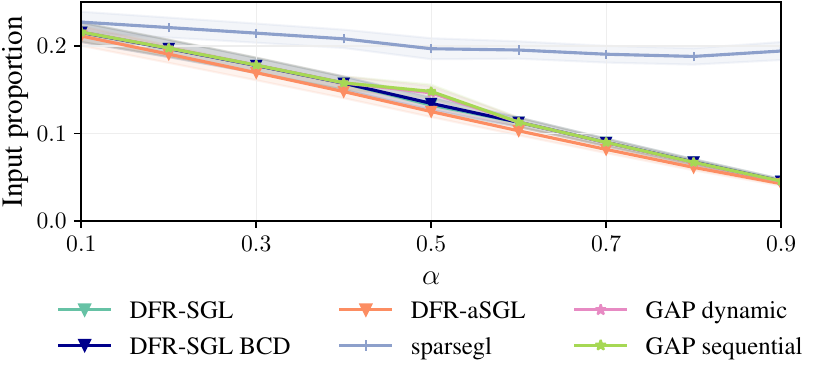}
  \caption{The input proportion for the screening methods applied to synthetic data, as a function of $\alpha$, with $95\%$ confidence intervals.}
  \label{fig:figure-group-3-ip}
\end{figure}

\begin{itemize}
\item \textit{Improvement factor = no screen time / screen time}, quantifies by how many orders the screening has improved the fitting time (high is best).  
\item \textit{Input proportion = $\mathcal{O}_v/p$}, measures how much of the input space was used in the optimization (low is best).
\end{itemize}
DFR is compared with the existing SGL screening rules sparsegl \citep{Liang2022Sparsegl:Lasso} and GAP safe \citep{Ndiaye2016GAPLasso}. sparsegl, in contrast to DFR, performs only a single layer of group screening. %\citet{Liang2022Sparsegl:Lasso} proposed the strong rule sparsegl, which in contrast to DFR, only performs a single layer of group screening. 
This rule is also based on the strong framework of \citet{tibshirani2010strong}, but uses a different Lipschitz assumption, which applies only the $\ell_2$ group norm, rather than the full SGL norm (as is the case for DFR). On the other hand, GAP safe is an exact feature reduction method for SGL that can be implemented dynamically or sequentially under linear regression \citep{Ndiaye2016GAPLasso}. GAP safe has many different implementation forms, and we are presenting the two versions that provided the best results in our studies. Appendix \ref{appendix:other_approaches} provides detailed descriptions of these two methods with Table \ref{table:summary} showing a summary of all rules considered.

%As with any safe rule, a safe region is constructed that contains the optimal dual solution.% in \citet{Ndiaye2016GAPLasso} it is taken as a sphere, but other regions can also be used. %GAP safe can be implemented dynamically or sequentially. 

 %Under $\alpha=\{0,1\}$, the DFR rules reduce to those of the (adaptive) group lasso and (adaptive) lasso respectively (Appendix \ref{appendix:lasso_glasso}).

Throughout the analyses, the SGL optimization for DFR and sparsegl is performed using the Adaptive Three Operator Splitting (ATOS) \citep{Pedregosa2018AdaptiveSplitting} algorithm, as it can be easily adapted for use with different sparse-group penalties (by simply swapping out the proximal operators), providing flexibility. However, DFR can be used with any fitting algorithm, including Block Coordinate Descent (BCD) \citep{Qin2013}. 
\begin{figure}[t]
  \centering
    \includegraphics[width=\columnwidth]{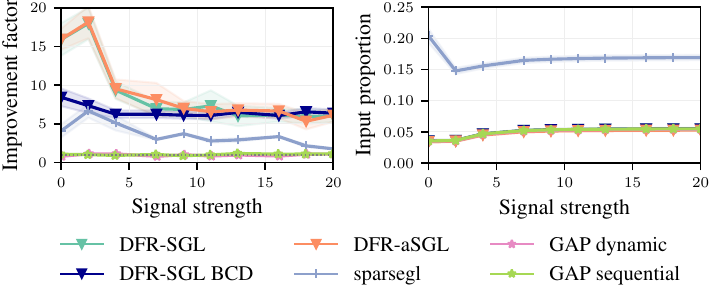}
  \caption{The improvement factor (left) and input proportion (right) for the screening methods applied to synthetic data, as a function of the signal strength, with $95\%$ confidence intervals. 
  }
  \label{fig:figure-group-3-i-ip-if}
\end{figure}
\subsection{Analysis of Synthetic Data}\label{section:results_synthetic}
The synthetic data was generated using a linear model, $y=\mathbf{X}\beta + \epsilon,$ where $\mathbf{X}\sim\mathcal{N}(\mathbf{0},\boldsymbol{\Sigma})\in \mathbb{R}^{200 \times 1000}$, with noise $\epsilon\sim\mathcal{N}(0,1)$ and where $\beta$ is a sparse vector with the signal sampled from $\mathcal{N}(0,4)$ (signal strength of zero). For $\mathbf{X}$, correlation was applied inside each group, such that $\Sigma_{i,j} = \rho = 0.3$ for $i$ and $j$ in the same group. The variables were placed in even groups of sizes $20$, with $0.2$ group sparsity proportion ($20\%$ of the groups are active) and $0.2$ variable sparsity proportion inside the active groups. The models were fit along a $50$-length path, starting at $\lambda_1 = \lambda_\text{max}$ (as defined by Section \ref{appendix:sgl_path_start} for SGL and Section \ref{appendix:adap_sgl_path} for aSGL), and terminating at $0.1\lambda_1$. Each simulation case was repeated $100$ times and the results are averaged across these repetitions, unless otherwise stated. Simulation and model implementation information can be found in Table \ref{appendix:set_up_info}.

\paragraph{Comparison to GAP Safe Rules} Comparing DFR to the GAP safe rules, under both varying $\alpha$ and signal strength, it is evident that the improvement factor is significantly superior for DFR compared to both the dynamic and sequential GAP rules (Figures \ref{fig:figure-group-3-if} and \ref{fig:figure-group-3-i-ip-if}). In fact, although the input proportion of DFR and GAP safe are of similar levels (Figures \ref{fig:figure-group-3-ip} and \ref{fig:figure-group-3-i-ip-if}), the cost of calculating safe regions appears to nullify any gain in dimensionality reduction. This comparison shows that the two reduction approaches (heuristic vs exact) arrive at very similar results (the screened sets), but DFR achieves this with greater computational efficiency.

For all values of $\alpha$, DFR considerably reduces the input space, with the screening efficiency a linearly decreasing function (Figure \ref{fig:figure-group-3-ip}). Under values of $\alpha$ close to zero, SGL is forced to pick more variables within a group as active, limiting the potential reduction of the input space. In such scenarios, the second screening layer becomes less crucial, evidenced by the similar performances of all approaches. Approaching the commonly used value of $\alpha=0.95$ shows the clear strengths of DFR. The screening methods are all relatively unaffected by the signal strength (Figure \ref{fig:figure-group-3-i-ip-if}).
\begin{figure}[t]
  \centering
    \includegraphics[width=\columnwidth,valign=t]{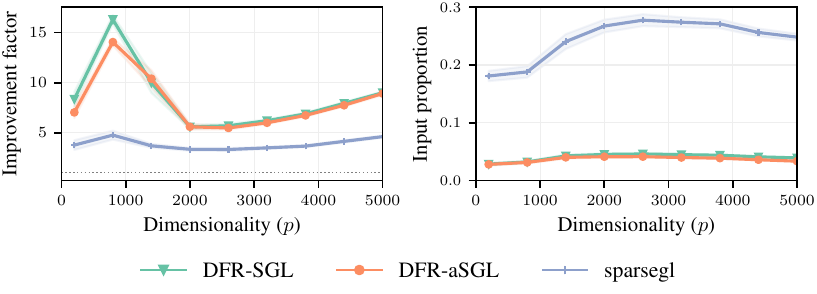}
  \caption{The improvement factor (left) and input proportion (right) for the strong rules applied to synthetic data, as a function of $p$, with $95\%$ confidence intervals. %The input proportion is shown in Figure \ref{fig:appendix_case_2_input_propn}.
  }
  \label{fig:figure-group-4-ip-if}
\end{figure}

\vspace{-12pt}
\begin{figure}[t]
  \centering    \includegraphics[width=\columnwidth,valign=t]{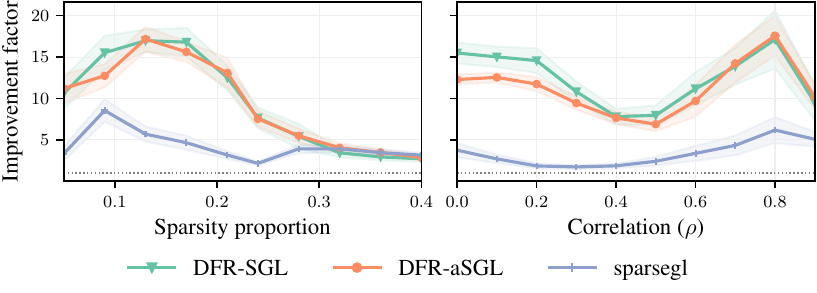}
  \caption{The improvement factor for the strong rules applied to synthetic data, as a function of the sparsity proportion (left) and data correlation (right), with $95\%$ confidence intervals. %The input proportion is shown in Figure \ref{fig:appendix_case_2_input_propn}.
  }
  \label{fig:figure-group-2-if}
\end{figure}
\paragraph{Comparison to BCD} The GAP safe rules are implemented using BCD while DFR uses ATOS. In Figures \ref{fig:figure-group-3-if} and \ref{fig:figure-group-3-i-ip-if}, DFR has also been implemented using BCD, demonstrating similar performance to ATOS, showing that the choice of fitting algorithm has little impact on the gain from screening.

\begin{table}[b]
  \centering
  \captionof{table}{The improvement factor for the strong rules applied to synthetic interaction data under the linear model, with standard errors. The parameters of the data were set as $p=400,n=80$, and $m=52$ groups of sizes in $[3,15]$. The interaction input dimensionality was $p_{O_2} = 2111$ and $p_{O_3}=7338$, with no interaction hierarchy imposed. The sparsity proportion of interaction variables was set to $0.3$ (with the same signal as the marginal effects). %normal variables).
  }
\label{tbl:results_inter_cv}
  \begin{tabular}{lrr}
    \toprule
     & \multicolumn{2}{c}{Interaction}   \\
    \cmidrule(lr){2-3}
                            Method & Order 2 & Order 3    \\
    \midrule
    DFR-aSGL                     &  $137.3\pm 12.0$ & $54.0\pm 10.7$   \\
    DFR-SGL                 & $44.3 \pm 2.4$  & $23.6\pm 3.1$     \\
    sparsegl                     & $7.4\pm 0.9$      & $1.2\pm 0.3$   \\
    \bottomrule
  \end{tabular}
\end{table}

%In Figure \ref{fig:case_5}, a spike is observed around $p=800$. In this case, the groups were fixed at size $20$, and so for each value of $p$ the group sizes, as a proportion of the input, are different. It is possible that $p=800$ represents a \textit{sweet spot} where the grouping structure favours bi-level screening. For small $p$, the grouping structure would dominate, so that discarding one of the few discrete chunks of groups would have a large influence on computational savings. On the other hand, as becomes large, the grouping structure becomes more irrelevant, as there are many small groups.

%\vspace{-0.5cm}
\paragraph{Increasing Dimensionality} The benefits of DFR over sparsegl are observed under varying $p$ (Figure \ref{fig:figure-group-4-ip-if}), peaking around $p = 1000$. With groups of fixed sizes of $20$, their relative size proportion to $p$ changes, suggesting an optimal grouping regime around $p=1000$ for screening. For small $p$, few large groups dominate, limiting the potential to screen out groups, while for large $p$, many small groups reduce the impact of group screening.

Additional explorations of increased dimensionality included the analysis of interaction terms (Table \ref{tbl:results_inter_cv} and Appendix \ref{appendix:interactions}). In this setting, all possible interactions of order 2 and 3 within each group were included. DFR provides large computational savings when fitting interactions, especially compared to sparsegl, which under order 3 interactions provides only marginal improvements. These savings make it more feasible for sparse-group models to be used in interaction detection problems. Such challenges are frequently seen in the field of genetics, where gene-gene and gene-environment relationships are useful discoveries \citep{DAngelo2009CombiningStudies,Zemlianskaia2022AInteractions}. 

%In these data with a $0.3$ active proportion (same signal as normal variables), resulting in an input dimensionality of $p_{O_2} = 2111, p_{O_3}=7338$ for orders 2 and 3. No interaction hierarchy is imposed in the data generation or model fitting.  

%Cross-validation (CV) is an important tool for tuning $\lambda$. However, due to its cost, $\alpha$ is often set manually, rather than included in a grid optimization scheme. Using DFR with 10-fold CV yielded computationally gains (Table \ref{tbl:results_cv}) that enable future tuning schemes for SGL to consider both $\alpha$ and $\lambda$, and aSGL to include the weight hyperparameters $\gamma_1,\gamma_2$.
\begin{figure}[t]
  \centering
    \includegraphics[width=\columnwidth,valign=t]{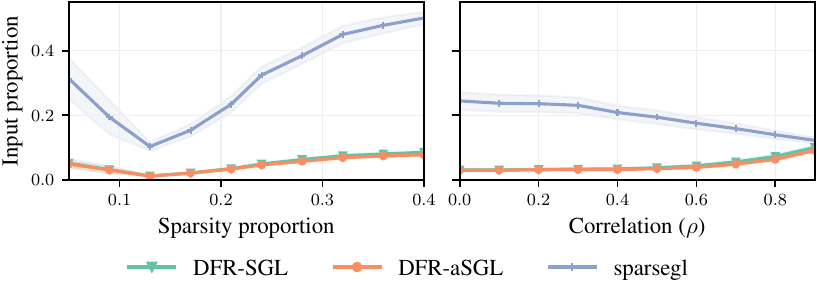}
  \caption{The input proportion for the strong rules applied to synthetic data, as a function of the sparsity proportion (left) and data correlation (right), with $95\%$ confidence intervals. %The input proportion is shown in Figure \ref{fig:appendix_case_2_input_propn}.
  }
  \label{fig:figure-group-2-ip}
\end{figure}
\begin{figure}[t]
  \centering
    \includegraphics[width=1\columnwidth, valign=t]{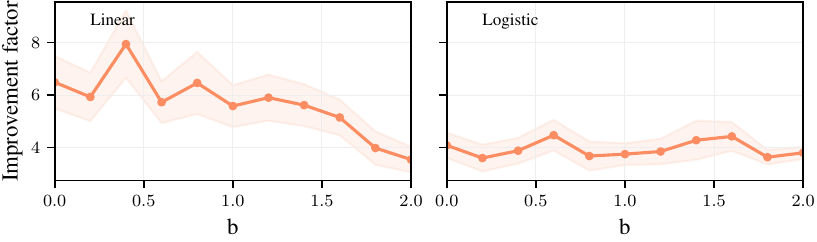}
  \caption{The improvement factor of DFR-aSGL under different weights $b_1=b_2$, shown for the linear (left) and logistic (right) models, with $95\%$ confidence intervals.}
  \label{fig:appendix_case_6}
\end{figure}
\paragraph{Robustness} For the remainder of the section, 
the variables were placed in $m=22$ uneven groups of sizes in $[3,100]$ to gain additional insights into the robustness of DFR. Earlier, DFR was found to be robust under varying signal strength and $\alpha$. We further observe that DFR is also robust to the data-generating parameters of signal sparsity (variable and group sparsity proportions varied together) and group correlation in $\mathbf{X}$ (Figures \ref{fig:figure-group-2-if} and \ref{fig:figure-group-2-ip}).

A clear benefit of DFR over sparsegl is observed under sparse signals. Screening rules generally have a greater impact as the signal becomes sparser. However, once the signal saturates, their effectiveness declines, leading to similar performance across approaches. Under varying correlation, DFR is more successful at reducing the input space when compared to sparsegl, especially under minor correlation. Under higher correlation, the models become less sparse, again resulting in reduced screening importance. 

Similar to the case with even groups, for uneven groups, DFR remains relatively unaffected by the signal strength and the choice of $\alpha$, consistently achieving effective reduction. (Figures \ref{fig:figure-group-6-if} and \ref{fig:figure-group-6-ip}). 

\paragraph{Hyperparameter Tuning and Cross-Validation}
The performance of DFR-aSGL was found to be robust under different values of hyperparameters $b_1$ and $b_2$ (Figure \ref{fig:appendix_case_6}), which are used to define the adaptive weights (Appendix \ref{appendix:adap_sgl_weights}).

The efficiency and robustness of DFR across different hyperparameters ($\alpha, b_1, b_2$) make it a promising tool for enabling approaches like cross-validation (CV) to tune all SGL and aSGL hyperparameters, which is rarely done in practice. Applying DFR with CV yields substantial computational savings (Table \ref{tbl:results_cv}). These findings highlight DFR’s value in facilitating expanded tuning regimes for SGL and aSGL.
\begin{figure}[t]
  \centering
    \includegraphics[width=\columnwidth]{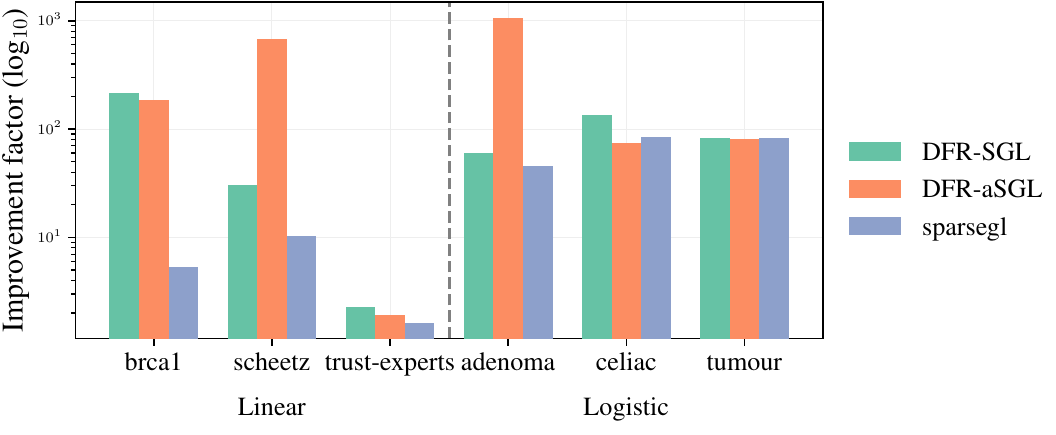}
  \caption{The improvement factor ($\log_{10}$ scale) of the strong rules applied to the six real datasets, split by model type. %All datasets, apart from \textit{trust-experts}, are high-dimensional.%Dataset information is given in Table \ref{tbl:appendix_real_dataset_info}
  }
  \label{fig:real_data_bar_standardized}
\end{figure}
\begin{figure}[t]
  \centering
    \includegraphics[width=\columnwidth]{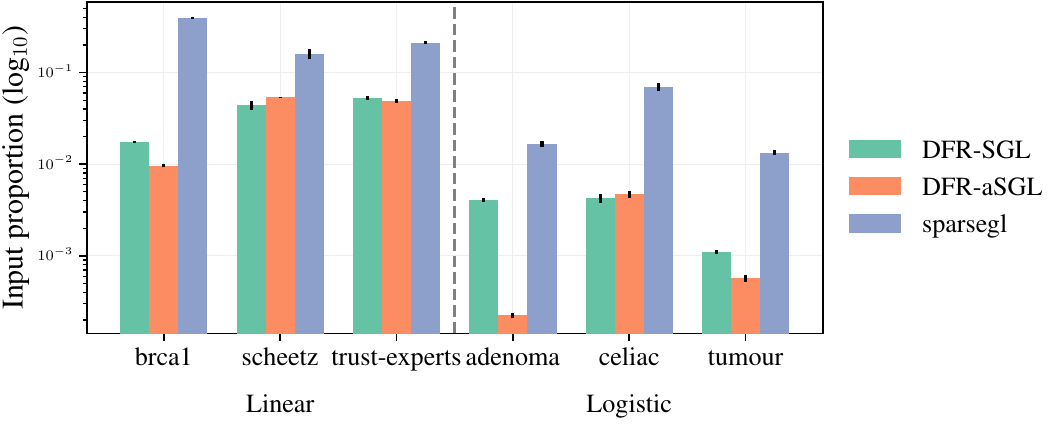}
  \caption{The input proportion ($\log_{10}$ scale) of the strong rules applied to the six real datasets, split by model type.}
  \label{fig:appendix_real_data_input_propn}
\end{figure}
\paragraph{Logistic Model}
DFR is also effective and robust for logistic models (Figures \ref{fig:figure-group-2-log-if}, \ref{fig:figure-group-2-log-ip}, \ref{fig:figure-group-6-log-if}, and \ref{fig:figure-group-6-log-ip}; see Appendix \ref{appendix:results_log} for further results and description of the data generation).

%The effectiveness of DFR is independence of the model used for generating the data, whether linear or logistic (Appendix \ref{appendix:results_log}).
%Investigations with data generated under a logistic model, showcase the effectiveness of DFR for such  data (Appendix \ref{appendix:results_log}). 

%The experiments were also performed under a logistic model, where similar performance results were observed, suggesting that the proposed DFR approaches are suitable for both linear and logistic regression modes (see Appendix \ref{appendix:results_log}).

\paragraph{KKT Violations}
KKT violations for DFR are very rare. Across all experiments with linear models, DFR-SGL had only a single KKT violation (Table \ref{tbl:appendix_other_sims_var}). Violations were more common for DFR-aSGL and sparsegl, but still infrequent. Note that DFR-aSGL violations refer to variable ones, and sparsegl to group ones, making it more likely to have a variable violation. The elevated number of KKT violations for sparsegl suggests that the group Lipschitz assumption of DFR-SGL is more robust.

One possible explanation for the increased KKT violations of DFR-aSGL lies in the role of the Lipschitz assumptions. Unlike SGL, the adaptive penalties in aSGL introduce additional dependencies on hyperparameters into the Lipschitz assumptions. This extra dependence on hyperparameters has the potential to lead to violations.
\begin{figure}[t]
  \centering
    \includegraphics[width=\columnwidth]{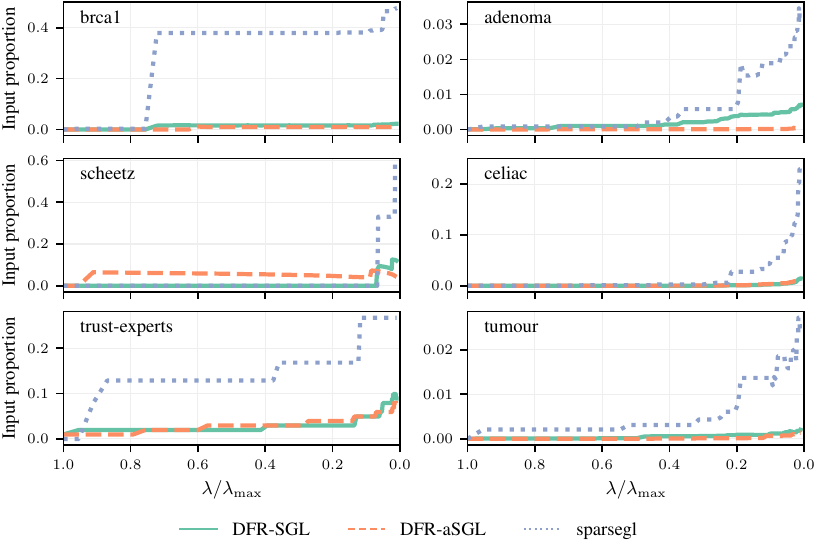}
  \caption{The input proportion as a function of the shrinkage path for the strong rules applied to the real datasets.  %For the other real datasets, see Figure \ref{fig:appendix_real_data_pathwise_other}.
  }
  \label{fig:real_data_pathwise}
\end{figure}
\section{Real Data Analysis}\label{section:real_data_results}
The efficiency of DFR is further evaluated through the analysis of six real datasets with different characteristics, including response type and dimensionality. Three of the datasets, \textit{brca1}, \textit{scheetz}, and \textit{trust-experts}, have continuous responses, so are fit using an SGL linear model. The former two were also analyzed with regards to screening rules in \citet{Larsson2024TheRule}, and the latter in \citet{Liang2022Sparsegl:Lasso}. 

The other three datasets, \textit{adenoma}, \textit{celiac}, and \textit{tumour}, have binary responses, so an SGL logistic model is used. The \textit{trust-experts} dataset is low-dimensional, and the other five are high-dimensional. The models were fit along a $100$-length path, terminating at $0.2\lambda_1$, where $\lambda_1$ generates the null model. More information on the datasets and model implementation is provided in Table \ref{appendix:set_up_info} and Appendix \ref{appendix:realdata}.

For all datasets, DFR outperforms sparsegl at reducing the computational cost (Figure \ref{fig:real_data_bar_standardized}) and input dimension (Figure \ref{fig:appendix_real_data_input_propn}), as well as keeping the input proportion low along the whole path (Figure \ref{fig:real_data_pathwise}). Despite being most useful for high-dimensional data, even in the case of low-dimensional data (\textit{trust-experts}), DFR improves fitting time. 

DFR-aSGL performs very well for \textit{scheetz} and \textit{adenoma}, improving the computational cost by over $600$ times. For the \textit{scheetz} dataset, the aSGL model had more difficulty converging without screening compared to SGL, so DFR-aSGL offered a greater advantage over DFR-SGL. For \textit{adenoma}, the active set for aSGL was smaller (Table \ref{tbl:appendix_real_data_var}), due to the increased penalization that comes with the adaptivity. However, despite the advantage of a smaller active set, we do still observe that DFR-aSGL was more efficient at reducing the optimization set, \textit{with respect to the active set}.

DFR is observed to aid in mitigating convergence issues for both SGL and aSGL (Table \ref{tbl:appendix_real_data_other}). Across all datasets, DFR encountered no failed convergences. In contrast, sparsegl did not converge at several path points for both \textit{adenoma} and \textit{scheetz}. As sparsegl only screens groups, when a group enters the optimization set, sparsegl is forced to fit with the full group, which can contain noise variables. Applying no screening led to SGL not converging for \textit{adenoma}, \textit{scheetz}, and \textit{tumour}. By drastically reducing the input space, convergence issues caused by large datasets are resolved, improving both computational cost and solution optimality.

\section{Discussion}\label{section:discussion}
A novel feature reduction method for the sparse-group lasso and adaptive sparse-group lasso, called \textit{Dual Feature Reduction} (DFR), has been introduced, derived using the dual norms of SGL and aSGL. DFR introduces the first bi-level strong screening rules for SGL and the first screening rules for aSGL. By applying two layers of reduction, DFR effectively reduces input dimensionality for optimisation and is computationally simpler than the GAP safe rules, which require iterative screening and fitting. In contrast, DFR screens only once per path point, so that it adds minimal computational overhead.

DFR first applies group-level screening, discarding inactive groups, followed by variable-level reduction, where inactive variables in active groups are removed. By discarding variables that are inactive at the optimal solution, DFR achieves significant computational savings, enabling the SGL family of models to scale more efficiently with increasing dimensionality and handle larger, more complex datasets. This gain comes at no cost, as the optimal solution is still achieved (Appendices \ref{appendix:lin_add_results}, \ref{appendix:results_log}, and \ref{appendix:real_add_results}). In fact, by reducing the input, instances were observed where DFR helped SGL and aSGL overcome convergence issues.

DFR proved robust across different data and model parameters, achieving drastic feature reduction under all scenarios considered. This consistently translated into large computational savings across both synthetic and real data. DFR outperformed all other screening approaches, establishing it as the state-of-the-art screening method for SGL and highlighting the benefit of bi-level screening.
%Only a single KKT violation was observed for DFR-SGL, while DFR-aSGL had only a negligible amount. 

\paragraph{Limitations}
Several assumptions are required to perform two layers of feature reduction for DFR. Propositions \ref{propn:sgl_grp_screen} and \ref{propn:sgl_var_screen} use Lipschitz assumptions which are consistent with the strong framework \citep{tibshirani2010strong}. Any breach of assumptions is guarded against by KKT checks. Only a single KKT violation occurred for SGL across all our simulations and only very infrequently for aSGL. These assumptions are a limitation of any strong rule, although DFR carries additional assumptions over other strong rules, which are necessary for the second layer of screening. 

\paragraph{Code} DFR is implemented in the \texttt{dfr} R package \citep{dfr-r-package}, available on CRAN.

\section*{Impact Statement} 
This paper aims to advance Machine Learning while ensuring no disadvantage to anyone. The proposed screening rules do not alter the solution but improve the accessibility of SGL and aSGL for researchers with limited computational resources.

\section*{Acknowledgements}
We would like to thank the anonymous reviewers for
their valuable comments. This work was supported by the Engineering and Physical Sciences Research Council (EPSRC) through the Modern Statistics and Statistical Machine Learning (StatML) CDT programme, grant no. EP/S023151/1.

\bibliography{references_new}
\bibliographystyle{icml2025}

%%%%%%%%%%%%%%%%%%%%%%%%%%%%%%%%%%%%%%%%%%%%%%%%%%%%%%%%%%%%%%%%%%%%%%%%%%%%%%%
%%%%%%%%%%%%%%%%%%%%%%%%%%%%%%%%%%%%%%%%%%%%%%%%%%%%%%%%%%%%%%%%%%%%%%%%%%%%%%%
% APPENDIX
%%%%%%%%%%%%%%%%%%%%%%%%%%%%%%%%%%%%%%%%%%%%%%%%%%%%%%%%%%%%%%%%%%%%%%%%%%%%%%%
%%%%%%%%%%%%%%%%%%%%%%%%%%%%%%%%%%%%%%%%%%%%%%%%%%%%%%%%%%%%%%%%%%%%%%%%%%%%%%%
\newpage
\appendix
\onecolumn
%\part{Appendix} % Start the appendix part
\renewcommand\thefigure{A\arabic{figure}}
\setcounter{figure}{0} 
\setcounter{table}{0}
\renewcommand{\thetable}{A\arabic{table}}
\setcounter{algorithm}{0}
\renewcommand{\thealgorithm}{A\arabic{algorithm}}

\icmltitle{Dual Feature Reduction for the Sparse-group Lasso and its Adaptive Variant: \\
Supplementary Materials}
\section{Sparse-group Lasso}
\subsection{Theory} \label{appendix:sgl_theory}
\subsubsection{Group reduction}\label{appendix:sgl_grp_theory}
\begin{proof}[Proof of Proposition \ref{propn:sgl_grp_screen_theoretical}]
To prove the two sets are equivalent, we need to prove that for any $g\in[m]$ and $k \in [l-1]$, $g\in \mathcal{A}_g(\lambda_{k+1}) \iff g \in \mathcal{C}_g(\lambda_{k+1})$. We instead prove the contrapositive:   $g \notin \mathcal{C}_g(\lambda_{k+1}) \iff g \notin \mathcal{A}_g(\lambda_{k+1}).$ So,
\begin{alignat*}{2}
    g \notin \mathcal{C}_g(\lambda_{k+1}) &\iff \|\nabla_g f(\hat\beta(\lambda_{k+1}))\|_{\epsilon_g} \leq \tau_g \lambda_{k+1}, \;\;&&\text{by definition of the candidate set}\\
    &\iff -\nabla_g f(\hat\beta(\lambda_{k+1}))  \in \tau_g\lambda_{k+1}\Theta^0_{g,k+1},\;\; &&\text{as}\; \Theta^0_{g,k+1} = \left\{x\in\mathbb{R}^{p_g}: \|x\|_{\epsilon_g} \leq 1 \right\}\\
    &\iff \mathbf{0} \in \nabla_g f(\hat\beta(\lambda_{k+1}))  + \tau_g\lambda_{k+1}\Theta^0_{g,k+1}\\
    &\iff g\notin \mathcal{A}_g(\lambda_{k+1}), \;\; &&\text{by the KKT conditions (Equation \ref{eqn:sgl_kkt})}.
\end{alignat*}
\end{proof}

\begin{proof}[Proof of Proposition \ref{propn:sgl_grp_screen}]
To prove the candidate set is a superset of the active set, we need to prove that for any $g\in[m]$ and $k \in [l-1]$, $g\in \mathcal{A}_g(\lambda_{k+1}) \implies g \in \mathcal{C}_g(\lambda_{k+1})$. We instead prove the contrapositive:   $g \notin \mathcal{C}_g(\lambda_{k+1}) \implies g \notin \mathcal{A}_g(\lambda_{k+1}).$ First, we rewrite the Lipschitz assumption as (using the reverse triangle inequality)
\begin{align}
    \|\nabla_g f(\hat\beta(\lambda_{k+1}))\|_{\epsilon_g} - \|\nabla_g f(\hat\beta(\lambda_k))\|_{\epsilon_g}  &\leq \|\nabla_g f(\hat\beta(\lambda_{k+1})) - \nabla_g f(\hat\beta(\lambda_k))\|_{\epsilon_g}\leq \tau_g|\lambda_{k+1} - \lambda_k|\nonumber \\ \implies  \|\nabla_g f(\hat\beta(\lambda_{k+1}))\|_{\epsilon_g} &\leq \|\nabla_g f(\hat\beta(\lambda_{k}))\|_{\epsilon_g} + \tau_g|\lambda_{k+1} - \lambda_k|. \label{eqn:appendix_propn_grp}
\end{align}
Now, as $g \notin \mathcal{C}_g(\lambda_{k+1})$, 
\begin{equation*}
    \|\nabla_g f(\hat\beta(\lambda_{k}))\|_{\epsilon_g} \leq \tau_g(2\lambda_{k+1} - \lambda_k).
\end{equation*}
Plugging this into Equation \ref{eqn:appendix_propn_grp} yields
\begin{alignat*}{2}
     &\|\nabla_g f(\hat\beta(\lambda_{k+1}))\|_{\epsilon_g} \leq \tau_g(2\lambda_{k+1} - \lambda_k) + \tau_g|\lambda_{k+1} - \lambda_k|\\
    \implies   &\|\nabla_g f(\hat\beta(\lambda_{k+1}))\|_{\epsilon_g} \leq \tau_g\lambda_{k+1}\\
    \implies &-\nabla_g f(\hat\beta(\lambda_{k+1}))  \in \tau_g\lambda_{k+1}\Theta^0_{g,k+1},\;\; &&\text{as}\; \Theta^0_{g,k+1} = \left\{x\in\mathbb{R}^{p_g}: \|x\|_{\epsilon_g} \leq 1 \right\}\\
    \implies &\mathbf{0} \in \nabla_g f(\hat\beta(\lambda_{k+1}))  + \tau_g\lambda_{k+1}\Theta^0_{g,k+1}\\
    \implies &g\notin \mathcal{A}_g(\lambda_{k+1}), \;\; &&\text{by the KKT conditions (Equation \ref{eqn:sgl_kkt})}.
\end{alignat*}
\end{proof}

\subsubsection{Variable reduction}\label{appendix:sgl_var_theory}
\begin{proposition}[Theoretical SGL variable screening]\label{propn:sgl_var_screen_theoretical}
    For SGL applied with any $\lambda_{k+1}, k\in [l-1]$, the candidate variable set, 
    \begin{equation*}
        \mathcal{C}_v(\lambda_{k+1}) = \{i\in \mathcal{G}_g \;\text{for}\; g\in\mathcal{A}_g(\lambda_{k+1}): |\nabla_i f(\hat\beta(\lambda_{k+1}))|> \lambda_{k+1}\alpha \},
    \end{equation*} 
    is such that $\mathcal{C}_v(\lambda_{k+1}) = \mathcal{A}_v(\lambda_{k+1})$.
\end{proposition}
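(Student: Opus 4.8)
The plan is to follow the template of the proof of Proposition \ref{propn:sgl_grp_screen_theoretical}: establish the set equality by proving $i \in \mathcal{A}_v(\lambda_{k+1}) \iff i \in \mathcal{C}_v(\lambda_{k+1})$, and as in that proof it is cleanest to argue the contrapositive $i \notin \mathcal{C}_v(\lambda_{k+1}) \iff i \notin \mathcal{A}_v(\lambda_{k+1})$. One preliminary reduction is needed, because $\mathcal{C}_v$ is indexed only over variables lying in active groups: if $\hat\beta_i(\lambda_{k+1}) \neq 0$ then $\|\hat\beta^{(g)}(\lambda_{k+1})\|_2 \neq 0$, so the group $g \ni i$ is active. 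Thus every active variable already lies in an active group, and restricting the index to $g \in \mathcal{A}_g(\lambda_{k+1})$ discards no element of $\mathcal{A}_v$; coordinates outside the active groups are automatically inactive and correctly excluded from both sets. I can then fix an active group and argue variable-by-variable within it.

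Working inside an active group $g \in \mathcal{A}_g(\lambda_{k+1})$, the tool is the variable-level stationarity condition (Equation \ref{eqn:sgl_var_kkt_cond}), together with the fact noted in Section \ref{section:sgl_variable_reduction} that on an active group the $\ell_2$ subgradient for coordinate $i$ equals $\hat\beta_i/\|\hat\beta^{(g)}\|_2$. For the direction $i \notin \mathcal{A}_v \Rightarrow i \notin \mathcal{C}_v$, I would set $\hat\beta_i = 0$, note that the $\ell_2$ contribution then vanishes, and read off $|\nabla_i f(\hat\beta(\lambda_{k+1}))| \leq \lambda_{k+1}\alpha$ exactly as in Equation \ref{eqn:sgl_kkt_var}; this is precisely the statement that $i \notin \mathcal{C}_v$.

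For the converse $i \notin \mathcal{C}_v \Rightarrow i \notin \mathcal{A}_v$, equivalently $i \in \mathcal{A}_v \Rightarrow i \in \mathcal{C}_v$, I would take $\hat\beta_i \neq 0$ and show the strict inequality $|\nabla_i f(\hat\beta(\lambda_{k+1}))| > \lambda_{k+1}\alpha$ must hold. When $\hat\beta_i \neq 0$ the $\ell_1$ subgradient collapses to $\text{sign}(\hat\beta_i)$ and the $\ell_2$ subgradient to $\hat\beta_i/\|\hat\beta^{(g)}\|_2$, so stationarity determines $\nabla_i f(\hat\beta(\lambda_{k+1}))$ exactly. Crucially, both penalty contributions carry the sign of $\hat\beta_i$, so on taking absolute values they add rather than cancel, yielding $|\nabla_i f(\hat\beta(\lambda_{k+1}))| = \lambda_{k+1}\alpha + \lambda_{k+1}(1-\alpha)\sqrt{p_g}\,|\hat\beta_i|/\|\hat\beta^{(g)}\|_2 > \lambda_{k+1}\alpha$. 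Hence an active coordinate always lands in $\mathcal{C}_v$, and contrapositively $i \notin \mathcal{C}_v$ forces $\hat\beta_i = 0$.

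I expect the converse step to be the main obstacle, since it is what upgrades the superset relation (the content of the practical Proposition \ref{propn:sgl_var_screen}) to an exact equality. The delicate point is the sign alignment between the $\ell_1$ and $\ell_2$ subgradients that guarantees the strict inequality for active variables; unlike the group-level argument, where the dual-norm unit ball cleanly absorbs the subgradient, here the signs must be tracked explicitly to rule out cancellation. A minor bookkeeping issue is keeping the $\sqrt{p_g}$ weight consistent with the group penalty in Equations \ref{eqn:sgl_problem_2} and \ref{eqn:sgl_final_kkt}; once the sign alignment is in hand, the remaining steps are immediate.
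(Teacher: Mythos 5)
Your proposal is correct and follows essentially the same route as the paper's proof: both argue the contrapositive within active groups using the variable-level KKT stationarity condition (Equation \ref{eqn:sgl_var_kkt_cond}), with the $\ell_2$ subgradient vanishing for an inactive variable in an active group. The only difference is that you make explicit the converse direction---an active variable forces $|\nabla_i f(\hat\beta(\lambda_{k+1}))| = \lambda_{k+1}\alpha + \lambda_{k+1}(1-\alpha)\sqrt{p_g}\,|\hat\beta_i|/\|\hat\beta^{(g)}\|_2 > \lambda_{k+1}\alpha$ by sign alignment of the $\ell_1$ and $\ell_2$ subgradients---whereas the paper compresses this into the single step ``$\iff i\notin \mathcal{A}_v(\lambda_{k+1})$, by the KKT conditions.''
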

\begin{proof}[Proof of Proposition \ref{propn:sgl_var_screen_theoretical}]
The proof strategy is similar to that of Proposition \ref{propn:sgl_grp_screen_theoretical}.
To prove the two sets are equivalent, we need to prove that for any $i\in \mathcal{G}_g$ such that $g\in\mathcal{A}_g$, and $k \in [l-1]$, $i\in \mathcal{A}_v(\lambda_{k+1}) \iff i\in \mathcal{C}_v(\lambda_{k+1})$. We instead prove the contrapositive: $i \notin \mathcal{C}_v(\lambda_{k+1}) \iff i \notin \mathcal{A}_v(\lambda_{k+1}).$ So,
\begin{alignat*}{2}
    i \notin \mathcal{C}_v(\lambda_{k+1}) &\iff 
    |\nabla_i f(\hat\beta(\lambda_{k+1}))|\leq  \lambda_{k+1}\alpha, \;\;&&\text{by definition of the candidate set}\\
    &\iff -\nabla_v f(\hat\beta(\lambda_{k+1}))  \in \lambda_{k+1}  \alpha \Phi_{i,k+1}^0,\;\; &&\text{as}\; \Phi^0_{i,k+1} = \left\{x\in\mathbb{R}: |x| \leq 1 \right\},\\
   & &&\text{for} \;\;i\in \mathcal{G}_g,g\in\mathcal{A}_g(\lambda_{k+1})\\
    &\iff \mathbf{0} \in \nabla_v f(\hat\beta(\lambda_{k+1}))  +  \lambda_{k+1}  \alpha \Phi_{i,k+1}^0\\
    &\iff i\notin \mathcal{A}_v(\lambda_{k+1}), \;\; &&\text{by the KKT conditions (Equation \ref{eqn:sgl_var_kkt_cond})}.
\end{alignat*}
\end{proof}
\begin{proof}[Proof of Proposition \ref{propn:sgl_var_screen}]
The proof strategy is similar to that of Proposition \ref{propn:sgl_grp_screen}. To prove the candidate set is a superset of the active set, we need to prove that for any $i\in \mathcal{G}_g$ such that $g\in\mathcal{A}_g$, and $k \in [l-1]$, $i\in \mathcal{A}_v(\lambda_{k+1}) \implies i\in \mathcal{C}_v(\lambda_{k+1})$. We instead prove the contrapositive: $i \notin \mathcal{C}_v(\lambda_{k+1}) \implies i \notin \mathcal{A}_v(\lambda_{k+1}).$ First, we rewrite the Lipschitz assumption as (using the reverse triangle inequality)
\begin{equation}\label{eqn:appendix_propn_var}
   |\nabla_i f(\hat\beta(\lambda_{k+1}))| \leq |\nabla_i f(\hat\beta(\lambda_{k}))| + \alpha|\lambda_{k+1} - \lambda_k|. 
\end{equation}
Now, as $i \notin \mathcal{C}_v(\lambda_{k+1})$, 
\begin{equation*}
    |\nabla_i f(\hat\beta(\lambda_{k}))| \leq \alpha(2\lambda_{k+1} - \lambda_k).
\end{equation*}
Plugging this into Equation \ref{eqn:appendix_propn_var} yields
\begin{alignat*}{2}
    &|\nabla_i f(\hat\beta(\lambda_{k+1}))| \leq \alpha\lambda_{k+1}\\
    \implies &-\nabla_i f(\hat\beta(\lambda_{k+1}))  \in \alpha\lambda_{k+1}\Phi^0_{i,k+1},\;\; &&\text{as}\; \Phi^0_{i,k+1} = \left\{x\in\mathbb{R}: |x| \leq 1 \right\}\\
    \implies &\mathbf{0} \in \nabla_i f(\hat\beta(\lambda_{k+1}))  + \alpha\lambda_{k+1}\Phi^0_{i,k+1}\\
    \implies &i\notin \mathcal{A}_v(\lambda_{k+1}), \;\; &&\text{by the KKT conditions (Equation \ref{eqn:sgl_kkt})}.
\end{alignat*}
\end{proof}
\subsection{KKT Checks}\label{appendix:sgl_kkt_checks}
To determine whether a variable $i\in\mathcal{G}_g$ has been correctly discarded, the KKT stationarity conditions are checked. Equation \ref{eqn:sgl_var_kkt_cond} describes the condition under which a variable $i \in \mathcal{G}_g$ is inactive. Without specifying whether the group $g$ is inactive, this can be rewritten as (by the definition of $\Phi^0_{i,k+1}$)
\begin{equation}\label{eqn:appendix_sgl_kkt_1}
    |\nabla_i f(\hat\beta(\lambda_{k+1}))+ \lambda_{k+1} (1-\alpha)\Psi^{(g)}_{i,k+1}| \leq \lambda_{k+1}  \alpha,
\end{equation}
where $\Psi^{(g)}_{k+1} = \{x\in \mathbb{R}^{\sqrt{p_g}}: \|x\|_2\leq 1\}$ is the subgradient of the $\ell_2$ norm. To satisfy Equation \ref{eqn:appendix_sgl_kkt_1}, the unknown subdifferential, $\Psi^{(g)}_{i,k+1}$, is taken to be the minimum possible value. For $x\in \Psi^{(g)}_{k+1}$, we have that
\begin{align*}
    \|x\|_2\leq 1 &\implies  \sqrt{p_g}\|x\|_2\leq \sqrt{p_g}\\
    &\implies  \|x\|_1\leq \sqrt{p_g} \;\; \text{by the inequality}\;\;\|x\|_1 \leq \sqrt{p_g}\|x\|_2\\
    &\implies |x_i|\leq \sqrt{p_g}.
\end{align*}
Hence, the values in the subdifferential are bounded by $\sqrt{p_g}$. We consider the following scenarios for Equation \ref{eqn:appendix_sgl_kkt_1}:
\begin{enumerate}
    \item $\nabla_i f(\hat\beta(\lambda_{k+1})) > \lambda_{k+1}(1-\alpha)\sqrt{p_g}$: choose $x_i = -\sqrt{p_g}$.
    \item  $\nabla_i f(\hat\beta(\lambda_{k+1})) < -\lambda_{k+1}(1-\alpha)\sqrt{p_g}$: choose $x_i = \sqrt{p_g}$.
    \item $\nabla_i f(\hat\beta(\lambda_{k+1})) \in [-\lambda_{k+1}(1-\alpha) \sqrt{p_g},\lambda_{k+1}(1-\alpha) \sqrt{p_g}]$: choose $y_i =\frac{\nabla_i f(\hat\beta(\lambda_{k+1}))}{\lambda_{k+1}(1-\alpha)\sqrt{p_g}}$.
\end{enumerate}
This allows Equation \ref{eqn:appendix_sgl_kkt_1} to be expressed using the soft-thresholding operator as
\begin{equation*}
      |S(\nabla_i f(\hat\beta(\lambda_{k+1})), \lambda_{k+1} (1-\alpha)\sqrt{p_g})| \leq \lambda_{k+1}  \alpha.
\end{equation*}
A similar derivation can be found in \citet{Simon2013} to derive conditions to check whether a group is active for SGL.
\subsection{Algorithm}\label{appendix:dfr-sgl-algorithm}
\begin{algorithm}[H]
   \caption{Dual Feature Reduction (DFR) for SGL}
   \label{alg:sgs_framework}
\begin{algorithmic}
   \STATE {\bfseries Input:} $(\lambda_1,\ldots,\lambda_l) \in \mathbb{R}^l$, $\mathbf{X}\in \mathbb{R}^{n\times p}, y\in \mathbb{R}^n, \alpha \in [0,1]$
\STATE compute $\hat\beta(\lambda_1)$ using Equation \ref{eqn:sgl_problem}
\FOR{$k=1$ {\bfseries to} $l-1$}
\STATE $\mathcal{C}_g(\lambda_{k+1})$ $\leftarrow$ candidate groups from Proposition \ref{propn:sgl_grp_screen}
\STATE $\mathcal{C}_v(\lambda_{k+1})$ $\leftarrow$ candidate variables from Proposition \ref{propn:sgl_var_screen} for $i \in\mathcal{G}_g \setminus \mathcal{A}_v(\lambda_k), g \in \mathcal{C}_g(\lambda_{k+1})$
\STATE $\mathcal{O}_v \leftarrow \mathcal{C}_v(\lambda_{k+1}) \cup \mathcal{A}_v(\lambda_{k})$\hfill $\blacktriangleright$ Optimization set
\STATE compute $\hat\beta_i(\lambda_{k+1}), i\in\mathcal{O}_v$, using Equation \ref{eqn:sgl_problem}
\STATE $\mathcal{K}_v \leftarrow$ variable KKT violations for $i\notin \mathcal{O}_v$, using Equation \ref{eqn:sgl_final_kkt} \hfill $\blacktriangleright$ KKT check
\WHILE{$\card(\mathcal{K}_v)> 0$}
\STATE $\mathcal{O}_v \leftarrow \mathcal{O}_v\cup \mathcal{K}_v$ \hfill $\blacktriangleright$ Optimization set
\STATE compute $\hat\beta_i(\lambda_{k+1}),i\in\mathcal{O}_v$, using Equation \ref{eqn:sgl_problem}
\STATE $\mathcal{K}_v \leftarrow$ variable KKT violations for $i\notin \mathcal{O}_v$ using Equation \ref{eqn:sgl_final_kkt}\hfill $\blacktriangleright$ KKT check
\ENDWHILE
\ENDFOR
 \STATE {\bfseries Output:} $\hat\beta_\text{sgl}(\lambda_1), \ldots, \hat\beta_\text{sgl}(\lambda_l) \in \mathbb{R}^{p}$
\end{algorithmic}
\end{algorithm}

\subsection{Reduction to (Adaptive) Lasso and (Adaptive) Group Lasso}\label{appendix:lasso_glasso}
Under $\alpha = 1$, SGL reduces to the lasso. In this case, no group screening occurs and the variable screening rule reduces to the lasso strong rule \citep{tibshirani2010strong}:
\begin{equation*}
    |\nabla_i f(\hat\beta(\lambda_{k}))| \leq 2\lambda_{k+1} - \lambda_k.
\end{equation*}
Under $\alpha=0$, SGL reduces to the group lasso. Under this scenario, the group screening reduces to the group lasso strong rule \citep{tibshirani2010strong}: 
\begin{equation*}
     \|\nabla_g f(\hat\beta(\lambda_{k}))\|_2  \leq \sqrt{p_g} (2\lambda_{k+1} - \lambda_k),
\end{equation*}
and no variable screening is performed. For aSGL, the rules reduce to the adaptive lasso and adaptive group lasso:
\begin{alignat*}{3}
 \text{Adaptive lasso:}& \;|\nabla_i f(\hat{\beta}(\lambda_{k}))| \leq v_i (2\lambda_{k+1} - \lambda_k) &\implies \;&\hat{\beta}_i(\lambda_{k+1}) = 0.\\
      \text{Adaptive group lasso:}& \;\|\nabla_g f(\hat{\beta}(\lambda_{k}))\|_{\epsilon'_{g,1}} \leq w_g\sqrt{p_g}(2\lambda_{k+1} - \lambda_k) &\implies \;&\hat{\beta}^{(g)}(\lambda_{k+1}) \equiv \mathbf{0},
\end{alignat*}
where $\epsilon'_{g,1}$ denotes the $\epsilon$-norm under $\epsilon'_g = 1$ (Equations \ref{defn:e-norm} and \ref{eqn:adap_sgl_as_e_norm}).
\newpage 

\section{Adaptive Sparse-group Lasso}
\subsection{Derivation of the Connection to \texorpdfstring{$\epsilon$-norm}{epsilon-norm}}
\label{appendix:adaptive_sgl_derivation}
\begin{proof}[Full proof of Proposition \ref{propn:adap_sgl_e_norm}]
The adaptive SGL is given by
\begin{equation*}
  \|\beta\|_\text{asgl} = \alpha \sum_{i=1}^p v_i |\beta_i| +  (1-\alpha) \sum_{g=1}^m w_g\sqrt{p_g}\|\beta^{(g)}\|_2.    
\end{equation*}
The aim is to link this norm to the $\epsilon$-norm, in a similar way to SGL:
\begin{equation*}
      \|\beta\|_\text{sgl} = \sum_{g=1}^m(\alpha+(1-\alpha)\sqrt{p_g})\|\beta^{(g)}\|^*_{\epsilon_g}.
\end{equation*}
Splitting up the summation term in the adaptive lasso norm yields
\begin{align*}
     \alpha \sum_{i=1}^p v_i |\beta_i| &= \alpha\sum_{g=1}^m \sum_{i\in \mathcal{G}_g} v_i |\beta_i| \\
     &= \alpha\sum_{g=1}^m \left(\sum_{j\in \mathcal{G}_g} v_j\sum_{i \in \mathcal{G}_g}|\beta_i| - \sum_{i,j\in \mathcal{G}_g,i\neq j} v_j|\beta_i| \right) \\
     &=\alpha\sum_{g=1}^m \left(\sum_{j\in \mathcal{G}_g} v_j\sum_{i \in \mathcal{G}_g}|\beta_i| - \frac{\sum_{i,j\in \mathcal{G}_g,i\neq j} v_j|\beta_i|}{\sum_{i \in \mathcal{G}_g}|\beta_i|}\sum_{i \in \mathcal{G}_g}|\beta_i| \right)\\
     &=\alpha\sum_{g=1}^m 
 \sum_{i \in \mathcal{G}_g}|\beta_i|\left(\sum_{j\in \mathcal{G}_g} v_j- \frac{\sum_{i,j\in \mathcal{G}_g,i\neq j} v_j|\beta_i|}{\sum_{i\in \mathcal{G}_g} |\beta_i|} \right) \\
     &= \alpha\sum_{g=1}^m \|\beta^{(g)}\|_1\left(\|v^{(g)}\|_1- \frac{\sum_{i,j\in \mathcal{G}_g,i\neq j} v_j|\beta_i|}{\|\beta^{(g)}\|_1} \right).
\end{align*}
Hence
\begin{align}
  \|\beta\|_\text{asgl} &= \alpha \sum_{i=1}^p v_i |\beta_i| +  (1-\alpha) \sum_{g=1}^m w_g\sqrt{p_g}\|\beta^{(g)}\|_2 \nonumber \\
  &=\sum_{g=1}^m\left[\left( \|v^{(g)}\|_1 -\frac{\sum_{i,j\in \mathcal{G}_g,i\neq j} v_j|\beta_i|}{\|\beta^{(g)}\|_1} \right)\alpha\|\beta^{(g)}\|_1 + (1-\alpha) w_g\sqrt{p_g}\|\beta^{(g)}\|_2\right].\label{eqn:appendix_sgl_1_2} 
\end{align}
Setting
\begin{equation*}
         \gamma_g =\alpha\|v^{(g)}\|_1 -\frac{\alpha\sum_{i,j\in \mathcal{G}_g,i\neq j} v_j|\beta_i|}{\|\beta^{(g)}\|_1} + (1-\alpha)w_g\sqrt{p_g},
\end{equation*}
simplifies Equation \ref{eqn:appendix_sgl_1_2} to
\begin{equation}\label{eqn:appendix_sgl_2_2}
     \|\beta\|_\text{asgl} =\sum_{g=1}^m \gamma_g\left[\left(\frac{\gamma_g - (1-\alpha)w_g\sqrt{p_g}}{\gamma_g} \right)\|\beta^{(g)}\|_1 +\left(\frac{(1-\alpha)w_g\sqrt{p_g}}{\gamma_g}\right)\|\beta^{(g)}\|_2\right].
\end{equation}
Setting
\begin{equation*}
    \epsilon'_g = \frac{(1-\alpha)w_g\sqrt{p_g}}{\gamma_g},
\end{equation*}
allows Equation \ref{eqn:appendix_sgl_2_2} to be written in terms of the $\epsilon$-norm
\begin{equation*}
    \|\beta\|_\text{asgl} =\sum_{g=1}^m \gamma_g\left[(1-\epsilon'_g)\|\beta^{(g)}\|_1 +\epsilon'_g\|\beta^{(g)}\|_2\right] = \sum_{g=1}^m \gamma_g\|\beta^{(g)}\|_{\epsilon'_g}^*.
\end{equation*}
\end{proof}
\subsubsection{Lemma proofs for the connection to the \texorpdfstring{$\epsilon$-norm}{epsilon-norm}}\label{appendix:adap_sgl_additional_propertiese}
\begin{proof}[Proof of Lemma \ref{lemma:adap_gamma_g}]
Under $\beta^{(g)}\equiv \mathbf{0}$ for a group $g \notin \mathcal{A}_g$, the middle term in $\gamma_g$ becomes 
\begin{equation*}
\lim_{\beta^{(g)}\rightarrow \mathbf{0}} \left(\frac{\alpha\sum_{i,j\in \mathcal{G}_g,i\neq j} v_j|\beta_i|}{\|\beta^{(g)}\|_1}\right) = \frac{\alpha(p_g-1)}{p_g}\sum_{i=1}^{p_g}v_i,
\end{equation*}
so that $\gamma_g$ still exists. This can be observed by using L'H\^{o}pital's rule and noting that for $i \in \mathcal{G}_g$,
\begin{equation*}
    \frac{\partial}{\partial \beta_i}\sum_{i\neq j} v_j|\beta_i| = \sum_{i\neq j} v_j, \;\;  \frac{\partial}{\partial \beta_i}\|\beta^{(g)}\|_1 = 1.
\end{equation*}    
\end{proof}
\begin{proof}[Proof of Lemma \ref{lemma:asgl_to_sgl}]
    Under $v\equiv \mathbf{1}$ and $w\equiv \mathbf{1}$, note that
\begin{align*}
    \gamma_g &= \alpha \left(p_g - \frac{\sum_{i,j\in \mathcal{G}_g,i\neq j} v_j|\beta_i|}{\|\beta^{(g)}\|_1}\right) + (1-\alpha)\sqrt{p_g}\\ &= \alpha \left(p_g - \frac{(p_g - 1)\|\beta^{(g)}\|_1}{\|\beta^{(g)}\|_1}\right) + (1-\alpha)\sqrt{p_g}\\
    &= \alpha + (1-\alpha)\sqrt{p_g} = \tau_g.
\end{align*}
To understand the cross summation term, note that we are summing over each $\beta$ term $p_g - 1$ times, as the matching indices are removed, that is (for ease of notation, we consider $\mathcal{G}_1$ so that the indexing here is reset from 1)
\begin{align*}
    \sum_{i,j\in \mathcal{G}_1,i\neq j} v_j|\beta_i| &= |\beta_{1}|v_2 + \ldots + |\beta_1|v_{p_1} + |\beta_2|v_1 + \ldots + |\beta_2|v_{p_1}+ \ldots +|\beta_{p_1}|v_{p_1 - 1} \\
    &=(p_1-1)|\beta_1| + \ldots + (p_1-1)|\beta_{p_1}|, \; \text{by setting} \; v_j = 1, \forall j\in\mathcal{G}_1, \;\text{for SGL}\\
    & = (p_1 - 1)\sum_{i \in \mathcal{G}_1}|\beta_i| = (p_1 - 1)\|\beta^{(1)}\|_1.
\end{align*}
Hence, using $w_g = 1$ and $\tau_g = \alpha+(1-\alpha)\sqrt{p_g}$,
\begin{align*}
    \epsilon_g' = \frac{(1-\alpha)w_g\sqrt{p_g}}{\gamma_g} = \frac{(1-\alpha)\sqrt{p_g}}{\tau_g} = \frac{\tau_g - \alpha}{\tau_g} = \epsilon_g.
\end{align*}
\end{proof} 
\subsection{Theory}\label{appendix:adap_sgl_theory}
\subsubsection{Group screening}\label{appendix:adap_sgl_screen_rules_grp}
To derive the group screening rule for aSGL, we compare the formulations of SGL and aSGL in terms of the $\epsilon$-norm (Equations \ref{eqn:sgl_as_e_norm} and \ref{eqn:adap_sgl_as_e_norm}):
\begin{equation*}
 \|\beta\|_\text{sgl} = \sum_{g=1}^m \tau_g\|\beta^{(g)}\|^*_{\epsilon_g}, \;\; \|\beta\|_\text{asgl} = \sum_{g=1}^m \gamma_g\|\beta^{(g)}\|^*_{\epsilon_g'}.
\end{equation*}
Therefore, the derivation for the group screening rule for aSGL is identical to that of SGL (Section \ref{section:sgl_group_reduction}) replacing $\tau_g$ with $\gamma_g$ and $\|\cdot\|_{\epsilon_g}$ with $\|\cdot\|_{\epsilon_g'}$. The group screening rule is given by: discard a group $g$ if
\begin{equation}\label{eqn:adap_sgl_grp_screen_final}
     \|\nabla_g f(\hat{\beta}(\lambda_{k}))\|_{\epsilon_g'} \leq \gamma_g(2\lambda_{k+1} - \lambda_k),
\end{equation}
and is formalized in Propositions \ref{propn:adap_sgl_grp_screen_theoretical} and \ref{propn:adap_sgl_grp_screen}.
 
\begin{proposition}[Theoretical aSGL group screening]\label{propn:adap_sgl_grp_screen_theoretical}
    For aSGL applied with any $\lambda_{k+1}, k\in [l-1]$, the candidate group set, 
    \begin{equation*}
        \mathcal{C}_g(\lambda_{k+1}) = \{g\in [m]: \|\nabla_g f(\hat\beta(\lambda_{k+1}))\|_{\epsilon_g'}  >  \gamma_g\lambda_{k+1}\},
    \end{equation*}
    is such that $\mathcal{C}_g(\lambda_{k+1}) = \mathcal{A}_g(\lambda_{k+1})$.
\end{proposition}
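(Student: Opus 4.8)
The plan is to mirror the proof of Proposition \ref{propn:sgl_grp_screen_theoretical} almost verbatim, exploiting the structural equivalence between SGL and aSGL supplied by Proposition \ref{propn:adap_sgl_e_norm}. That proposition writes the aSGL norm as $\sum_{g=1}^m \gamma_g \|\beta^{(g)}\|_{\epsilon_g'}^*$, which has exactly the same form as the SGL norm $\sum_{g=1}^m \tau_g \|\beta^{(g)}\|_{\epsilon_g}^*$ with $\tau_g$ replaced by $\gamma_g$ and $\epsilon_g$ by $\epsilon_g'$. Consequently, the KKT stationarity condition for an inactive group reads $\mathbf{0} \in \nabla_g f(\hat\beta(\lambda_{k+1})) + \gamma_g \lambda_{k+1} \Theta_{g,k+1}^{0}$, where $\Theta_{g,k+1}^{0} = \{x \in \mathbb{R}^{p_g} : \|x\|_{\epsilon_g'} \leq 1\}$ is the unit ball of the dual $\epsilon_g'$-norm, i.e. the subgradient of $\|\cdot\|_{\epsilon_g'}^*$ evaluated at zero.

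First I would establish the contrapositive, proving $g \notin \mathcal{C}_g(\lambda_{k+1}) \iff g \notin \mathcal{A}_g(\lambda_{k+1})$. The chain of equivalences proceeds exactly as in the SGL case: the definition of the candidate set gives $g \notin \mathcal{C}_g(\lambda_{k+1}) \iff \|\nabla_g f(\hat\beta(\lambda_{k+1}))\|_{\epsilon_g'} \leq \gamma_g \lambda_{k+1}$; the characterisation of the unit ball of the dual norm converts this into the membership statement $-\nabla_g f(\hat\beta(\lambda_{k+1})) \in \gamma_g \lambda_{k+1} \Theta_{g,k+1}^{0}$; a trivial rearrangement yields the stationarity inclusion $\mathbf{0} \in \nabla_g f(\hat\beta(\lambda_{k+1})) + \gamma_g \lambda_{k+1} \Theta_{g,k+1}^{0}$; and the aSGL KKT conditions then identify this with $g \notin \mathcal{A}_g(\lambda_{k+1})$. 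Since each step is a biconditional, the two sets coincide, which is the claim.

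The only point demanding genuine care, and the step I expect to be the main obstacle, is verifying that $\gamma_g$ and $\epsilon_g'$ are well defined precisely at the inactive groups where the argument is applied. The expression for $\gamma_g$ in Equation \ref{eqn:adap_sgl_as_e_norm} contains the ratio $\sum_{i,j\in\mathcal{G}_g,\, i \neq j} v_j |\hat\beta_i| / \|\hat\beta^{(g)}\|_1$, whose denominator vanishes when $\beta^{(g)} \equiv \mathbf{0}$. Here I would invoke Lemma \ref{lemma:adap_gamma_g}, which shows via L'H\^opital's rule that this ratio admits the finite limit $\tfrac{\alpha(p_g - 1)}{p_g}\sum_{i} v_i$ as $\beta^{(g)} \to \mathbf{0}$, so that $\gamma_g$ (and hence $\epsilon_g'$) remains well defined on inactive groups. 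With this established, substituting $\gamma_g, \epsilon_g'$ for $\tau_g, \epsilon_g$ is legitimate at every group, and the remainder of the argument carries over from the SGL proof without further modification.
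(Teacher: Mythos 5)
Your proposal matches the paper's proof in approach and substance: the paper's argument is precisely the proof of Proposition \ref{propn:sgl_grp_screen_theoretical} with $\tau_g$ replaced by $\gamma_g$ and $\|\cdot\|_{\epsilon_g}$ by $\|\cdot\|_{\epsilon_g'}$, which is exactly the chain of equivalences you give. Your explicit appeal to Lemma \ref{lemma:adap_gamma_g} to ensure $\gamma_g$ is well defined on inactive groups is a point the paper establishes separately rather than inside this proof, but it is consistent with the paper and does not alter the argument.
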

\begin{proof}
The proof is identical to that of Proposition \ref{propn:sgl_grp_screen_theoretical} replacing $\tau_g$ with $\gamma_g$ and $\|\cdot\|_{\epsilon_g}$ with $\|\cdot\|_{\epsilon_g'}$ (see Appendix \ref{appendix:sgl_grp_theory}).
\end{proof}
\begin{proposition}[DFR-aSGL group screening]\label{propn:adap_sgl_grp_screen}
    For aSGL applied with any $\lambda_{k+1}, k\in [l-1]$, assuming that 
    \begin{equation*}
      \|\nabla_g f(\hat\beta(\lambda_{k+1})) - \nabla_g f(\hat\beta(\lambda_k))\|_{\epsilon_g'} \leq \gamma_g|\lambda_{k+1} - \lambda_k|,
    \end{equation*}
    for all $g\in[m]$, then the candidate group set,
   \begin{equation*}
       \mathcal{C}_g(\lambda_{k+1}) = \{g\in [m]: \|\nabla_g f(\hat\beta(\lambda_{k}))\|_{\epsilon_g'} > \gamma_g(2\lambda_{k+1} - \lambda_k)\},
   \end{equation*} 
   is such that $\mathcal{A}_g(\lambda_{k+1}) \subset \mathcal{C}_g(\lambda_{k+1})$.
\end{proposition}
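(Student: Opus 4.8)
The plan is to reduce this statement directly to the SGL group screening result (Proposition \ref{propn:sgl_grp_screen}) by exploiting the shared $\epsilon$-norm representation. By Proposition \ref{propn:adap_sgl_e_norm}, the aSGL norm can be written as $\|\beta\|_\text{asgl} = \sum_{g=1}^m \gamma_g\|\beta^{(g)}\|^*_{\epsilon_g'}$, which is structurally identical to the SGL representation $\|\beta\|_\text{sgl} = \sum_{g=1}^m \tau_g\|\beta^{(g)}\|^*_{\epsilon_g}$ of Equation \ref{eqn:sgl_as_e_norm}. Consequently the derivation of Section \ref{section:sgl_group_reduction} carries over under the substitutions $\tau_g \mapsto \gamma_g$ and $\|\cdot\|_{\epsilon_g} \mapsto \|\cdot\|_{\epsilon_g'}$, so I would first establish the group-level KKT stationarity condition for aSGL and the unit-ball characterization $\partial\|0\|^*_{\epsilon_g'} = \{x\in\mathbb{R}^{p_g} : \|x\|_{\epsilon_g'} \leq 1\}$ of the subgradient of the dual $\epsilon_g'$-norm at zero, before turning to the screening bound itself.

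Concretely, I would prove the contrapositive $g \notin \mathcal{C}_g(\lambda_{k+1}) \implies g \notin \mathcal{A}_g(\lambda_{k+1})$. First, apply the reverse triangle inequality to the stated Lipschitz hypothesis to obtain $\|\nabla_g f(\hat\beta(\lambda_{k+1}))\|_{\epsilon_g'} \leq \|\nabla_g f(\hat\beta(\lambda_{k}))\|_{\epsilon_g'} + \gamma_g|\lambda_{k+1} - \lambda_k|$. Next, use that $g \notin \mathcal{C}_g(\lambda_{k+1})$ gives $\|\nabla_g f(\hat\beta(\lambda_{k}))\|_{\epsilon_g'} \leq \gamma_g(2\lambda_{k+1} - \lambda_k)$, together with the fact that the path is decreasing so that $|\lambda_{k+1} - \lambda_k| = \lambda_k - \lambda_{k+1}$; substituting and telescoping yields $\|\nabla_g f(\hat\beta(\lambda_{k+1}))\|_{\epsilon_g'} \leq \gamma_g\lambda_{k+1}$. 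Finally, I would read this inequality as the membership $-\nabla_g f(\hat\beta(\lambda_{k+1})) \in \gamma_g\lambda_{k+1}\,\partial\|0\|^*_{\epsilon_g'}$ in the scaled unit ball of the $\epsilon_g'$-norm, which is exactly the aSGL analogue of the group KKT stationarity condition and hence certifies $g \notin \mathcal{A}_g(\lambda_{k+1})$, mirroring Appendix \ref{appendix:sgl_grp_theory}.

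The main obstacle, and the point where the aSGL argument genuinely departs from SGL, is that $\gamma_g$ and $\epsilon_g'$ are not fixed constants but depend on the coefficient vector through the cross term $\sum_{i,j\in\mathcal{G}_g,\, i\neq j} v_j|\hat\beta_i| / \|\hat\beta^{(g)}\|_1$. I would therefore need to argue that these quantities are well-defined at the point used in the screening rule, and in particular that they remain finite for inactive groups where $\hat\beta^{(g)} \equiv \boldsymbol{0}$; this is precisely what Lemma \ref{lemma:adap_gamma_g} supplies through the limiting value computed there. I would also verify that the $\epsilon_g'$-norm inherits the triangle inequality and the dual-ball subgradient identity uniformly in these data-dependent parameters, so that both the reverse triangle inequality and the concluding KKT step are legitimate. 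Once this well-posedness is secured, the remaining chain of implications is routine.
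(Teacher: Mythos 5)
Your proposal is correct and follows essentially the same route as the paper: the paper's own proof of this proposition is literally ``identical to that of Proposition~\ref{propn:sgl_grp_screen} replacing $\tau_g$ with $\gamma_g$ and $\|\cdot\|_{\epsilon_g}$ with $\|\cdot\|_{\epsilon_g'}$,'' and your contrapositive argument (reverse triangle inequality on the Lipschitz hypothesis, substitution of the candidate-set bound, then the dual-ball/KKT conclusion) is exactly that substituted proof from Appendix~\ref{appendix:sgl_grp_theory}. Your additional attention to the well-posedness of the data-dependent $\gamma_g$ and $\epsilon_g'$ via Lemma~\ref{lemma:adap_gamma_g} is a sensible refinement that the paper handles separately rather than inside the proof itself.
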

\begin{proof}
    The proof is identical to that of Proposition \ref{propn:sgl_grp_screen} replacing $\tau_g$ with $\gamma_g$ and $\|\cdot\|_{\epsilon_g}$ with $\|\cdot\|_{\epsilon_g'}$ (see Appendix \ref{appendix:sgl_grp_theory}).
\end{proof}
\subsubsection{Variable screening}\label{appendix:adap_sgl_screen_rules_var}
The construction of the variable screening rule for aSGL is very similar to that of SGL (Section \ref{section:sgl_variable_reduction}). The KKT stationary conditions for aSGL for an inactive variable in an active group are (in comparison to Equation \ref{eqn:sgl_kkt_var} for SGL) 
\begin{equation*}
   -\nabla_i f(\hat\beta(\lambda_{k+1})) \in \lambda_{k+1}  \alpha v_i\Phi_{i,k+1}^0.
\end{equation*}
Therefore, the derivation of the rule is identical, replacing $\alpha$ with $\alpha v_i$. The variable screening rule is given by: discard a variable $i$ if
\begin{equation}\label{eqn:adap_sgl_var_screen_final}
     |\nabla_i f(\hat{\beta}(\lambda_{k}))| \leq \alpha v_i(2\lambda_{k+1} - \lambda_k),
\end{equation}
and is formalized in Propositions \ref{propn:adap_sgl_var_screen_theoretical} and \ref{propn:adap_sgl_var_screen}.
\begin{proposition}[Theoretical aSGL variable screening]\label{propn:adap_sgl_var_screen_theoretical}
    For aSGL applied with any $\lambda_{k+1}, k\in [l-1]$, the candidate variable set,
    \begin{equation*}
        \mathcal{C}_v(\lambda_{k+1}) = \{i\in \mathcal{G}_g \;\text{for}\; g\in\mathcal{A}_g(\lambda_{k+1}):  |\nabla_i f(\hat\beta(\lambda_{k+1}))|> \lambda_{k+1}\alpha v_i \},
    \end{equation*} 
    is such that $\mathcal{C}_v(\lambda_{k+1}) = \mathcal{A}_v(\lambda_{k+1})$.
\end{proposition}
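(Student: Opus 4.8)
The plan is to mirror the proof of Proposition \ref{propn:sgl_var_screen_theoretical} exactly, exploiting the fact that the aSGL variable KKT condition differs from its SGL counterpart only by the substitution of $\alpha$ with the weighted quantity $\alpha v_i$, as recorded in Appendix \ref{appendix:adap_sgl_screen_rules_var}. Because this is the \emph{theoretical} (exact) screening rule, the gradient $\nabla_i f(\hat\beta(\lambda_{k+1}))$ is treated as available, so no Lipschitz assumption enters and the two inclusions $\mathcal{A}_v \subset \mathcal{C}_v$ and $\mathcal{C}_v \subset \mathcal{A}_v$ can be obtained simultaneously by proving an equivalence rather than a one-sided implication. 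As in the SGL case, I would prove the contrapositive chain in order to cancel the subgradient cleanly against the unit ball.

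Concretely, I would fix $i \in \mathcal{G}_g$ with $g \in \mathcal{A}_g(\lambda_{k+1})$ and $k \in [l-1]$, and establish $i \notin \mathcal{C}_v(\lambda_{k+1}) \iff i \notin \mathcal{A}_v(\lambda_{k+1})$ through the following steps. First, by the definition of the candidate set, $i \notin \mathcal{C}_v(\lambda_{k+1})$ is equivalent to $|\nabla_i f(\hat\beta(\lambda_{k+1}))| \leq \lambda_{k+1}\alpha v_i$. Next, using the unit-ball characterization $\Phi^0_{i,k+1} = \{x \in \mathbb{R}: |x| \leq 1\}$ of the $\ell_1$ subgradient at zero, this inequality is equivalent to the membership $-\nabla_i f(\hat\beta(\lambda_{k+1})) \in \lambda_{k+1}\alpha v_i \Phi^0_{i,k+1}$, which rearranges to the stationarity statement $\mathbf{0} \in \nabla_i f(\hat\beta(\lambda_{k+1})) + \lambda_{k+1}\alpha v_i \Phi^0_{i,k+1}$. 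Finally, by the aSGL KKT conditions for an inactive variable inside an active group (Appendix \ref{appendix:adap_sgl_screen_rules_var}), this stationarity condition holds if and only if $i \notin \mathcal{A}_v(\lambda_{k+1})$, which closes the equivalence.

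The only point requiring care, rather than a genuine obstacle, is the reduction of the full aSGL stationarity condition to the scaled $\ell_1$ form: for a variable in an \emph{active} group, the $\ell_2$ subgradient is uniquely determined and contributes a term proportional to $\hat\beta_i^{(g)}/\|\hat\beta^{(g)}\|_2$, which vanishes precisely because $\hat\beta_i = 0$ for an inactive variable. This is exactly the collapse exploited in the SGL derivation running from Equation \ref{eqn:sgl_var_kkt_cond} to Equation \ref{eqn:sgl_kkt_var}, and it is why the group must be assumed active (otherwise the $\ell_2$ subgradient is set-valued and does not simply vanish). Consequently I expect the argument to reduce to a one-line appeal to the proof of Proposition \ref{propn:sgl_var_screen_theoretical} with $\alpha$ replaced throughout by $\alpha v_i$, tracking the adaptive weight $v_i$ through each equivalence.
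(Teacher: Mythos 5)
Your proposal is correct and matches the paper's proof exactly: the paper disposes of this proposition with precisely the one-line appeal you anticipate, stating that the argument is identical to that of Proposition \ref{propn:sgl_var_screen_theoretical} with $\alpha$ replaced by $\alpha v_i$, and the contrapositive chain you spell out (candidate-set definition $\iff$ unit-ball membership $\iff$ stationarity $\iff$ inactivity) is the same chain used there. Your remark about the $\ell_2$ subgradient collapsing for an inactive variable in an active group is also the correct justification for the scaled $\ell_1$ form of the aSGL KKT condition, mirroring the paper's derivation from Equation \ref{eqn:sgl_var_kkt_cond} to Equation \ref{eqn:sgl_kkt_var}.
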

\begin{proof}
The proof is identical to that of Proposition \ref{propn:sgl_var_screen_theoretical} replacing $\alpha$ with $\alpha v_i$ (see Appendix \ref{appendix:sgl_var_theory}).
\end{proof}
\begin{proposition}[DFR-aSGL variable screening]\label{propn:adap_sgl_var_screen}
    For aSGL applied with any $\lambda_{k+1}, k\in [l-1]$, assuming that 
    \begin{equation*}
      |\nabla_i f(\hat\beta(\lambda_{k+1}))- \nabla_i f(\hat\beta(\lambda_{k}))| \leq \alpha v_i(\lambda_{k} - \lambda_{k+1}),
    \end{equation*}
    for all $i\in\mathcal{G}_g$ for $g\in\mathcal{A}_g(\lambda_{k+1})$, then the variable candidate set, 
    \begin{equation*}
        \mathcal{C}_v(\lambda_{k+1}) = \{i\in \mathcal{G}_g \;\text{for}\; g\in\mathcal{A}_g(\lambda_{k+1}):   |\nabla_i f(\hat\beta(\lambda_{k}))| > \alpha v_i (2\lambda_{k+1} -\lambda_k) \},
    \end{equation*}
    is such that $\mathcal{A}_v(\lambda_{k+1}) \subset \mathcal{C}_v(\lambda_{k+1})$.
\end{proposition}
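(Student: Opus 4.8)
The plan is to mirror the proof of Proposition \ref{propn:sgl_var_screen} line for line, making the single substitution $\alpha \mapsto \alpha v_i$ throughout. The justification for this substitution is that, as noted in Appendix \ref{appendix:adap_sgl_screen_rules_var}, the aSGL variable KKT stationarity condition for an inactive variable in an active group reads $-\nabla_i f(\hat\beta(\lambda_{k+1})) \in \lambda_{k+1}\alpha v_i \Phi^0_{i,k+1}$ rather than $-\nabla_i f(\hat\beta(\lambda_{k+1})) \in \lambda_{k+1}\alpha \Phi^0_{i,k+1}$; the $\ell_2$-subgradient contribution vanishes for an inactive variable in an active group exactly as in the SGL case. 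As before, I would establish the superset claim $\mathcal{A}_v(\lambda_{k+1}) \subset \mathcal{C}_v(\lambda_{k+1})$ through its contrapositive, showing $i \notin \mathcal{C}_v(\lambda_{k+1}) \implies i \notin \mathcal{A}_v(\lambda_{k+1})$ for every $i \in \mathcal{G}_g$ with $g \in \mathcal{A}_g(\lambda_{k+1})$.

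First I would convert the stated Lipschitz hypothesis into a one-sided bound on the gradient at $\lambda_{k+1}$ using the reverse triangle inequality for the absolute value, giving
\[
|\nabla_i f(\hat\beta(\lambda_{k+1}))| \leq |\nabla_i f(\hat\beta(\lambda_{k}))| + \alpha v_i(\lambda_k - \lambda_{k+1}),
\]
where I have used $\lambda_k \geq \lambda_{k+1}$ so that $|\lambda_{k+1}-\lambda_k| = \lambda_k - \lambda_{k+1}$. Next I would invoke the assumption $i \notin \mathcal{C}_v(\lambda_{k+1})$, which by the definition of the candidate set supplies $|\nabla_i f(\hat\beta(\lambda_{k}))| \leq \alpha v_i(2\lambda_{k+1} - \lambda_k)$. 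Substituting this into the previous display collapses the right-hand side to $\alpha v_i \lambda_{k+1}$.

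Finally, the resulting bound $|\nabla_i f(\hat\beta(\lambda_{k+1}))| \leq \alpha v_i \lambda_{k+1}$ is precisely the membership statement $-\nabla_i f(\hat\beta(\lambda_{k+1})) \in \lambda_{k+1}\alpha v_i \Phi^0_{i,k+1}$, since $\Phi^0_{i,k+1} = \{x \in \mathbb{R} : |x| \leq 1\}$. This rearranges to $\mathbf{0} \in \nabla_i f(\hat\beta(\lambda_{k+1})) + \lambda_{k+1}\alpha v_i \Phi^0_{i,k+1}$, which is exactly the aSGL KKT stationarity condition certifying $i \notin \mathcal{A}_v(\lambda_{k+1})$, closing the contrapositive.

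Because the argument is a term-by-term transcription of the SGL case, I do not anticipate a genuine obstacle; the only point requiring care is the bookkeeping of the per-variable weight $v_i$, which must stay attached to $\alpha$ at every occurrence — in the Lipschitz bound, in the candidate-set threshold, and in the radius of the subdifferential ball — so that the three terms cancel to give exactly $\alpha v_i \lambda_{k+1}$. One should also keep in mind the same practical caveat flagged for the SGL rule: the screening is derived using $\mathcal{A}_g(\lambda_{k+1})$, which is unavailable, so in Algorithm \ref{alg:sgs_framework} the candidate group set $\mathcal{C}_g(\lambda_{k+1})$ is used in its place, with any induced violations caught by the aSGL KKT checks of Equation \ref{eqn:adap_sgl_kkt_checks}.
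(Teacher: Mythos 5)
Your proposal is correct and matches the paper's own treatment exactly: the paper proves this proposition by stating that the argument is identical to that of Proposition \ref{propn:sgl_var_screen} with $\alpha$ replaced by $\alpha v_i$, which is precisely the contrapositive argument (reverse triangle inequality on the Lipschitz hypothesis, substitution of the candidate-set bound, then the aSGL KKT condition) that you transcribe. Your closing remarks about keeping $v_i$ attached to $\alpha$ and about using $\mathcal{C}_g(\lambda_{k+1})$ in place of the unknown $\mathcal{A}_g(\lambda_{k+1})$ with KKT checks are also consistent with the paper's discussion.
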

\begin{proof}
The proof is identical to that of Proposition \ref{propn:sgl_var_screen} replacing $\alpha$ with $\alpha v_i$ (see Appendix \ref{appendix:sgl_var_theory}).
\end{proof}
\subsection{Choice of Adaptive Weights}\label{appendix:adap_sgl_weights}
The adaptive weights are chosen according to \citet{Mendez-Civieta2021AdaptiveRegression} as
\begin{equation*}
    v_i = \frac{1}{|q_{1i}|^{b_1}}, w_g = \frac{1}{\|q_1^{(g)}\|_2^{b_2}},
\end{equation*}
where $q_1$ is the first principal component from performing principal component analysis on $\mathbf{X}$ and $b_1,b_2$ are chosen by the user, often in the range $[0,2]$. The weights are shown for $b_1=b_2=0.1$ in Figure \ref{fig:appendix_adaptive_weights}.

\begin{figure}[H]
  \centering
    \includegraphics[width=\columnwidth]{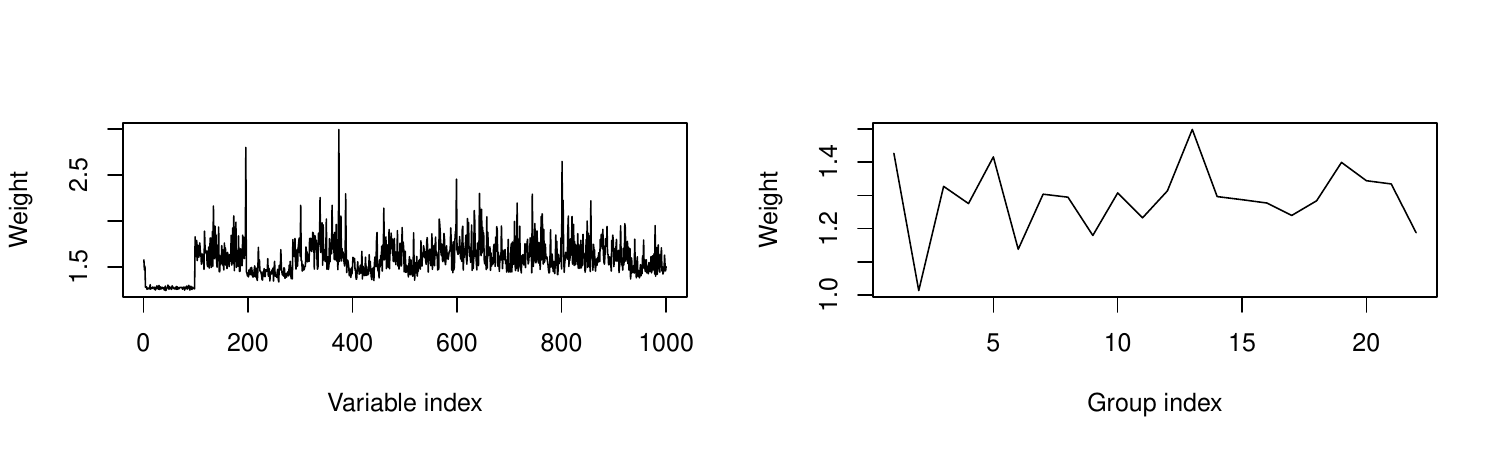}
  \caption{The weights, $(v,w)$, for aSGL, used in Figure \ref{fig:figure-group-2-if} (right), where $p=1000, n=200, m=22, \rho=0.3, b_1=b_2=0.1$, and $\alpha=0.95$.}
  \label{fig:appendix_adaptive_weights}
\end{figure}

\subsection{Algorithm} \label{appendix:asgl_algorithm}
\begin{algorithm}[!h]
   \caption{Dual Feature Reduction (DFR) for aSGL}
   \label{alg:adap_sgl_framework}
\begin{algorithmic}
   \STATE {\bfseries Input:} $(\lambda_1,\ldots,\lambda_l) \in \mathbb{R}^l, \mathbf{X}\in \mathbb{R}^{n\times p}, y\in \mathbb{R}^n, \alpha \in [0,1]$
\STATE compute $\hat\beta(\lambda_1)$ using Equation \ref{eqn:sgl_problem}, replacing the SGL norm with Equation \ref{eqn:adap_sgl_problem}
\FOR{$k=1$ {\bfseries to} $l-1$}
\STATE $\mathcal{C}_g(\lambda_{k+1})$ $\leftarrow$ candidate groups from Equation \ref{eqn:adap_sgl_grp_screen_final}
\STATE $\mathcal{C}_v(\lambda_{k+1})$ $\leftarrow$ candidate variables from Equation \ref{eqn:adap_sgl_var_screen_final} for $i \in\mathcal{G}_g \setminus \mathcal{A}_v(\lambda_k), g \in \mathcal{C}_g(\lambda_{k+1})$
\STATE $\mathcal{O}_v \leftarrow \mathcal{C}_v(\lambda_{k+1}) \cup \mathcal{A}_v(\lambda_{k})$\hfill $\blacktriangleright$ Optimization set
\STATE compute $\hat\beta_i(\lambda_{k+1}), i\in\mathcal{O}_v$, using Equation \ref{eqn:sgl_problem}, replacing the SGL norm with Equation \ref{eqn:adap_sgl_problem} 
\STATE $\mathcal{K}_v \leftarrow$ variable KKT violations for $i\notin \mathcal{O}_v$, using Equation \ref{eqn:adap_sgl_kkt_checks} \hfill $\blacktriangleright$ KKT check
\WHILE{$\card(\mathcal{K}_v)> 0$}
\STATE $\mathcal{O}_v \leftarrow \mathcal{O}_v\cup \mathcal{K}_v$ \hfill $\blacktriangleright$ Optimization set
\STATE compute $\hat\beta_i(\lambda_{k+1}),i\in\mathcal{O}_v$, using Equation \ref{eqn:sgl_problem}, replacing the SGL norm with Equation \ref{eqn:adap_sgl_problem}
\STATE $\mathcal{K}_v \leftarrow$ variable KKT violations for $i\notin \mathcal{O}_v$, using Equation \ref{eqn:adap_sgl_kkt_checks} \hfill $\blacktriangleright$ KKT check
\ENDWHILE
\ENDFOR
 \STATE {\bfseries Output:} $\hat\beta_\text{asgl}(\lambda_1), \ldots, \hat\beta_\text{asgl}(\lambda_l) \in \mathbb{R}^{p}$
\end{algorithmic}
\end{algorithm}

\newpage 

\section{Competitive Feature Reduction Approaches}\label{appendix:other_approaches}
\begin{table}[H]
\centering
  \caption{A summary of the four screening rules for SGL considered.}
    \label{table:summary}
 \resizebox{\textwidth}{!}{
 \begin{tabular}{llll}
\toprule
     & &\multicolumn{2}{c}{Rules (discard if true)}   \\
    \cmidrule(lr){3-4}  
                          Method&  Type& Variable & Group    \\
    \midrule
    DFR-aSGL     &Heuristic                & $|\nabla_i f(\hat{\beta}(\lambda_{k}))| \leq \alpha v_i (2\lambda_{k+1} - \lambda_k)$&$ \|\nabla_g f(\hat{\beta}(\lambda_{k}))\|_{\epsilon_g'} \leq \gamma_g(2\lambda_{k+1} - \lambda_k)$  \\
    DFR-SGL  &Heuristic           &$|\nabla_i f(\hat{\beta}(\lambda_{k}))| \leq \alpha (2\lambda_{k+1} - \lambda_k)$  &$\|\nabla_g f(\hat{\beta}(\lambda_{k}))\|_{\epsilon_g} \leq \tau_g(2\lambda_{k+1} - \lambda_k)$  \\
    sparsegl &Heuristic                    &\multicolumn{1}{c}{-}&  $    \| S(\nabla_g f(\hat\beta(\lambda_k)), \lambda_k\alpha)\|_2 \leq \sqrt{p_g}(1-\alpha)(2\lambda_{k+1} - \lambda_{k})$  \\
    GAP safe&Exact & $|X_i^\top \Theta_c| + r\|X_i\|_2 < \tau $ & $\mathcal{T}_g <(1-\alpha)\sqrt{p_g}$\\
    \bottomrule
  \end{tabular}
  }
\end{table}

\paragraph{sparsegl} sparsegl is a screening rule proposed by \citet{Liang2022Sparsegl:Lasso} and performs a single layer of group screening. The rule is based on the strong screening framework \citep{tibshirani2010strong} and the first order condition derived in \citet{Simon2013}, i.e., that a group $g\in [m]$ is inactive if
\begin{equation*}
    \| S(\nabla_g f(\hat\beta(\lambda_{k+1})), \lambda_{k+1}\alpha)\|_2\leq \sqrt{p_g}(1-\alpha)\lambda_{k+1}.
\end{equation*}
As the gradient at $k+1$ is not available, the following Lipschitz assumption on the $\ell_2$ norm is used:
\begin{equation*}
    \| S(\nabla_g f(\hat\beta(\lambda_{k+1})), \lambda_{k+1}\alpha) -  S(\nabla_g f(\hat\beta(\lambda_{k})), \lambda_{k}\alpha)\|_2 \leq \sqrt{p_g}(1-\alpha)|\lambda_{k+1} - \lambda_k|.
\end{equation*}
This leads to the sparsegl screening rule (via the triangle inequality): discard a group $g$ if
\begin{equation*}
     \| S(\nabla_g f(\hat\beta(\lambda_k)), \lambda_k\alpha)\|_2 \leq \sqrt{p_g}(1-\alpha)(2\lambda_{k+1} - \lambda_{k}).
\end{equation*}
This screening rule uses a different Lipschitz assumption at the group-level (DFR: Equation \ref{eqn:sgl_group_lipschitz}), which in turn leads to a different group-level rule (DFR: Equation \ref{eqn:sgl_group_screen}). Our Lipschitz assumption is more consistent with the work of \citet{tibshirani2010strong}, as the assumption is with regards to the dual norm of the full SGL norm, rather than just the group component.

\paragraph{GAP Safe} An exact feature reduction method for SGL was proposed in \citet{Ndiaye2016GAPLasso} under linear regression. The approach makes use of the subdifferential inclusion equation of Fermat's rule \citep{Bauschke2017ConvexSpaces}: 
\begin{equation*}
\mathbf{X}^\top \hat\Theta^{(\lambda,\|\cdot\|_{\text{sgl}})} \in \partial \|\cdot\|_{\text{sgl}}(\hat\beta^{(\lambda,\|\cdot\|_{\text{sgl}})}),
\end{equation*}
where $\hat\Theta$ is the solution to the dual formulation of Equation \ref{eqn:sgl_problem}. Using this, exact (theoretical) rules are derived to determine which variables and groups are inactive at the optimal solution. The rules are theoretical as they rely on $\hat\Theta^{\lambda,\|\cdot\|_{\text{sgl}}}$, which is not available in practice. Instead, a safe region is constructed that contains the optimal dual solution; in \citet{Ndiaye2016GAPLasso} it is taken as a sphere, but other regions can also be used (such as domes). Due to the strict requirements on these safe regions, the reduction is generally more conservative.

The safe sphere is defined as $B(\Theta_c,r)$ with center $\Theta_c$ and radius $r$. An ideal region would be such that $r$ is small and the center is close to $\hat\Theta^{\lambda,\|\cdot\|_{\text{sgl}}}$. Using this safe region, the GAP safe rules at $\lambda_{k+1}$ are derived as, for a variable $i$ and group $g$,
\begin{alignat*}{3}
    \text{Variable screening:}& \;|X_i^\top \Theta_c| + r\|X_i\|_2 < \tau \; &\implies \;  &\hat\beta_i(\lambda_{k+1}) = 0. \\
    \text{Group screening:}& \;\mathcal{T}_g <(1-\alpha)\sqrt{p_g} \; &\implies \; &\hat{\beta}^{(g)}(\lambda_{k+1}) \equiv \boldsymbol{0},
\end{alignat*}
where 
\begin{equation*}
 \mathcal{T}_g=
    \begin{cases}
        \|S(X_g^\top \Theta_c,\alpha)\| + r\|X_g\|, & \text{if } \|X_g^\top \Theta_c \|_\infty > \alpha, \\
        (\|X_g^\top\Theta_c\|_\infty + r\|X_g\| - \alpha)_+, & \text{otherwise}.
    \end{cases}
\end{equation*}
The center $\Theta_c$ and the radius $r$ are derived using the duality gap and are calculated at iteration $t$ in an iterative algorithm as
\begin{equation*}
  \Theta_t(\beta_{(t)}) = \frac{y-\mathbf{X}\beta_{(t)}}{\max(\lambda_{k+1},\|X^\top (y-\mathbf{X}\beta_{(t)})\|_\text{sgl}^* )}, \;\;\;  r_t(\beta_{(t)},\Theta_t) = \sqrt{\frac{2P_{\lambda_{k+1},\alpha}(\beta_{(t)}) - D_{\lambda_{k+1}}(\Theta_t)}{\lambda_{k+1}^2}}, 
\end{equation*}
where $P_{\lambda,\alpha}$ and $D_\lambda$ are the primal and dual objectives, and $\beta_{(t)}$ is the primal value at iteration $t$. The radius and center are expensive to evaluate, so are calculated only every $10$ iterations \citep{Ndiaye2016GAPLasso}.

The above formulation combines both dynamic and sequential screening. The method can also be implemented using just sequential screening, in which the primal values used in the calculation of the center and radius are from $\lambda_{k}$.

For both the GAP safe rules and DFR, theoretically it would be possible to exactly identify the active sets, but both instead require approximations. While GAP safe has different implementations, we present the best performing versions in our studies. 

\section{Synthetic Data Analysis}\label{appendix:syntheticresuts}

This section complements Section \ref{section:results_synthetic} by providing further information about the simulation set-up and additional results for the synthetic data. Additional tables and figures are provided that further showcase the effectiveness of DFR, including under a logistic model (Appendix \ref{appendix:results_log}). 

\textit{Notes:} The GAP safe methods are only applicable under linear regression and caused computational issues under uneven groups. Given their poor performance in the simulations considered (Figures \ref{fig:figure-group-3-if}, \ref{fig:figure-group-3-ip}, and \ref{fig:figure-group-3-i-ip-if}), they were excluded from the remaining simulations. Additionally, note that an intercept was only applied for linear models. Applying an intercept centers the response data, which is not applicable to a binary response. As we are interested in computational cost, not predictive performance, adding an intercept to logistic models provides no benefit.

\subsection{Metrics}
The following metrics are shown in the tables in the Appendix:
\begin{itemize}
    \item $\mathcal{A}_v, \mathcal{A}_g$: the number of active variables/groups.
    \item $\mathcal{C}_v,\mathcal{C}_g$: the number of variables/groups in the candidate sets.
    \item $\mathcal{O}_v,\mathcal{O}_g$: the number of variables/groups used in the optimization process. As per Algorithms \ref{alg:sgs_framework} and \ref{alg:adap_sgl_framework}, $\mathcal{O}_v = \mathcal{C}_v\cup \mathcal{A}_v$. However, $\mathcal{O}_g$ is not produced as $\mathcal{O}_g = \mathcal{C}_g\cup \mathcal{A}_g$. Instead, $\mathcal{O}_g$ are the groups for which there are variables present in $\mathcal{O}_v$ to give a measure of the number groups used in the optimization.
    \item $\mathcal{K}_v,\mathcal{K}_g$: the number of variable/group KKT violations. DFR only checks for variable violations and sparsegl only checks for group violations.
    \item $\mathcal{O}_v\mathbin{/}\mathcal{A}_v$ and $\mathcal{O}_g\mathbin{/}\mathcal{A}_g$: the proportion of variables/groups used in the optimization against the number active. Defines how efficient the rules are. A low value is best. 
    \item $\mathcal{O}_v\mathbin{/}p$ and $\mathcal{O}_g\mathbin{/}m$: the variable/group input proportion, as defined in Section \ref{sec:results}. A low value is best.
    \item $\ell_2$ distance to no screen: $\ell_2$ distance from the fitted values obtained with screening to without.
    \item IF: the improvement factor, as defined in Section \ref{sec:results}.
\end{itemize}
\newpage
\subsection{Setup}\label{appendix:set_up_info}
\begin{table}[!h]
    \centering
    \caption{Default model, data, and algorithm parameters for the synthetic and real data analyses.}
    \label{tbl:appendix_model_data_simulation}
    \begin{tabular}{@{}llcc@{}}
        \toprule
        \textbf{Category} & \textbf{Parameter} & \multicolumn{2}{c}{\textbf{Values}} \\
        \cmidrule(lr){3-4}
         &  & {\textbf{Synthetic}}& {\textbf{Real}} \\
        \midrule
        \multicolumn{4}{@{}l}{\textbf{Model}} \\
        \midrule
        & $\alpha$  & $0.95$ & $0.95$ \\
        & $b_1=b_2$ (aSGL only) & $0.1$ & $0.1$ \\
        & Path length ($l$) & $50$ & $100$ \\
        & Path termination ($\lambda_l$) & $0.1\lambda_1$ & $0.2\lambda_1$ \\
        &Path shape & Log-linear & Log-linear\\
        \midrule
        \multicolumn{4}{@{}l}{\textbf{Data}} \\
        \midrule
        & $p$ & $1000$ & -\\
        & $n$ & $200$ & - \\
        & $m$ (uneven cases) & $22$ & - \\
        & $m$ (even cases) & $50$ & - \\
        &Group sizes (uneven cases) & $[3,100]$ & - \\
        &Group sizes (even cases) & $20$ & - \\
        & Signal $\beta$ (signal strength of zero) & $\mathcal{N}(0,4)$ & - \\
        & Variable sparsity & $0.2$ & - \\
        & Group sparsity & $0.2$ & - \\
        & Correlation ($\rho$) & $0.3$ & -\\
        & Noise ($\epsilon$) & $\mathcal{N}(0,1)$& - \\  
        \midrule
        \multicolumn{4}{@{}l}{\textbf{Algorithm (ATOS/BCD)}} \\
        \midrule
        & Maximum iterations & $5000$ & $10000$ \\
        & Backtracking (ATOS only) & $0.7$ & $0.7$ \\
        & Maximum backtracking iterations (ATOS only)  & $100$ & $100$ \\
        &Convergence tolerance & $10^{-5}$& $10^{-5}$ \\
        &Standardization & $\ell_2$ & $\ell_2$ \\
        &Intercept & Yes for linear & Yes for linear \\
        &Warm starts & Yes & Yes \\
        \bottomrule
    \end{tabular}
\end{table}

\clearpage
\newpage
\subsection{Runtime Breakdown}\label{appendix:comp-breakdown}
The runtime breakdowns for two cases are presented to illustrate the computational cost of screening.
\begin{figure}[h!]
\centering
\begin{minipage}{0.48\textwidth}
\centering
\captionof{table}{Figure \ref{fig:figure-group-3-if} runtime breakdown.}
\label{tbl:comp-breakdown-1}
\begin{tabular}{@{}lcc@{}}
\toprule
\textbf{Component} & \textbf{DFR-SGL} & \textbf{DFR-aSGL} \\
\midrule
Fitting algorithm          & 88\%     & 86\% \\
$\epsilon$-norm evaluation & 3.9\%      & 3.6\%  \\
Group screening            & 3.9\%      & 3.6\%  \\
Variable screening         & 0.01\%   & 0.01\% \\
KKT checks                 & 0.6\%   & 0.6\%  \\
\bottomrule
\end{tabular}
\end{minipage}
\hfill
\begin{minipage}{0.48\textwidth}
\centering
\captionof{table}{\textit{scheetz} runtime breakdown (Figure \ref{fig:real_data_bar_standardized}).}
\label{tbl:comp-breakdown-2}
\begin{tabular}{@{}lcc@{}}
\toprule
\textbf{Component} & \textbf{DFR-SGL} & \textbf{DFR-aSGL} \\
\midrule
Fitting algorithm          & 77\% & 65\% \\
$\epsilon$-norm evaluation & 0.46\%   & 0.57\%  \\
Group screening            & 0.46\%   & 0.58\%  \\
Variable screening         & 0.01\%   & 0.02\%  \\
KKT checks                 & 0.2\%    & 0.3\%  \\
\bottomrule
\end{tabular}
\end{minipage}
\end{figure}

\subsection{Additional Results for the Linear Model}\label{appendix:lin_add_results}

\begin{figure}[H]
  \centering
    \includegraphics[width=.6\columnwidth]{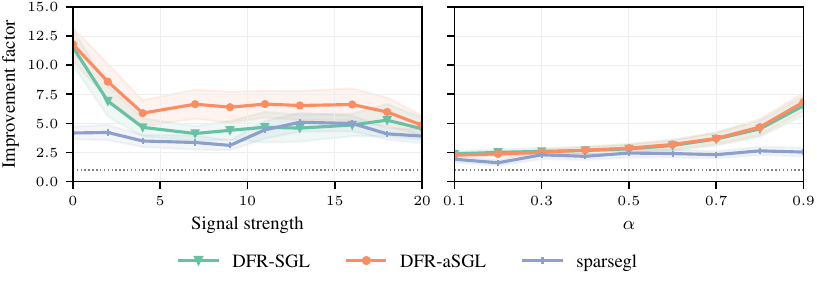}
  \caption{The improvement factor for the strong rules applied to synthetic data, under the linear model, as a function of the signal strength (left) and $\alpha$ (right), with $95\%$ confidence intervals. %The input proportion is shown in Figure \ref{fig:appendix_case_2_input_propn}.
  }
  \label{fig:figure-group-6-if}
\end{figure}

\begin{figure}[H]
  \centering
    \includegraphics[width=.6\columnwidth]{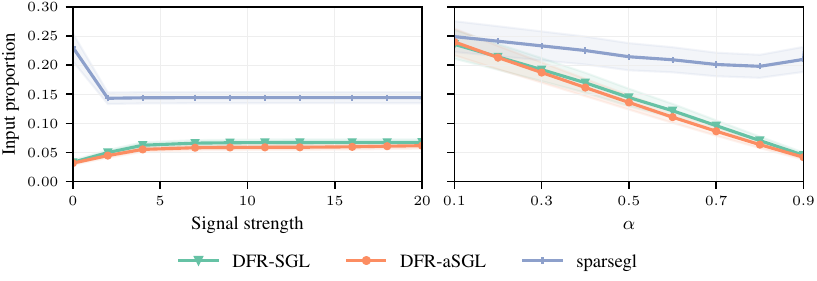}
  \caption{The input proportion for the strong rules applied to synthetic data, under the linear model, as a function of the signal strength (left) and $\alpha$ (right), with $95\%$ confidence intervals. %The input proportion is shown in Figure \ref{fig:appendix_case_2_input_propn}.
  }
  \label{fig:figure-group-6-ip}
\end{figure}
\begin{figure}[H]
  \centering
    \includegraphics[width=.6\columnwidth]{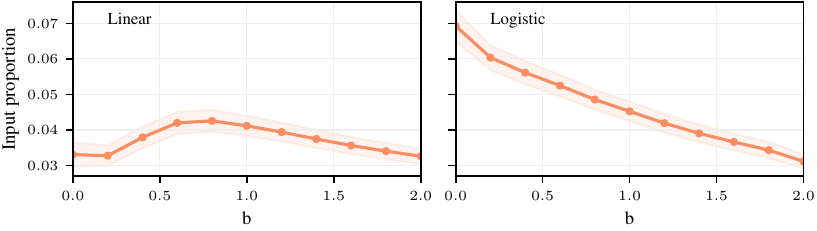}
  \caption{The input proportion of DFR-aSGL under different weights $b_1=b_2$, shown for the linear (left) and logistic (right) models, with $95\%$ confidence intervals.}
  \label{fig:figure-group-4-ip}
\end{figure}
\subsubsection{Tables for GAP safe simulations}
\begin{table}[H]
\caption{Group screening metrics corresponding to the GAP safe simulations (Figures \ref{fig:figure-group-3-if}, \ref{fig:figure-group-3-ip}, and \ref{fig:figure-group-3-i-ip-if}) averaged over all cases and path points, shown with standard errors.}
\vskip 0.15in
\begin{center}
\resizebox{\textwidth}{!}{
\begin{small}
\begin{sc}
\begin{tabular}{lrrrrrr}
\toprule
 & \multicolumn{4}{c}{Cardinality} &\multicolumn{2}{c}{Input proportion}  \\
\cmidrule(lr){2-5}\cmidrule(lr){6-7} 
Method &$\mathcal{A}_g$ & $\mathcal{C}_g$ &  $\mathcal{O}_g$& $\mathcal{K}_g$ &$\mathcal{O}_g \mathbin{/} \mathcal{A}_g$ & $\mathcal{O}_g \mathbin{/} m$  \\
\midrule
DFR-aSGL & $7.86\pm0.02$ & $10.00\pm0.03$ & $10.00\pm0.03$ & -- & $1.1751\pm7\times 10^{-4}$ & $0.2001\pm5\times 10^{-4}$ \\ 
  DFR-SGL & $8.16\pm0.02$ & $10.47\pm0.02$ & $10.47\pm0.02$ & -- & $1.1937\pm8\times 10^{-4}$ & $0.2094\pm5\times 10^{-4}$ \\ 
  DFR-SGL BCD & $8.45\pm0.02$ & $10.07\pm0.02$ & $10.07\pm0.02$ & -- & $1.1893\pm8\times 10^{-4}$ & $0.2015\pm4\times 10^{-4}$ \\ 
  sparsegl & $8.16\pm0.02$ & $11.72\pm0.04$ & $11.72\pm0.04$ & $3\times 10^{-5}\pm2\times 10^{-5}$ & $1.5837\pm0.0111$ & $0.2344\pm7\times 10^{-4}$ \\ 
  GAP sequential & $8.55\pm0.02$ & $10.20\pm0.02$ & $10.20\pm0.02$ & -- & $1.1934\pm0.0018$ & $0.2040\pm4\times 10^{-4}$ \\ 
  GAP dynamic & $8.55\pm0.02$ & $10.20\pm0.02$ & $10.20\pm0.02$ & -- & $1.1917\pm0.0017$ & $0.2039\pm4\times 10^{-4}$ \\ 
      \arrayrulecolor{black}\bottomrule
\end{tabular}
\label{tbl:appendix_gap_sims_grp}
\end{sc}
\end{small}
}
\end{center}
\vskip -0.1in
\end{table}

\begin{table}[H]
\caption{Variable screening metrics corresponding to the GAP safe simulations (Figures \ref{fig:figure-group-3-if}, \ref{fig:figure-group-3-ip}, and \ref{fig:figure-group-3-i-ip-if}) averaged over all cases and path points, shown with standard errors.}
\vskip 0.15in
\begin{center}
\resizebox{\textwidth}{!}{
\begin{small}
\begin{sc}
\begin{tabular}{lrrrrrr}
\toprule
 & \multicolumn{4}{c}{Cardinality} &\multicolumn{2}{c}{Input proportion}  \\
\cmidrule(lr){2-5}\cmidrule(lr){6-7} 
Method &$\mathcal{A}_v$ & $\mathcal{C}_v$ &  $\mathcal{O}_v$& $\mathcal{K}_v$ &$\mathcal{O}_v \mathbin{/} \mathcal{A}_v$ & $\mathcal{O}_v \mathbin{/} p$  \\
\midrule
DFR-aSGL & $61.77\pm0.22$ & $36.70\pm0.34$ & $95.99\pm0.42$ & $0.0092\pm3\times 10^{-4}$ & $1.3464\pm0.0011$ & $0.0960\pm4\times 10^{-4}$ \\ 
  DFR-SGL & $65.37\pm0.22$ & $38.30\pm0.34$ & $101.07\pm0.43$ & $0\pm0$ & $1.3576\pm0.0011$ & $0.1011\pm4\times 10^{-4}$ \\ 
  DFR-SGL BCD & $72.69\pm0.25$ & $23.35\pm0.09$ & $93.11\pm0.32$ & $0.0250\pm0.0054$ & $1.3281\pm0.0011$ & $0.0931\pm3\times 10^{-4}$ \\ 
  sparsegl & $65.41\pm0.22$ & $234.39\pm0.70$ & $234.39\pm0.70$ & -- & $11.5604\pm0.2001$ & $0.2344\pm7\times 10^{-4}$ \\ 
  GAP sequential & $73.01\pm0.25$ & $93.85\pm0.33$ & $93.85\pm0.33$ & -- & $1.3292\pm0.0022$ & $0.0938\pm3\times 10^{-4}$ \\ 
  GAP dynamic & $73.01\pm0.25$ & $93.38\pm0.33$ & $93.38\pm0.33$ & --& $1.2927\pm0.0020$ & $0.0934\pm3\times 10^{-4}$ \\ 
  \arrayrulecolor{black}\bottomrule
\end{tabular}
\label{tbl:appendix_gap_sims_var}
\end{sc}
\end{small}
}
\end{center}
\vskip -0.1in
\end{table}

  \begin{table}[H]
\caption{Model fitting metrics corresponding to the GAP safe simulations (Figures \ref{fig:figure-group-3-if}, \ref{fig:figure-group-3-ip}, and \ref{fig:figure-group-3-i-ip-if}) averaged over all cases and path points, shown with standard errors. The timing results are the average time taken to evaluate the full path on a dataset.}
\vskip 0.15in
\begin{center}
\resizebox{\textwidth}{!}{
\begin{small}
\begin{sc}
\begin{tabular}{lrrrrrrrr}
\toprule
 & \multicolumn{3}{c}{Timings} &\multicolumn{2}{c}{Iterations}&$\ell_2$ distance&\multicolumn{2}{c}{Failed convergence}  \\
\cmidrule(lr){2-4}\cmidrule(lr){5-6} \cmidrule(lr){8-9} 
Method & No screen (s) & Screen (s) &  IF & No screen & Screen&to no screen& No screen & Screen  \\
\midrule
DFR-aSGL & $659.89\pm6.50$ & $154.92\pm2.39$ & $7.01\pm0.15$ & $271.60\pm2.36$ & $174.55\pm2.49$ & $4\times 10^{-4}\pm3\times 10^{-6}$ & $0\pm0$ & $0\pm0$ \\ 
  DFR-SGL & $685.49\pm6.27$ & $157.1\pm2.24$ & $7.98\pm0.20$ & $286.13\pm2.04$ & $185.47\pm2.43$ & $4\times 10^{-4}\pm3\times 10^{-6}$ & $0\pm0$ & $0\pm0$ \\ 
  DFR-SGL BCD & $131.46\pm1.12$ & $32.02\pm0.55$ & $5.55\pm0.08$ & $252.26\pm1.94$ & $139.39\pm1.15$ & $2\times 10^{-7}\pm1\times 10^{-8}$ & $0\pm0$ & $0\pm0$ \\ 
  sparsegl & $685.49\pm6.27$ & $275.37\pm4.43$ & $3.44\pm0.08$ & $286.13\pm2.04$ & $278.18\pm2.28$ & $4\times 10^{-4}\pm3\times 10^{-6}$ & $0\pm0$ & $0\pm0$ \\ 
  GAP sequential & $0.11\pm 3 \times 10^{-3}$ & $0.11\pm 3 \times 10^{-3}$ & $0.98\pm0.01$ & -- & -- & -- & --&  \\ 
  GAP dynamic & $0.11\pm 3 \times 10^{-3}$ & $0.11\pm 3 \times 10^{-3}$ & $1.00\pm0.01$ & -- & -- & -- &-- & -- \\ 
   \arrayrulecolor{black}\bottomrule
\end{tabular}
\label{tbl:appendix_gap_sims_model}
\end{sc}
\end{small}
}
\end{center}
\vskip -0.1in
\end{table}
\subsubsection{Tables for other simulations}
\begin{table}[H]
\caption{Group screening metrics corresponding to the other linear model simulations (Figures \ref{fig:figure-group-4-ip-if}, \ref{fig:figure-group-2-if}, \ref{fig:figure-group-2-ip}, \ref{fig:figure-group-6-if}, and \ref{fig:figure-group-6-ip} and Table \ref{tbl:results_inter_cv}) averaged over all cases and path points, shown with standard errors.}
\vskip 0.15in
\begin{center}
\resizebox{\textwidth}{!}{
\begin{small}
\begin{sc}
\begin{tabular}{lrrrrrr}
\toprule
 & \multicolumn{4}{c}{Cardinality} &\multicolumn{2}{c}{Input proportion}  \\
\cmidrule(lr){2-5}\cmidrule(lr){6-7} 
Method &$\mathcal{A}_g$ & $\mathcal{C}_g$ &  $\mathcal{O}_g$& $\mathcal{K}_g$ &$\mathcal{O}_g \mathbin{/} \mathcal{A}_g$ & $\mathcal{O}_g \mathbin{/} m$  \\
\midrule
DFR-aSGL & $8.46\pm0.03$ & $10.55\pm0.04$ & $10.54\pm0.04$ & -- & $1.1414\pm5\times 10^{-4}$ & $0.1968\pm4\times 10^{-4}$ \\ 
  DFR-SGL & $8.51\pm0.03$ & $11.16\pm0.04$ & $11.16\pm0.04$ & -- & $1.1792\pm6\times 10^{-4}$ & $0.2084\pm4\times 10^{-4}$ \\ 
  sparsegl & $8.51\pm0.03$ & $10.25\pm0.04$ & $10.25\pm0.04$ & $8\times 10^{-5}\pm2\times 10^{-5}$ & $1.2191\pm0.0026$ & $0.2083\pm4\times 10^{-4}$ \\ 
      \arrayrulecolor{black}\bottomrule
\end{tabular}
\label{tbl:appendix_other_sims_grp}
\end{sc}
\end{small}
}
\end{center}
\vskip -0.1in
\end{table}

\begin{table}[H]
\caption{Varible screening metrics corresponding to the other linear model simulations (Figures \ref{fig:figure-group-4-ip-if}, \ref{fig:figure-group-2-if}, \ref{fig:figure-group-2-ip}, \ref{fig:figure-group-6-if}, and \ref{fig:figure-group-6-ip} and Table \ref{tbl:results_inter_cv}) averaged over all cases and path points, shown with standard errors.}
\vskip 0.15in
\begin{center}
\resizebox{\textwidth}{!}{
\begin{small}
\begin{sc}
\begin{tabular}{lrrrrrr}
\toprule
 & \multicolumn{4}{c}{Cardinality} &\multicolumn{2}{c}{Input proportion}  \\
\cmidrule(lr){2-5}\cmidrule(lr){6-7} 
Method &$\mathcal{A}_v$ & $\mathcal{C}_v$ &  $\mathcal{O}_v$& $\mathcal{K}_v$ &$\mathcal{O}_v \mathbin{/} \mathcal{A}_v$ & $\mathcal{O}_v \mathbin{/} p$  \\
\midrule
DFR-aSGL & $50.95\pm0.13$ & $29.90\pm0.15$ & $78.92\pm0.22$ & $0.0297\pm4\times 10^{-4}$ & $1.4805\pm0.0011$ & $0.0634\pm2\times 10^{-4}$ \\ 
  DFR-SGL & $54.40\pm0.13$ & $32.78\pm0.15$ & $85.16\pm0.22$ & $4\times 10^{-6}\pm4\times 10^{-6}$ & $1.5057\pm0.0014$ & $0.0676\pm2\times 10^{-4}$ \\ 
  sparsegl & $54.42\pm0.13$ & $331.75\pm0.86$ & $331.75\pm0.86$ & -- & $11.4845\pm0.0882$ & $0.2205\pm5\times 10^{-4}$ \\ 
  \arrayrulecolor{black}\bottomrule
\end{tabular}
\label{tbl:appendix_other_sims_var}
\end{sc}
\end{small}
}
\end{center}
\vskip -0.1in
\end{table}

  \begin{table}[H]
\caption{Model fitting metrics corresponding to the other linear model simulations (Figures \ref{fig:figure-group-4-ip-if}, \ref{fig:figure-group-2-if}, \ref{fig:figure-group-2-ip}, \ref{fig:figure-group-6-if}, and \ref{fig:figure-group-6-ip} and Table \ref{tbl:results_inter_cv}) averaged over all cases and path points, shown with standard errors. The timing results are the average time taken to evaluate the full path on a dataset.}
\vskip 0.15in
\begin{center}
\resizebox{\textwidth}{!}{
\begin{small}
\begin{sc}
\begin{tabular}{lrrrrrrrr}
\toprule
 & \multicolumn{3}{c}{Timings} &\multicolumn{2}{c}{Iterations}&$\ell_2$ distance&\multicolumn{2}{c}{Failed convergence}  \\
\cmidrule(lr){2-4}\cmidrule(lr){5-6} \cmidrule(lr){8-9} 
Method & No screen (s) & Screen (s) &  IF& No screen & Screen&to no screen& No screen & Screen  \\
\midrule
DFR-aSGL & $723.99\pm34.04$ & $77.00\pm2.23$ & $11.19\pm0.41$ & $415.87\pm7.41$ & $227.93\pm3.51$ & $7\times 10^{-5}\pm5\times 10^{-7}$ & $0\pm0$ & $0\pm0$ \\ 
  DFR-SGL & $251.59\pm3.93$ & $51.73\pm0.89$ & $9.15\pm0.15$ & $413.10\pm7.12$ & $240.78\pm3.78$ & $6\times 10^{-5}\pm5\times 10^{-7}$ & $0\pm0$ & $0\pm0$ \\ 
  sparsegl & $251.59\pm3.93$ & $138.18\pm5.62$ & $3.59\pm0.05$ & $413.10\pm7.12$ & $353.66\pm5.79$ & $6\times 10^{-5}\pm5\times 10^{-7}$ & $0\pm0$ & $0\pm0$ \\ 
   \arrayrulecolor{black}\bottomrule
\end{tabular}
\label{tbl:appendix_other_sims_model}
\end{sc}
\end{small}
}
\end{center}
\vskip -0.1in
\end{table}
\subsubsection{Cross-validation}
\begin{table}[H]
  \centering
  \captionof{table}{The improvement factor for the strong rules applied to synthetic data, under the linear and logistic models, with 10-fold CV, with standard errors.}
\label{tbl:results_cv}
  \begin{tabular}{lrr}
    \toprule
   Method & Linear & Logistic    \\
    \midrule
    DFR-aSGL    &$3.9\pm 0.2$  &$2.3\pm 0.1$   \\
    DFR-SGL       &$4.2\pm 0.3$    &$2.6\pm 0.1$   \\
    sparsegl & $2.0\pm 0.2$   & $2.1\pm 0.1$   \\
    \bottomrule
  \end{tabular}
\end{table}

\subsection{Interaction Models} \label{appendix:interactions}
\begin{figure}[H]
  \centering
    \includegraphics[width=.6\columnwidth]{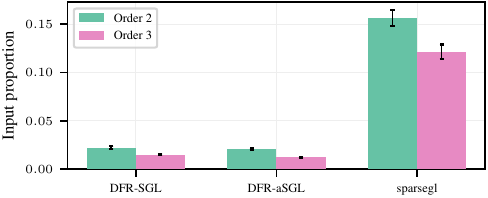}
  \caption{The input proportion for the strong rules applied to synthetic interaction data, under the linear model, with standard errors. The parameters of the data were set as $p=400,n=80$, and $m=52$ groups of sizes in $[3,15]$. The interaction input dimensionality was $p_{O_2} = 2111$ and $p_{O_3}=7338$, with no interaction hierarchy imposed. The sparsity proportion of interaction variables was set to $0.3$ (with the same signal as the marginal effects).}
\label{fig:appendix_case_1_i_input_propn}
\end{figure}
\newpage

\subsection{Results for the Logistic Model}\label{appendix:results_log}
The data input components $\mathbf{X}$, $\beta$, and $\epsilon$ for the logistic model were generated as for the linear models. The class probabilities for the response were calculated using $\sigma(\mathbf{X}\beta + \epsilon)$, where $\sigma$ is the sigmoid function.
\begin{table}[H]
  \centering
  \captionof{table}{The improvement factor for the strong rules applied to synthetic interaction data, under the logistic model, with standard errors. The parameters of the data were set as $p=400,n=80$, and $m=52$ groups of sizes in $[3,15]$. The interaction input dimensionality was $p_{O_2} = 2111$ and $p_{O_3}=7338$, with no interaction hierarchy imposed. The sparsity proportion of interaction variables was set to $0.3$ (with the same signal as the marginal effects).}
\label{tbl:results_inter_cv_log}
  \begin{tabular}{lrr}
    \toprule
     & \multicolumn{2}{c}{Interaction}   \\
    \cmidrule(lr){2-3}
                            Method & Order 2 & Order 3     \\
    \midrule
    DFR-aSGL                     & $6.7\pm 0.4$ & $12.2\pm 0.4$   \\
    DFR-SGL                 & $5.8 \pm 0.2$  & $8.3\pm 0.4$      \\
    sparsegl                     & $1.0\pm 0.1$      & $2.1\pm 0.3$  \\
    \bottomrule
  \end{tabular}
\end{table}
\begin{figure}[H]
  \centering
    \includegraphics[width=.6\columnwidth]{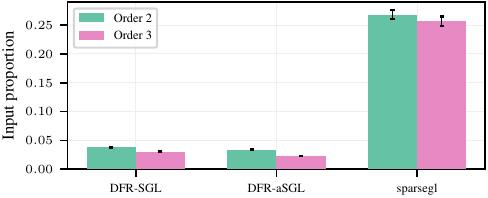}
  \caption{The input proportion for the strong rules applied to synthetic interaction data, under the logistic model, with standard errors. The parameters of the data were set as $p=400,n=80$, and $m=52$ groups of sizes in $[3,15]$. The interaction input dimensionality was $p_{O_2} = 2111$ and $p_{O_3}=7338$, with no interaction hierarchy imposed. The sparsity proportion of interaction variables was set to $0.3$ (with the same signal as the marginal effects).}
\label{fig:appendix_case_1_i_input_propn_log}
\end{figure}

\begin{figure}[H]
  \centering
    \includegraphics[width=.6\columnwidth]{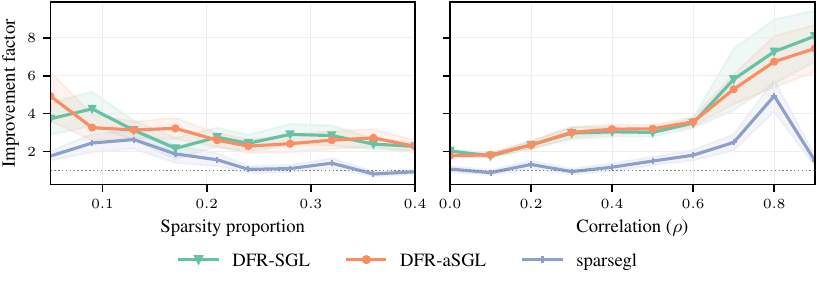}
  \caption{The improvement factor for the strong rules applied to synthetic data, under the logistic model, as a function of the sparsity proportion (left) and data correlation (right), with $95\%$ confidence intervals.
  }
  \label{fig:figure-group-2-log-if}
\end{figure}

\begin{figure}[H]
  \centering
    \includegraphics[width=.6\columnwidth]{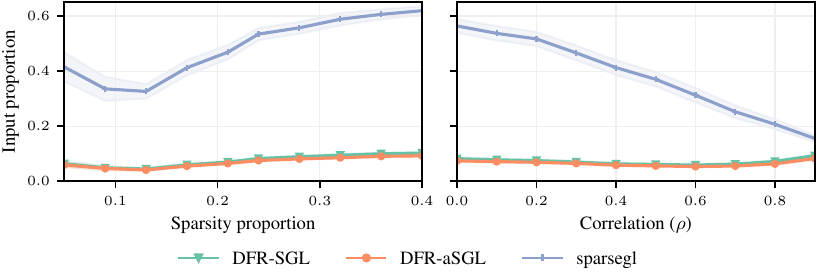}
  \caption{The input proportion for the strong rules applied to synthetic data, under the logistic model, as a function of the sparsity proportion (left) and data correlation (right), with $95\%$ confidence intervals.
  }
  \label{fig:figure-group-2-log-ip}
\end{figure}
\begin{figure}[H]
  \centering
    \includegraphics[width=.6\columnwidth]{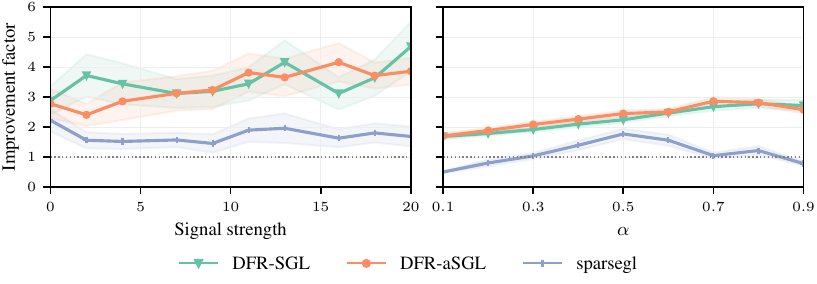}
  \caption{The improvement factor for the strong rules applied to synthetic data, under the logistic model, as a function of the signal strength (left) and $\alpha$ (right), with $95\%$ confidence intervals.%The input proportion is shown in Figure \ref{fig:appendix_case_2_input_propn}.
  }
  \label{fig:figure-group-6-log-if}
\end{figure}

\begin{figure}[H]
  \centering
    \includegraphics[width=.6\columnwidth]{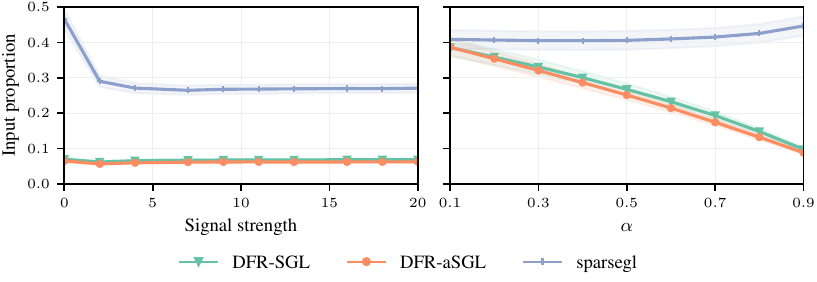}
  \caption{The input proportion for the strong rules applied to synthetic data, under the logistic model, as a function of the signal strength (left) and $\alpha$ (right), with $95\%$ confidence intervals.%The input proportion is shown in Figure \ref{fig:appendix_case_2_input_propn}.
  }
  \label{fig:figure-group-6-log-ip}
\end{figure}
\subsubsection{Tables for logistic simulations}

\begin{table}[H]
\caption{Group screening metrics corresponding to the logistic model simulations (Figures \ref{fig:figure-group-2-log-if}, \ref{fig:figure-group-2-log-ip}, \ref{fig:figure-group-6-log-if}, and \ref{fig:figure-group-6-log-ip} and Table \ref{tbl:results_inter_cv_log}) averaged over all cases and path points, shown with standard errors.}
\vskip 0.15in
\begin{center}
\resizebox{\textwidth}{!}{
\begin{small}
\begin{sc}
\begin{tabular}{lrrrrrr}
\toprule
 & \multicolumn{4}{c}{Cardinality} &\multicolumn{2}{c}{Input proportion}  \\
\cmidrule(lr){2-5}\cmidrule(lr){6-7} 
Method &$\mathcal{A}_g$ & $\mathcal{C}_g$ &  $\mathcal{O}_g$& $\mathcal{K}_g$ &$\mathcal{O}_g \mathbin{/} \mathcal{A}_g$ & $\mathcal{O}_g \mathbin{/} m$  \\
\midrule

DFR-aSGL & $8.08\pm0.01$ & $8.55\pm0.02$ & $8.55\pm0.02$ & --& $1.0292\pm6\times 10^{-4}$ & $0.3528\pm6\times 10^{-4}$ \\ 
  DFR-SGL & $8.56\pm0.02$ & $9.30\pm0.02$ & $9.30\pm0.02$ & -- & $1.0554\pm7\times 10^{-4}$ & $0.3823\pm7\times 10^{-4}$ \\ 
  sparsegl & $8.56\pm0.02$ & $8.87\pm0.02$ & $8.87\pm0.02$ & $5\times 10^{-5}\pm2\times 10^{-5}$ & $1.0808\pm0.0029$ & $0.3697\pm7\times 10^{-4}$ \\ 
      \arrayrulecolor{black}\bottomrule
\end{tabular}
\label{tbl:appendix_log_sims_grp}
\end{sc}
\end{small}
}
\end{center}
\vskip -0.1in
\end{table}

\begin{table}[H]
\caption{Variable screening metrics corresponding to the logistic model simulations (Figures \ref{fig:figure-group-2-log-if}, \ref{fig:figure-group-2-log-ip}, \ref{fig:figure-group-6-log-if}, and \ref{fig:figure-group-6-log-ip} and Table \ref{tbl:results_inter_cv_log}) averaged over all cases and path points, shown with standard errors.}
\vskip 0.15in
\begin{center}
\resizebox{\textwidth}{!}{
\begin{small}
\begin{sc}
\begin{tabular}{lrrrrrr}
\toprule
 & \multicolumn{4}{c}{Cardinality} &\multicolumn{2}{c}{Input proportion}  \\
\cmidrule(lr){2-5}\cmidrule(lr){6-7} 
Method &$\mathcal{A}_v$ & $\mathcal{C}_v$ &  $\mathcal{O}_v$& $\mathcal{K}_v$ &$\mathcal{O}_v \mathbin{/} \mathcal{A}_v$ & $\mathcal{O}_v \mathbin{/} p$  \\
\midrule
DFR-aSGL & $79.76\pm0.24$ & $35.27\pm0.10$ & $111.79\pm0.32$ & $0.0155\pm3\times 10^{-4}$ & $1.5263\pm0.0012$ & $0.1075\pm3\times 10^{-4}$ \\ 
  DFR-SGL & $84.23\pm0.25$ & $40.13\pm0.12$ & $120.84\pm0.34$ & $9\times 10^{-6}\pm7\times 10^{-6}$ & $1.5503\pm0.0013$ & $0.1154\pm3\times 10^{-4}$ \\ 
  sparsegl & $84.26\pm0.25$ & $445.21\pm1.05$ & $445.21\pm1.05$ & -- & $11.6349\pm0.1015$ & $0.4004\pm7\times 10^{-4}$ \\ 
  \arrayrulecolor{black}\bottomrule
\end{tabular}
\label{tbl:appendix_log_sims_var}
\end{sc}
\end{small}
}
\end{center}
\vskip -0.1in
\end{table}

  \begin{table}[H]
\caption{Model fitting metrics corresponding to the logistic model simulations (Figures \ref{fig:figure-group-2-log-if}, \ref{fig:figure-group-2-log-ip}, \ref{fig:figure-group-6-log-if}, and \ref{fig:figure-group-6-log-ip} and Table \ref{tbl:results_inter_cv_log}) averaged over all cases and path points, shown with standard errors. The timing results are the average time taken to evaluate the full path on a dataset.}
\vskip 0.15in
\begin{center}
\resizebox{\textwidth}{!}{
\begin{small}
\begin{sc}
\begin{tabular}{lrrrrrrrr}
\toprule
 & \multicolumn{3}{c}{Timings} &\multicolumn{2}{c}{Iterations}&$\ell_2$ distance&\multicolumn{2}{c}{Failed convergence}  \\
\cmidrule(lr){2-4}\cmidrule(lr){5-6} \cmidrule(lr){8-9} 
Method & No screen (s) & Screen (s) &  IF& No screen & Screen&to no screen& No screen & Screen  \\
\midrule
DFR-aSGL & $132.10\pm7.21$ & $34.75\pm0.66$ & $3.37\pm0.05$ & $125.73\pm2.77$ & $78.95\pm1.10$ & $9\times 10^{-10}\pm2\times 10^{-11}$ & $0\pm0$ & $0\pm0$ \\ 
  DFR-SGL & $103.16\pm3.37$ & $34.27\pm0.56$ & $3.43\pm0.05$ & $141.42\pm2.92$ & $87.88\pm1.30$ & $2\times 10^{-10}\pm8\times 10^{-12}$ & $0\pm0$ & $0\pm0$ \\ 
  sparsegl & $103.16\pm3.37$ & $109.93\pm4.69$ & $1.51\pm0.03$ & $141.42\pm2.92$ & $121.29\pm2.11$ & $2\times 10^{-10}\pm3\times 10^{-12}$ & $0\pm0$ & $0\pm0$ \\ 
   \arrayrulecolor{black}\bottomrule
\end{tabular}
\label{tbl:appendix_log_sims_model}
\end{sc}
\end{small}
}
\end{center}
\vskip -0.1in
\end{table}
\newpage
\clearpage
\newpage
\section{Real Data Analysis}\label{appendix:realdata}

\subsection{Data Description}\label{appendix:real_data_description}
\begin{itemize}
    \item brca1: Gene expression data for breast cancer tissue samples.
    \begin{itemize}
        \item Response (continuous): Gene expression measurements for the BRCA1 gene.
        \item Data matrix: Gene expression measurements for the other genes.
        \item Grouping structure: Variables are grouped via singular value decomposition.
    \end{itemize}
    \item scheetz: Gene expression data in the mammalian eye. 
    \begin{itemize}
        \item Response (continuous): Gene expression measurements for the Trim32 gene.
        \item Data matrix: Gene expression measurements for the other genes.
        \item Grouping structure: Variables are grouped via singular value decomposition.
    \end{itemize}
    \item trust-experts: Survey response data as to how much participants trust experts (e.g., doctors, nurses, scientists) to provide COVID-19 news and information.
    \begin{itemize}
        \item Response (continuous): The trust level of each participant.
        \item Data matrix: Contingency table including factors about participants (e.g., age, gender, ethnicity).
        \item Grouping structure: The factor levels grouped into their original factors.
    \end{itemize}
    \item adenoma: Transcriptome profile data to identify the formation of colorectal adenomas, which are the predominate cause of colorectal cancers. 
    \begin{itemize}
        \item Response (binary): Labels classifying whether the sample came from an adenoma or normal mucosa.
        \item Data matrix: Transcriptome profile measurements.
        \item Grouping structure: Genes were assigned to pathways from all nine gene sets on the Molecular Signature Database.\footnote{downloaded on 08/2024 from \label{note1}\url{gsea-msigdb.org/gsea/msigdb/human/collections.jsp}.}
    \end{itemize}
    \item celiac: Gene expression data of primary leucocytes to identify celiac disease.
    \begin{itemize}
        \item Response (binary): Labels classifying patients into whether they have celiac disease.
        \item Data matrix: Gene expression measurements from the primary leucocytes.
        \item Grouping structure: Genes were mapped to pathways from all nine Molecular Signature Database gene sets.\footnotemark[1]
    \end{itemize}
    \item tumour: Gene expression data of pancreative cancer samples to identify tumorous tissue. 
    \begin{itemize}
        \item Response (binary): Labels classifying whether samples are from normal or tumour tissue.
        \item Data matrix: Gene expression measurements.
        \item Grouping structure: Genes were mapped to pathways from all nine Molecular Signature Database gene sets.\footnotemark[1]
    \end{itemize}
\end{itemize}
\begin{table}[H]
  \caption{Dataset information for the six datasets used in the real data analysis.}
  \label{tbl:appendix_real_dataset_info}
  \centering
\resizebox{\textwidth}{!}{  \begin{tabular}{llllllll}
    \toprule
     Dataset & $p$ & $n$ & $m$ & Group sizes & Type & Source \\
    \midrule
brca1& $17322$ & $536$ & $243$&$[1,6505]$& Linear&\cite{NCI}\tablefootnote{downloaded on 08/2024 from \url{https://iowabiostat.github.io/data-sets/}.\label{reference footnote}} \\
scheetz&  $18975$ & $120$ & $85$&$[1,6274]$& Linear & \cite{scheetz2006RegulationDisease}\textsuperscript{\getrefnumber{reference footnote}}  \\
trust-experts& $101$ & $9759$ & $7$&$[4,51]$& Linear&\cite{Salomon2021TheVaccination}\tablefootnote{downloaded on 08/2024 from \url{https://github.com/dajmcdon/sparsegl}.} \\
adenoma&$18559$ & $64$ & $313$&$[1,741]$& Logistic&\cite{Sabates-Bellver2007TranscriptomeAdenomas}\tablefootnote{downloaded on 08/2024 from \url{https://www.ncbi.nlm.nih.gov/}\label{genetics footnote}.} \\
celiac&$14657$ & $132$ & $276$ & $[1,617]$ & Logistic&\cite{Heap2009ComplexLeucocytes}\textsuperscript{\getrefnumber{genetics footnote}}\\
tumour&$18559$ & $52$ & $313$ & $[1,741]$ & Logistic&\cite{Pei2009FKBP51Akt,Ellsworth2013ContributionAdenocarcinoma,Li2016GeneticCancer}\textsuperscript{\getrefnumber{genetics footnote}}\\
    \bottomrule
  \end{tabular}
  }
\end{table}
\subsection{Additional Results for the Real Data}\label{appendix:real_add_results}
\newpage

\begin{sidewaystable}[htbp]
\begin{table}[H]
\caption{Group screening metrics corresponding to the real data studies (Figures \ref{fig:real_data_bar_standardized}, \ref{fig:appendix_real_data_input_propn}, and \ref{fig:real_data_pathwise})  averaged over all path points, shown with standard errors.}
\vskip 0.15in
\begin{center}
\resizebox{\textwidth}{!}{
\begin{small}
\begin{sc}
\begin{tabular}{llrrrrrr}
\toprule
 && \multicolumn{4}{c}{Cardinality} &\multicolumn{2}{c}{Input proportion}  \\
\cmidrule(lr){3-6}\cmidrule(lr){7-8} 
Method & Dataset&$\mathcal{A}_g$ & $\mathcal{C}_g$ &  $\mathcal{O}_g$& $\mathcal{K}_g$ &$\mathcal{O}_g \mathbin{/} \mathcal{A}_g$ & $\mathcal{O}_g \mathbin{/} m$  \\
\midrule
DFR-aSGL & adenoma & $1.41\pm0.08$ & $1.38\pm0.08$ & $1.42\pm0.08$ & - & $1.0053\pm0.0053$ & $0.0046\pm3\times 10^{-4}$ \\ 
  DFR-SGL & adenoma & $3.30\pm0.21$ & $13.46\pm0.68$ & $13.46\pm0.68$ & - & $4.6121\pm0.2003$ & $0.0430\pm0.0022$ \\ 
  sparsegl & adenoma & $3.32\pm0.21$ & $8.59\pm0.53$ & $8.59\pm0.53$ & $0\pm0$ & $2.6141\pm0.0942$ & $0.0274\pm0.0017$ \\ 
          \arrayrulecolor{black!20}\midrule
  DFR-aSGL & celiac & $19.41\pm1.65$ & $29.94\pm2.45$ & $28.82\pm2.35$ & - & $1.4412\pm0.0273$ & $0.1044\pm0.0085$ \\ 
  DFR-SGL & celiac & $15.35\pm1.45$ & $22.04\pm2.03$ & $22.04\pm2.03$ & -& $1.4367\pm0.0276$ & $0.0799\pm0.0074$ \\ 
  sparsegl & celiac & $15.36\pm1.46$ & $19.32\pm1.84$ & $19.32\pm1.84$ & $0\pm0$ & $1.2415\pm0.0213$ & $0.0700\pm0.0067$ \\ 
          \arrayrulecolor{black!20}\midrule
  DFR-aSGL & brca1 & $3.84\pm0.26$ & $4.17\pm0.33$ & $4.16\pm0.33$ & -& $1.0439\pm0.0121$ & $0.0171\pm0.0013$ \\ 
  DFR-SGL & brca1 & $5.60\pm0.48$ & $6.90\pm0.60$ & $6.90\pm0.60$ &- & $1.2023\pm0.0218$ & $0.0284\pm0.0025$ \\ 
  sparsegl & brca1 & $5.59\pm0.48$ & $6.25\pm0.55$ & $6.25\pm0.55$ & $0\pm0$ & $1.1062\pm0.0176$ & $0.0257\pm0.0022$ \\ 
          \arrayrulecolor{black!20}\midrule
  DFR-aSGL & scheetz & $2.19\pm0.12$ & $2.37\pm0.16$ & $2.39\pm0.16$ & -& $1.0515\pm0.0121$ & $0.0282\pm0.0019$ \\ 
  DFR-SGL & scheetz & $0.61\pm0.08$ & $0.86\pm0.12$ & $0.86\pm0.12$ & - & $1.3537\pm0.0531$ & $0.0101\pm0.0014$ \\ 
  sparsegl & scheetz & $0.61\pm0.08$ & $0.73\pm0.10$ & $0.73\pm0.10$ & $0\pm0$ & $1.1829\pm0.0486$ & $0.0086\pm0.0012$ \\ 
          \arrayrulecolor{black!20}\midrule
  DFR-aSGL & trust-experts & $3.41\pm0.08$ & $3.37\pm0.09$ & $3.41\pm0.08$ & - & $1.0000\pm0.0000$ & $0.4877\pm0.0118$ \\ 
  DFR-SGL & trust-experts & $3.34\pm0.08$ & $3.37\pm0.08$ & $3.37\pm0.08$ & - & $1.0185\pm0.0117$ & $0.4820\pm0.0113$ \\ 
  sparsegl & trust-experts & $3.30\pm0.09$ & $0.04\pm0.02$ & $3.30\pm0.09$ & $0\pm0$ & $1.0000\pm0.0000$ & $0.4719\pm0.0127$ \\ 
          \arrayrulecolor{black!20}\midrule
  DFR-aSGL & tumour & $3.94\pm0.22$ & $4.96\pm0.30$ & $4.98\pm0.30$ & - & $1.2228\pm0.0240$ & $0.0159\pm1\times 10^{-3}$ \\ 
  DFR-SGL & tumour & $5.02\pm0.24$ & $8.80\pm0.38$ & $8.80\pm0.38$ & - & $1.8253\pm0.0357$ & $0.0281\pm0.0012$ \\ 
  sparsegl & tumour & $5.02\pm0.24$ & $6.77\pm0.29$ & $6.77\pm0.29$ & $0\pm0$ & $1.4276\pm0.0351$ & $0.0216\pm9\times 10^{-4}$ \\ 
      \arrayrulecolor{black}\bottomrule
\end{tabular}
\label{tbl:appendix_real_data_grp}
\end{sc}
\end{small}
}
\end{center}
\vskip -0.1in
\end{table}
\end{sidewaystable}

\begin{sidewaystable}
\begin{table}[H]
\caption{Variable screening metrics corresponding to the real data studies (Figures \ref{fig:real_data_bar_standardized}, \ref{fig:appendix_real_data_input_propn}, and \ref{fig:real_data_pathwise})  averaged over all path points, shown with standard errors.}
\vskip 0.15in
\begin{center}
\resizebox{\textwidth}{!}{
\begin{small}
\begin{sc}
\begin{tabular}{llrrrrrr}
\toprule
 && \multicolumn{4}{c}{Cardinality} &\multicolumn{2}{c}{Input proportion}  \\
\cmidrule(lr){3-6}\cmidrule(lr){7-8} 
Method & Dataset&$\mathcal{A}_v$ & $\mathcal{C}_v$ &  $\mathcal{O}_v$& $\mathcal{K}_v$ &$\mathcal{O}_v \mathbin{/} \mathcal{A}_v$ & $\mathcal{O}_v \mathbin{/} p$  \\
\midrule
DFR-aSGL & adenoma & $3.38\pm0.21$ & $0.81\pm0.11$ & $4.15\pm0.30$ & $0.0404\pm0.0199$ & $1.1828\pm0.0259$ & $2\times 10^{-4}\pm2\times 10^{-5}$ \\ 
  DFR-SGL & adenoma & $14.38\pm1.11$ & $61.47\pm2.90$ & $75.51\pm3.85$ & $0\pm0$ & $8.9655\pm0.7776$ & $0.0041\pm2\times 10^{-4}$ \\ 
  sparsegl & adenoma & $14.41\pm1.12$ & $308.64\pm20.81$ & $308.64\pm20.81$ & - & $26.0099\pm1.6361$ & $0.0166\pm0.0011$ \\ 
          \arrayrulecolor{black!20}\midrule
  DFR-aSGL & celiac & $40.13\pm3.85$ & $29.63\pm2.79$ & $68.61\pm6.51$ & $0.1010\pm0.0337$ & $1.6545\pm0.0442$ & $0.0047\pm4\times 10^{-4}$ \\ 
  DFR-SGL & celiac & $37.13\pm3.92$ & $26.11\pm3.05$ & $61.94\pm6.77$ & $0\pm0$ & $1.6187\pm0.0455$ & $0.0042\pm5\times 10^{-4}$ \\ 
  sparsegl & celiac & $37.26\pm3.94$ & $1019.13\pm106.78$ & $1019.13\pm106.78$ & -& $27.6753\pm1.1176$ & $0.0695\pm0.0073$ \\ 
          \arrayrulecolor{black!20}\midrule
  DFR-aSGL & brca1 & $135.75\pm4.69$ & $31.93\pm2.77$ & $165.83\pm6.19$ & $0.0505\pm0.0221$ & $1.1998\pm0.0116$ & $0.0096\pm4\times 10^{-4}$ \\ 
  DFR-SGL & brca1 & $241.69\pm7.34$ & $63.42\pm4.26$ & $301.8\pm8.81$ & $0\pm0$ & $3.9386\pm2.6945$ & $0.0174\pm5\times 10^{-4}$ \\ 
  sparsegl & brca1 & $241.59\pm7.34$ & $6762.31\pm186.59$ & $6762.31\pm186.59$ & -& $95.8171\pm66.017$ & $0.3904\pm0.0108$ \\ 
          \arrayrulecolor{black!20}\midrule
  DFR-aSGL & scheetz & $743.43\pm13.50$ & $281.91\pm13.50$ & $1019.11\pm20.19$ & $0.0202\pm0.0142$ & $1.3678\pm0.0050$ & $0.0537\pm0.0011$ \\ 
  DFR-SGL & scheetz & $501.78\pm60.68$ & $344.92\pm50.24$ & $836.23\pm101.02$ & $0\pm0$ & $1.6688\pm0.0651$ & $0.0441\pm0.0053$ \\ 
  sparsegl & scheetz & $501.49\pm60.66$ & $3030.24\pm385.60$ & $3030.24\pm385.60$ & -& $6.1978\pm0.3472$ & $0.1597\pm0.0203$ \\ 
          \arrayrulecolor{black!20}\midrule
  DFR-aSGL & trust-experts & $4.83\pm0.20$ & $0.17\pm0.04$ & $4.96\pm0.21$ & $0.0404\pm0.0199$ & $1.0231\pm0.0067$ & $0.0491\pm0.0021$ \\ 
  DFR-SGL & trust-experts & $5.10\pm0.28$ & $0.35\pm0.07$ & $5.36\pm0.29$ & $0\pm0$ & $1.0608\pm0.0155$ & $0.0531\pm0.0029$ \\ 
  sparsegl & trust-experts & $5.07\pm0.28$ & $0.27\pm0.14$ & $21.42\pm0.65$ & - & $4.9628\pm0.1448$ & $0.2121\pm0.0065$ \\ 
          \arrayrulecolor{black!20}\midrule
  DFR-aSGL & tumour & $7.40\pm0.58$ & $3.31\pm0.31$ & $10.57\pm0.84$ & $0.0202\pm0.0142$ & $1.3363\pm0.0394$ & $6\times 10^{-4}\pm5\times 10^{-5}$ \\ 
  DFR-SGL & tumour & $10.70\pm0.72$ & $9.87\pm0.50$ & $20.34\pm1.09$ & $0\pm0$ & $2.3432\pm0.1189$ & $0.0011\pm6\times 10^{-5}$ \\ 
  sparsegl & tumour & $10.70\pm0.72$ & $246.8\pm15.39$ & $246.8\pm15.39$ & -& $25.9945\pm0.9193$ & $0.0133\pm8\times 10^{-4}$ \\ 
  \arrayrulecolor{black}\bottomrule
\end{tabular}
\label{tbl:appendix_real_data_var}
\end{sc}
\end{small}
}
\end{center}
\vskip -0.1in
\end{table}
\end{sidewaystable}

\begin{sidewaystable}[htbp]
    \begin{table}[H]
\caption{Model fitting metrics corresponding to the real data studies (Figures \ref{fig:real_data_bar_standardized}, \ref{fig:appendix_real_data_input_propn}, and \ref{fig:real_data_pathwise})  averaged over all path points, shown with standard errors. There are no standard errors for the timing results as the time to calculate the whole path was evaluated.}
\vskip 0.15in
\begin{center}
\resizebox{\textwidth}{!}{
\begin{small}
\begin{sc}
\begin{tabular}{llrrrrrrrr}
\toprule
 && \multicolumn{3}{c}{Timings} &\multicolumn{2}{c}{Iterations}&$\ell_2$ distance&\multicolumn{2}{c}{Failed convergence}  \\
\cmidrule(lr){3-5}\cmidrule(lr){6-7} \cmidrule(lr){9-10} 
Method &Dataset& No screen (s) & Screen (s) &  I.F.& No screen & Screen&to no screen& No screen & Screen  \\
\midrule
DFR-aSGL & adenoma & $8476.04$ & $7.91$ & $1072.10$ & $8903.76\pm270.64$ & $119.18\pm14.69$ & $7\times 10^{-5}\pm6\times 10^{-6}$ & $0.8384\pm0.0372$ & $0\pm0$ \\ 
  DFR-SGL & adenoma & $9017.70$ & $149.10$ & $60.48$ & $9272.90\pm205.31$ & $4374.64\pm286.62$ & $3\times 10^{-5}\pm2\times 10^{-6}$ & $0.8687\pm0.0341$ & $0\pm0$ \\ 
  sparsegl & adenoma & $9017.70$ & $198.36$ & $45.46$ & $9272.90\pm205.31$ & $5140.16\pm390.91$ & $3\times 10^{-5}\pm2\times 10^{-6}$ & $0.8687\pm0.0341$ & $0.0404\pm0.0199$ \\ 
         \arrayrulecolor{black!20}\midrule
  DFR-aSGL & celiac & $1188.14$ & $15.86$ & $74.89$ & $1042.07\pm90.79$ & $120.96\pm14.51$ & $1\times 10^{-6}\pm2\times 10^{-7}$ & $0\pm0$ & $0\pm0$ \\ 
  DFR-SGL & celiac & $1391.78$ & $10.31$ & $134.95$ & $1195.34\pm98.13$ & $75.37\pm9.60$ & $2\times 10^{-7}\pm1\times 10^{-8}$ & $0\pm0$ & $0\pm0$ \\ 
  sparsegl & celiac & $1391.78$ & $16.49$ & $84.40$ & $1195.34\pm98.13$ & $93.29\pm8.86$ & $8\times 10^{-8}\pm3\times 10^{-9}$ & $0\pm0$ & $0\pm0$ \\ 
          \arrayrulecolor{black!20}\midrule
  DFR-aSGL & brca1 & $21889.33$ & $119.16$ & $183.69$ & $1653.45\pm33.90$ & $243.27\pm11.70$ & $2\times 10^{-9}\pm7\times 10^{-10}$ & $0\pm0$ & $0\pm0$ \\ 
  DFR-SGL & brca1 & $22227.01$ & $103.78$ & $214.17$ & $1674.07\pm19.77$ & $334.16\pm13.44$ & $2\times 10^{-12}\pm3\times 10^{-13}$ & $0\pm0$ & $0\pm0$ \\ 
  sparsegl & brca1 & $22227.01$ & $4132.04$ & $5.38$ & $1674.07\pm19.77$ & $1580.73\pm44.23$ & $4\times 10^{-14}\pm1\times 10^{-14}$ & $0\pm0$ & $0\pm0$ \\ 
          \arrayrulecolor{black!20}\midrule
  DFR-aSGL & scheetz & $90569.05$ & $132.20$ & $685.08$ & $6040.81\pm457.60$ & $1030.48\pm90.27$ & $1\times 10^{-6}\pm3\times 10^{-7}$ & $0.5657\pm0.0501$ & $0\pm0$ \\ 
  DFR-SGL & scheetz & $68084.77$ & $2246.13$ & $30.31$ & $1891.21\pm386.40$ & $642.38\pm136.12$ & $3\times 10^{-9}\pm8\times 10^{-10}$ & $0.1818\pm0.0390$ & $0\pm0$ \\ 
  sparsegl & scheetz & $68084.77$ & $6666.65$ & $10.21$ & $1891.21\pm386.40$ & $1890.45\pm386.44$ & $5\times 10^{-22}\pm2\times 10^{-22}$ & $0.1818\pm0.0390$ & $0.1818\pm0.0390$ \\ 
          \arrayrulecolor{black!20}\midrule
  DFR-aSGL & trust-experts & $5.96$ & $3.10$ & $1.92$ & $76.74\pm1.80$ & $85.11\pm3.96$ & $1\times 10^{-11}\pm3\times 10^{-12}$ & $0\pm0$ & $0\pm0$ \\ 
  DFR-SGL & trust-experts & $7.29$ & $3.20$ & $2.28$ & $98.36\pm4.17$ & $92.55\pm5.93$ & $4\times 10^{-11}\pm9\times 10^{-12}$ & $0\pm0$ & $0\pm0$ \\ 
  sparsegl & trust-experts & $7.29$ & $4.52$ & $1.61$ & $98.36\pm4.17$ & $104.24\pm5.15$ & $2\times 10^{-6}\pm2\times 10^{-6}$ & $0\pm0$ & $0\pm0$ \\ 
          \arrayrulecolor{black!20}\midrule
  DFR-aSGL & tumour & $7027.69$ & $86.25$ & $81.48$ & $8783.1\pm265.36$ & $82.75\pm5.61$ & $3\times 10^{-8}\pm9\times 10^{-9}$ & $0.6768\pm0.0472$ & $0\pm0$ \\ 
  DFR-SGL & tumour & $7466.83$ & $89.43$ & $83.49$ & $9272.02\pm224.60$ & $186.27\pm7.64$ & $3\times 10^{-9}\pm2\times 10^{-10}$ & $0.8586\pm0.0352$ & $0\pm0$ \\ 
  sparsegl & tumour & $7466.83$ & $90.03$ & $82.93$ & $9272.02\pm224.60$ & $197.40\pm9.38$ & $2\times 10^{-9}\pm2\times 10^{-10}$ & $0.8586\pm0.0352$ & $0\pm0$ \\ 
   \arrayrulecolor{black}\bottomrule
\end{tabular}
\label{tbl:appendix_real_data_other}
\end{sc}
\end{small}
}
\end{center}
\vskip -0.1in
\end{table}
\end{sidewaystable}

%%%%%%%%%%%%%%%%%%%%%%%%%%%%%%%%%%%%%%%%%%%%%%%%%%%%%%%%%%%%%%%%%%%%%%%%%%%%%%%
%%%%%%%%%%%%%%%%%%%%%%%%%%%%%%%%%%%%%%%%%%%%%%%%%%%%%%%%%%%%%%%%%%%%%%%%%%%%%%%

\end{document}